\let\footnote=\endnote
\newcommand{\R}{\mathcal{R}}
\newcommand{\ie}{\textit{i.e.}}
\newcommand{\eg}{\textit{e.g.}}
\newcommand{\Ex}{\mathbf{E}}
\renewcommand{\Re}{\mathbb{R}}
\newcommand{\swofu}{\texttt{SW-UCB}\text{ algorithm}}
\newcommand{\bob}{\texttt{BOB}\text{ algorithm}}
\newcommand{\blue}{\color{black}}
\definecolor{cadmiumgreen}{rgb}{0.0, 0.52, 0.24}
\begin{document}


\RUNAUTHOR{Cheung, Simchi-Levi, and Zhu}

\RUNTITLE{Hedging the Drift: Learning to Optimize under Non-Stationarity}

\TITLE{Hedging the Drift: Learning to Optimize under Non-Stationarity}

\ARTICLEAUTHORS{
\AUTHOR{Wang Chi Cheung}
\AFF{Department of Industrial Systems Engineering and Management, National University of Singapore \EMAIL{isecwc@nus.edu.sg}}
\AUTHOR{David Simchi-Levi}
\AFF{Institute for Data, Systems, and Society, Massachusetts Institute of Technology, Cambridge, MA 02139, \EMAIL{dslevi@mit.edu}}
\AUTHOR{Ruihao Zhu}
\AFF{Institute for Data, Systems, and Society, Massachusetts Institute of Technology, Cambridge, MA 02139, \EMAIL{rzhu@mit.edu}} 
} 

\ABSTRACT{%

We introduce data-driven decision-making algorithms that achieve state-of-the-art \emph{dynamic regret} bounds for a collection of non-stationary stochastic bandit settings. These settings capture applications such as advertisement allocation, dynamic pricing, and traffic network routing in changing environments. We show how the difficulty posed by the (unknown \emph{a priori} and possibly adversarial) non-stationarity can be overcome by an unconventional marriage between stochastic and adversarial bandit learning algorithms. Beginning with the linear bandit setting, we design and analyze a sliding window-upper confidence bound algorithm that achieves the optimal dynamic regret bound when the underlying \emph{variation budget} is known. This budget quantifies the total amount of temporal variation of the latent environments. Boosted by the novel Bandit-over-Bandit framework that adapts to the latent changes, our algorithm can further enjoy nearly optimal dynamic regret bounds in a (surprisingly) parameter-free manner. We extend our results to other related bandit problems, namely the multi-armed bandit, generalized linear bandit, and combinatorial semi-bandit settings, which model a variety of operations research applications. In addition to the classical exploration-exploitation trade-off, our algorithms leverage the power of the ``forgetting principle" in the learning processes, which is vital in changing environments. Extensive numerical experiments with synthetic datasets and a dataset of an online auto-loan company demonstrate that our proposed algorithms achieve superior performance compared to existing algorithms.
}%


\KEYWORDS{data-driven decision-making, non-stationary bandit optimization, parameter-free algorithm} 

\maketitle

%

\section{Introduction}
\label{sec:intro}
Consider the following general decision-making framework: a decision-maker (DM) interacts with a \emph{multi-armed bandit} (MAB) system by picking actions one at a time sequentially. Upon selecting an action, she instantly receives a reward drawn randomly from a probability distribution tied to this action. The goal of the DM is to maximize her cumulative rewards. However, she faces the following challenges:
\begin{itemize}
	\item \emph{Uncertainty:} the reward distribution of each action is initially not known to the DM. She has to estimate the underlying reward distributions via interacting with the environment.
	\item \emph{Non-Stationarity:} the reward distributions can evolve over time. 
	\item \emph{Partial/Bandit Feedback:} the DM can only observe the random reward of the selected action each time, while the rewards of the unchosen actions are not observed.
\end{itemize}
Many applications naturally fall into this non-stationary MAB framework. For instance, with a linear reward model, which will also be the main focus of this paper, we can cast the problems of dynamic pricing \citep{KZ14,KZ16}, advertisement allocation \citep{LCLS10,CLRS11} in dynamic and evolving environments into the above decision-making framework. This framework also finds applications in traffic network routing \citep{GKJ12,KWAS15}.
\begin{example}[\textbf{Dynamic Pricing}] In the classical setup of dynamic pricing \citep{KZ14,KZ16}, a seller decides dynamically the prices of a product for a sequence of incoming customers with the hope to maximize the cumulative revenue. Beginning with an unknown demand function that represents the customers' sensitivity towards price changes, the DM only observes the purchase decision (\eg, buy/not buy or purchase quantities) of each customer under the corresponding posted price. Moreover, the demand function can evolve over time due to unexpected events. For example, after the announcement of the COVID-19 pandemic on 11 March 2020 \citep{WHO20}, the demand for daily essentials and shelf-stable foods increased suddenly \citep{BecdachBHHM20}.
\end{example}
\begin{example}[\textbf{Advertisement Allocation}] An online platform allocates advertisements (ads) to a sequence of users. For each arriving user, the platform has to deliver an ad to her, and only observes the response to her displayed ad. The platform has full access to the features of the ads and the users. Following \citep{LCLS10,CLRS11}, we could assume that a user's click behavior towards an ad, or simply the click through rate (CTR) of this ad by a particular user, follows a probability distribution governed by a common, but initially unknown, response function of the features. The platform's goal is to maximize the total number of clicks. However, the unknown response function can change over time. For instance, if it is around the time when Apple releases a new iPhone model, one can expect that the popularity of an Apple's ad grows.
\end{example}
\begin{example}[\textbf{Traffic Network Routing}]A navigation service provider has to iteratively offer route planning services to drivers from an origin to a destination through a traffic network with initially unknown random delay on each road. For each driver, the provider could only see the delays of the roads traversed by this driver, but not the other roads'. Moreover, the delay distributions could change over time as the roads are also shared by other traffics (\ie, those not using this navigation service). The provider wants to minimize the cumulative delays throughout the course of vehicle routing.
\end{example}

Evidently, the DM faces a trilemma among exploration, exploitation as well as adaptation to changes. On one hand, the DM wishes to exploit, and to select the action with the best historical performances to earn as much reward as possible. On the other hand, she wants to explore other actions to get a more accurate estimation of the reward distributions. The changing environment makes the exploration-exploitation trade-off even more delicate. Indeed, past observations could become obsolete due to the changes in the environment, and the DM needs to explore for changes and refrain from exploiting possibly outdated observations.

We focus on resolving this trilemma in various MAB problems. Traditionally, most MAB problems are studied in the stochastic \citep{ABF02} and adversarial \citep{ABFS02} environments. In the former, the uncertain model is static, and each feedback is corrupted by a mean zero random noise. The DM aims at estimating the latent static environment using historical data and converging to the optimum, which is achieved by a static strategy that selects a single action throughout. In the latter, the model is not only uncertain, but also dynamically changed by an adversary. While the DM strives to hedge against the changes, it is generally impossible to achieve the optimum. Hence, existing research also focuses on competing favorably in comparison to a static strategy.

Unfortunately, strategies for the stochastic environments can quickly deteriorate under non-stationarity as historical data might ``expire", while the permission of a confronting adversary in the adversarial settings could be too pessimistic. Starting from \citep{BGZ14,BGZ15}, a stream of research works (see Section \ref{sec:related_works}) focuses on MAB problems in a \emph{drifting} environment, which is a hybrid of a stochastic and an adversarial environment. Although the environment can be dynamically and adversarially changed, the total changes (quantified by a suitable metric) in a $T$-round problem is upper bounded by $B_T~(= \Theta(T^{\rho})\text{ for some }\rho\in (0, 1))$, the \emph{variation budget} \citep{BGZ14,BGZ15}, and the feedback is corrupted by an additive mean zero random noise. The aim is to minimize the \emph{dynamic regret} \citep{BGZ14}, which is the optimality gap compared to the sequence of (possibly dynamically changing) optimal decisions, by simultaneously estimating the current environment and hedging against future changes every round. The framework of \citep{BGZ14,BGZ15} enable us to compete against the so-called \emph{dynamic comparator}. Most of the existing works for non-stationary bandits have focused on the the relatively ideal case in which $B_T$ is known. In practice, however, $B_T$ is often not available ahead as it is a quantity that requires knowledge of future information. Though some efforts have been made towards this direction \citep{KA16,LWAL18}, the design of algorithms with low dynamic regret when $B_T$ is unknown remains largely a challenging problem.

In this paper, we design and analyze a novel algorithmic framework for bandit problems in drifting environments. We begin by demonstrating our results via the lens of the linear bandit model, and then we demonstrate the generality of our framework on related MAB models. Our main contributions can be summarized as follows.
\begin{itemize}
	\item When the variation budget $B_T$ is known, we provide a lower bound on the dynamic regret incurred by any non-anticipatory policy. In complement, we develop a tuned Sliding Window Upper-Confidence-Bound (\texttt{SW-UCB}) algorithm with a matching dynamic regret upper bound, up to multiplicative logarithmic factors.
	\item When $B_T$ is unknown, we propose a novel Bandit-over-Bandit (\texttt{BOB}) framework that tunes the window size of the \swofu~adaptively. When the amount of non-stationarity is above a certain threshold (that depends on $B_T, T$),  the \bob~achieves the optimal dynamic regret bound. Otherwise, it still obtains a dynamic regret bound sublinear in $T$. While the optimal dynamic regret bound is not achieved in the latter case, the resulting dynamic regret bound is better than the state-of-the-art in prior literature.
	\item Our algorithm design and analysis shed light on the fine balance among exploration, exploitation and adaptation to changes in dynamic learning environments. We rigorously incorporate the ``forgetting principle'' \citep{GM11} into the Optimism-in-Face-of-Uncertainty principle \citep{ABF02,AYPS11}, by demonstrating that the DM can enjoy an optimal dynamic regret bound if she keeps disposing of sufficiently old observations. We also provide a rate of disposal that leads to the optimality.
	\item Finally, we point out that a preliminary version of this paper appears in the $22^{\text{nd}}$ International Conference on Artificial Intelligence and Statistics (AISTATS 2019) \citep{CSLZ19}, and the current paper provides significant additional contributions in three directions. 
	First, when $B_T$ is unknown, the current version provides a substantially refined design and analysis of the \bob~for the linear bandit model, resulting in an improved dynamic regret bound (\ie, Theorem \ref{theorem:bob} of Section \ref{sec:bob}) 
	 compared to Theorem 4 of \citep{CSLZ19}. Second, unlike \citep{CSLZ19}, which only focuses on the linear bandit model, in the current paper we extend our approach, in Section \ref{sec:applications}, to several related bandit settings, including multi-armed bandits, generalized linear bandits, and combinatorial semi-bandits. These extensions capture many important operations research applications, such as the three examples highlighted in the introduction. Third, we conduct numerical experiments using a new synthetic dataset to evaluate our algorithms in piecewise-linear environments for both 2-armed bandit and linear bandit settings. We also study the performances of our algorithms in a case of dynamic pricing under the SARS epidemic with a real world auto-loan dataset.	Both of these experiments extend significantly beyond the simple drifting 2-armed bandit experiments in the AISTATS version.
\end{itemize} 

The rest of the paper is organized as follows. In Section \ref{sec:related_works}, we review existing MAB works in stationary and non-stationary environments. In Section \ref{sec:formulation}, we formulate the non-stationary linear bandit model. In Section \ref{sec:lower_bound}, we establish a minimax lower bound on the dynamic regret. In Section \ref{sec:swlse}, we describe the sliding window estimator for parameter estimation under non-stationarity. In Section \ref{sec:swofu}, we develop the sliding window-upper confidence bound algorithm with optimal dynamic regret (when the amount of non-stationarity is known ahead). In Section \ref{sec:bob}, we introduce the novel Bandit-over-Bandit framework with nearly optimal dynamic regret. In Section \ref{sec:applications}, we demonstrate the generality of the established results by applying them to related bandit settings, namely the multi-armed bandit, generalized linear bandit, and combinatorial semi-bandit settings. In Section \ref{sec:numerical}, we conduct extensive numerical experiments with both synthetic and CPRM-12-001: on-line auto lending datasets to show the superior empirical performances of our algorithms. In Section \ref{sec:conclusion}, we conclude our paper.
\section{Related Works} \label{sec:related_works}
\subsection{Stationary and Adversarial Bandits}
MAB problems with stochastic and adversarial environments are extensively studied, as surveyed in \citep{BC12, LS18}. 
To model inter-dependence 
among different arms, models for linear bandits in stochastic environments have been studied. In \citep{A02,DHK08,RT10,CLRS11,AYPS11}, UCB type algorithms for stochastic linear bandits were studied, and the authors of \citep{AYPS11} provided the tightest regret analysis for algorithms of this kind. The authors of \citep{RVR14,AG13,AL17} proposed Thompson sampling algorithms for this setting to bypass the high computational complexity of the UCB type algorithms.
\subsection{Bandits in Drifting Environments}
Departing from purely stochastic or adversarial settings, Besbes et al. \citep{BGZ14,BGZ15} laid down the foundation of bandit in drifting environments, and considered the $K$-armed bandit setting. They achieved the tight dynamic regret bound $\tilde{O}((K B_T)^{1/3} T^{2/3})$ by restarting the EXP3 algorithm \citep{ABFS02} periodically when $B_T$ is known. \cite{WHL16} provided refined regret bounds based on empirical variance estimation, assuming the knowledge of $B_T$. \cite{WeiS18} analyzed the sliding window upper confidence bound algorithm for the $K$-armed MAB with known $B_T$ setting. Subsequently, \cite{KA16} considered the setting without knowing $B_T$ and $K=2$, and achieved a dynamic regret bound of $\widetilde{O}(B_T^{9/50} T^{41/50} + T^{77/100})$ with a change point detection type technique. In a recent work, \cite{LWAL18} generalized this change point detection type technique to the $K$-armed contextual bandits in drifting environments, and in particular demonstrated an improved bound $\widetilde{O}(KB_T^{1/5}T^{4/5})$ for the $K$-armed bandit problem in drifting environments when $B_T$ is not known. \cite{KZ16} considered a dynamic pricing problem in a drifting environment with 2-dimensional linear demands. Assuming a known variation budget $B_T,$ they proved an $\Omega (B_T^{1/3}T^{2/3} )$ dynamic regret lower bound and proposed a matching algorithm by properly discounting historical observations (this includes sliding-window estimation as a special case). When $B_T$ is not known, their algorithm achieves $\tilde{O}(B_T T^{2/3})$ dynamic regret bound. Finally, various online problems with full feedback in drifting environments were studied in \citep{CYLMLJZ12,BGZ15,JRSS15}.
\begin{table}[!ht]
	\begin{center}
	\begin{tabular}{|c|c|c|} 
		\hline
		&Known $B_T$&Unknown $B_T$\\
		\hline
		\citep{BGZ15}&$\widetilde{O}\left(B_T^{{1}/{3}}T^{{2}/{3}}\right)$&$\widetilde{O}\left(B_TT^{{2}/{3}}\right)$\\
		\hline
		\citep{KA16}&$\widetilde{O}\left(B_T^{9/50} T^{41/50} + T^{77/100}\right)$&$\widetilde{O}\left(B_T^{9/50} T^{41/50} + T^{77/100}\right)$\\
		\hline
	 	\citep{LWAL18}&$\widetilde{O}\left(B_T^{{1}/{3}}T^{{2}/{3}}\right)$&$\widetilde{O}\left(B_T^{1/5}T^{4/5}\right)$\\
		\hline 
		The current work&$\widetilde{O}\left(B_T^{{1}/{3}}T^{{2}/{3}}\right)$&$\widetilde{O}\left(B_T^{{1}/{3}}T^{{2}/{3}}+T^{{3}/{4}}\right)$\\
		\hline
	\end{tabular}
\caption{Comparisons between our results and prior works. Here, the dynamic regret bounds only show dependence on $B_T$ and $T.$ $\widetilde{O}(\cdot)$ denotes the function growth, and omits the logarithmic factors.}
\end{center}
\end{table}
\subsection{Bandits in Piecewise Stationary/Switching Environments}
Apart from drifting environments, numerous research works consider the \emph{piecewise stationary/switching environment}, where the time horizon is partitioned into at most $S$ intervals. The expected reward for each arm remains constant in each interval, but it can vary across different intervals. The partition is not known to the DM. Algorithms were designed for various bandit settings, with knowledge of $S$ \citep{ABFS02, GM11,LiuLS18,LWAL18,CaoWKX19}, or without knowing $S$ \citep{KA16, LWAL18}. Notably, the Sliding Window-UCB and the ``forgetting principle" was first proposed by Garivier and Moulines \citep{GM11}. The algorithm was only analyzed under $K$-armed switching environments. But we also have to emphasize that the $S$ is a looser measure of non-stationarity in the sense that every tiny change in the environment could be counted towards the total number of switches. In other words, even if there are a total of $T$ switches, the total variation budget $B_T$ could still be far less than $T.$ Hence, the drifting environment serves as a better proxy for non-stationarity.
\subsection{Further Contrasts to Existing Works}
The main idea underpinning our Bandit-over-Bandit framework is to use a learning algorithm to tune the underlying base learning algorithm's parameters. While this shares similar spirit to several existing works, such as the heuristic envelop policy \citep{BGZ18} and algorithms for bandit corralling (see \cite{ALNS17,LWAL18} and references therein), our design is different in the sense that rather than simultaneously maintaining multiple copies of the base learning algorithm~(as in \cite{ALNS17,LWAL18,BGZ18}), we treat the problem of selecting window size for the \swofu~as another independent adversarial bandit learning instance. To achieve this, we divide the time horizon into epochs, and force the \swofu~to restart at the beginning of each epoch. This critical difference allows us to establish an improved and nearly optimal parameter-free dynamic regret bound of the \bob~when compared to prior research.
\subsection{Follow-Up Works and Other Related Works}
The results presented in \cite{LWAL18} were further improved to the optimal $\widetilde{O}(K^{1/3}B_T^{1/3}T^{2/3})$ dynamic regret bound in \cite{CLLW19}, but it is unclear how to generalize the techniques in \cite{CLLW19} beyond the $K$-armed bandit setting. In \cite{BessonK19,AuerGO19}, the authors presented optimal learning algorithms for the switching setting without knowing the number of switches. In \cite{ZhouCGX20}, the authors considered an environment where the non-stationarity is governed by a finite-state Markov chain. In \cite{ChenWW20}, a periodically changing environment was also studied. The design of parameter-free online learning algorithms were also considered in other online learning settings, such as bandit convex optimization \citep{ZhaoWZZ19} and reinforcement learning \citep{CheungSLZ19,CheungSLZ20ICML}. Another related but different line of research is bandit learning with corrupted data, interested readers can refer to \cite{LykourisML18,GolrezaeiMSS20} for more details.

\section{Problem Formulation for Drifting Linear Bandits}
\label{sec:formulation}
We start by introducing the notations to be used and the model formulation. From the current section to the end of Section \ref{sec:bob}, we focus on the drifting linear bandit problem, which serves to illustrate our algorithmic framework. After that, we provide generalizations to other bandit problems in drifting environments in Section \ref{sec:applications}.
\subsection{Notation}
Throughout the paper, all vectors are column vectors, unless specified otherwise. We define $[n]$ to be the set $\{1,2,\ldots,n\}$ for any positive integer $n.$ 
We denote $\langle x, y\rangle = x^\top y$ as the inner product between $x, y\in \mathbb{R}^d$. For $p\in [1, \infty]$, we use $\|\bm x\|_p$ to denote the $p$-norm of a vector $\bm x\in\Re^d.$ For a positive definite matrix $A\in \Re^{d\times d}$, we use $\|\bm x\|_A$ to denote $\sqrt{\bm{x}^{\top}A\bm x}$ of a vector $\bm x\in\Re^d.$ We denote $x\vee y$ and $x\wedge y$ as the maximum and minimum between $x,y\in\Re,$ respectively. We adopt the asymptotic notations $O(\cdot),\Omega(\cdot),$ and $\Theta(\cdot)$ \citep{CLRS09}. When logarithmic factors are omitted, we use $\widetilde{O}(\cdot),\widetilde{\Omega}(\cdot),$ $\widetilde{\Theta}(\cdot),$ respectively. With some abuse, these notations are used when we try to avoid the clutter of writing out constants explicitly.
\subsection{Learning Protocol}
In each round $t\in[T]$, a decision set $D_t\subseteq\Re^d$ is presented to the DM. Then, the DM chooses an action $X_t\in D_t.$ Afterwards, the reward $Y_t=\langle X_t,\theta_t\rangle+\eta_t$
is revealed to the DM as a whole. We allow $D_t$ to be chosen by an \emph{oblivious adversary}, who chooses the decision sets $\{D_t\}^T_{t=1}$
before the protocol starts \citep{CBL06}. The parameter vector $\theta_t\in\Re^d$ is an unknown $d$-dimensional vector, and $\eta_t$ is a random noise drawn i.i.d. from an unknown sub-Gaussian distribution \citep{RH18} with variance proxy $R$. By definition, this means $\Ex\left[\eta_t\right]=0$, and $\forall\lambda\in\Re$ we have
$\Ex\left[\exp\left(\lambda\eta_t\right)\right]\leq\exp (\lambda^2R^2 / 2 ).$ Following the convention of the existing linear bandit literature \citep{AYPS11,AG13}, we assume there are positive constants $L$ and $S,$ such that $\|X\|_2\leq L$ for all $X\in D_t$ and all $t\in[T]$, and $\|\theta_t\|_2 \leq S$ holds for all $t\in[T]$. In addition, the instance is normalized so that $| \langle X,\theta_t\rangle | \leq 1$ for all $X\in D_t$ and $t\in[T].$ The constants $L, S$ are known to the DM.

We consider the \emph{drifting environment} \citep{BGZ14}, where $\theta_t$ can change over different $t,$ with the constraint that the sum of the Euclidean distances between consecutive $\theta_t$'s is bounded from above by the variation budget $B_T= \Theta(T^{\rho})\text{ for some }\rho\in (0, 1)$, \ie, \begin{align}\label{eq:variation_budget}\sum_{t=1}^{T-1}\left\|\theta_{t+1}-\theta_t\right\|_2 \leq B_T.\end{align}
We allow $\theta_t$'s to be chosen by an oblivious adversary. It is worth pointing out that the concepts of a drift environment and variation budget were originally introduced in \citep{BGZ15} and \citep{BGZ14,BGZ18} for the full information setting and the partial/bandit feedback setting, respectively.

We define $\mathcal{H}_t=\{D_s,X_s,Y_s\}_{s=1}^{t-1}\cup\{D_t\}$ as the available history information at round $t\in[T]$. The DM's goal is to design a non-anticipatory policy $\pi,$ which only uses the information $\mathcal{H}_t$ in each round $t,$ to maximize the cumulative reward. Equivalently, the goal is to minimize the \emph{dynamic regret}, which is the worst case cumulative regret against the optimal policy $\pi^*$, that has full knowledge of $\theta_t$'s. Denoting $x_t^*=\argmax_{x\in D_t}\langle x,\theta_t\rangle,$ the dynamic regret of a non-anticipatory policy $\pi$ is mathematically expressed as
$\R_T(\pi)=\Ex\left[\text{Regret}_T(\pi)\right]=\Ex\left[\sum_{t=1}^T\langle x_t^*-X_t,\theta_t\rangle\right],$
where the expectation is taken with respect to the randomness of $X_t$ and $\mathcal{H}_t$ as well as the (possible) randomness of the policy.
\begin{remark}[\textbf{Comparison to Piecewise Stationary Environment}]
	A related non-stationary environment is the piecewise stationary environment \citep{GM11}, which allows $\theta_t$'s to change at most $S$ times throughout the time horizon. However, as discussed in Section \ref{sec:related_works}, this can be a looser measure of non-stationarity as a very tiny change in the environment is still counted towards the total number of switches. That is to say, even if there are a total of $T$ switches, the total variation could grow in a sublinear rate in $T.$
\end{remark}
\section{Lower Bound}
\label{sec:lower_bound}
We first provide a lower bound on the the dynamic regret for the linear model.
\begin{theorem}
	\label{theorem:lower_bound}
	In the drifting linear bandit setting, for any $T\geq d$ and $B_T\in[dT^{-1/2},8d^{-2}T],$ there exists decision sets $\{D_t\}_{t=1}^T$ and reward vectors $\{\theta_t\}_{t=1}^T,$ such that for all $t\in[T]$ and all $x\in D_t,$ we have $\|x\|\leq1$, $\|\theta_t\|\leq 1,$ and $\|\langle x,\theta_t\rangle\|\leq1,$ and the dynamic regret for any non-anticipatory policy $\pi$ satisfies $\R_T(\pi)=\Omega\left(d^{2/3}B_T^{1/3}T^{2/3}\right).$
\end{theorem}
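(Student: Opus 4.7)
The plan is to mimic the ``batching'' approach that yields the $\Omega((KB_T)^{1/3}T^{2/3})$ lower bound for the $K$-armed setting, but to instantiate each batch with a \emph{stationary linear bandit} hard instance rather than a multi-armed one. Partition $[T]$ into $\lceil T/H\rceil$ consecutive batches of length $H$ (with $H$ to be optimized), and force $\theta_t$ to be constant on each batch, equal to some value $\theta^{(j)}\in\Re^d$. This way the variation budget is consumed only at the $T/H-1$ batch boundaries, so the constraint \eqref{eq:variation_budget} becomes $\sum_{j}\|\theta^{(j+1)}-\theta^{(j)}\|_2\leq B_T$, which is easy to track.

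Next, I would build the per-batch hard instance as in Dani--Hayes--Kakade. Fix the decision set $D_t=\{\pm 1/\sqrt{d}\}^d$ for every $t$, so $\|x\|_2=1$, and choose $\theta^{(j)}\in\{\pm\epsilon/\sqrt{d}\}^d$ with $\epsilon=cd/\sqrt{H}$ for a small absolute constant $c$. Then $\|\theta^{(j)}\|_2=\epsilon\leq 1$ and $|\langle x,\theta^{(j)}\rangle|\leq\epsilon\leq 1$, so the normalizations in the statement are satisfied. For the adversarial choice of signs, apply Assouad's lemma coordinatewise (or equivalently, a union of $d$ coupled Le Cam two-point tests, as in the standard stochastic linear bandit lower bound): since learning the sign of coordinate $i$ requires $\Omega(d^2/\epsilon^2)$ pulls and each wrong sign incurs per-round regret $\Omega(\epsilon/d)$, the cumulative per-batch regret of any non-anticipatory policy is at least
\begin{equation*}
\Omega\!\left(d\sqrt{H}\right),
\end{equation*}
for an appropriate $c$. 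This is applied \emph{within} each batch; the within-batch regret bound transfers to the dynamic regret because restricting to a single batch only removes nonnegative contributions from other batches.

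Finally, aggregate and optimize. Because each $\theta^{(j)}$ has norm $\epsilon$, consecutive parameters differ by at most $2\epsilon$, so the total variation is at most $(T/H)\cdot 2\epsilon=2cdT/H^{3/2}$. Setting this to saturate $B_T$ gives $H=\Theta\bigl((dT/B_T)^{2/3}\bigr)$, and summing the per-batch lower bound yields
\begin{equation*}
\R_T(\pi)\;\geq\;\frac{T}{H}\cdot\Omega\!\left(d\sqrt{H}\right)\;=\;\Omega\!\left(\frac{dT}{\sqrt{H}}\right)\;=\;\Omega\!\left(d^{2/3}B_T^{1/3}T^{2/3}\right).
\end{equation*}
The two hypotheses on $B_T$ are exactly what is needed for this to be feasible: $B_T\geq dT^{-1/2}$ guarantees $H\leq T$ (so the partition has at least one batch), and $B_T\leq 8d^{-2}T$ guarantees $H\geq d^2$, i.e.\ $\epsilon\leq 1$, which is necessary both for the norm constraints and for the stationary linear bandit lower bound to kick in. The technically delicate step is the second paragraph: setting up the coupling and KL computations carefully so that the $d\sqrt{H}$ bound holds against \emph{any} adaptive policy and uniformly over the $2^d$ sign patterns used by the adversary across batches; everything else is bookkeeping.
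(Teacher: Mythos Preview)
Your proposal is correct and follows essentially the same route as the paper: partition $[T]$ into blocks of length $H=\Theta\bigl((dT/B_T)^{2/3}\bigr)$, hold $\theta_t$ constant on each block at a point of the hypercube $\{\pm\Theta(\sqrt{d/H})\}^d$, invoke the stationary linear-bandit lower bound $\Omega(d\sqrt{H})$ per block, and check that the block-boundary jumps respect the budget $B_T$ while the range of $B_T$ in the hypothesis is exactly what makes $1\leq H\leq T$ and $\|\theta_t\|\leq 1$. The only cosmetic difference is that the paper imports the per-block $\Omega(d\sqrt{H})$ bound as a black-box lemma from \citep{LS18} with action set the unit ball, whereas you instantiate the hypercube action set $\{\pm 1/\sqrt{d}\}^d$ and sketch the Assouad/Le~Cam argument explicitly; these are the same construction, and your identification of the cross-block decoupling (fresh independent sign patterns per block so that past observations carry no information forward) as the one point requiring care matches the paper's ``blocks are completely decoupled'' remark.
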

\begin{proof}{Poof Sketch.}
	The complete proof is presented in Section \ref{sec:theorem:lower_bound} of the appendix. The construction of the lower bound instance is similar to the approach by \citep{BGZ14}. The nature divides the whole time horizon into $\lceil T/H\rceil$ blocks of equal length $H=\lceil{(dT)^{{2}/{3}}B_T^{-{2}/{3}}}\rceil~(\leq T)$ rounds, and the last block can possibly have less than $H$ rounds. In each block, the nature initiates a new stationary linear bandit instance with parameter vectors from the set $\{\pm\sqrt{d/4H}\}^d$. We set up the instance so that the parameter vector of a block cannot be learned using the observations from the previous blocks. Consequently, every online policy must incur a regret of $\Omega(d\sqrt{H})$ in each block, by applying the regret lower bound for stationary linear bandits (for example, see \cite{LS18}) on each block. Since there are at least $\lfloor T/H\rfloor$ blocks, the total dynamic regret is $\Omega(dT/\sqrt{H})=\Omega(d^{2/3}B_T^{1/3}T^{2/3}).\Halmos$ 
\end{proof}
\section{Sliding Window Regularized Least Squares Estimator}
\label{sec:swlse}
As a preliminary, we introduce the sliding window regularized least squares estimator (SW-RLSE), which is the key tool in estimating the unknown parameters $\{\theta_t\}^T_{t=1}$ online. The SW-RLSE generalizes the sliding window sample estimator proposed by \citep{GM11} for the $K$-armed bandits in piecewise stationary environments. In addition, our SW-RLSE can be constructed for any sequence of arm pulls, which is different from \citep{KZ16}, who require each arm (in their setting a posted price) to be pulled equally often. Despite the underlying non-stationarity in our model, we show that the estimation error of our SW-RLSE scales gracefully with the variation of $\theta_t$'s across time.

To motivate SW-RLSE, consider a round $t$, where the DM aims to estimate $\theta_t$ based on the historical observations $\{(X_s, Y_s)\}^{t-1}_{s=1}$. The design of SW-RLSE is based on the \emph{forgetting principle} \citep{GM11}, which argues the following: the DM could estimate $\theta_t$ using only $\{(X_s, Y_s)\}^{t-1}_{s = 1\vee (t-w)}$, the observation history during the time window $(1\vee (t-w))$ to $(t-1)$, instead of all prior observations. Here, $w$ is the window size. The rationale is that, under non-stationarity, the observations far in the past are obsolete, and they are not as informative for regressing $\theta_t$.  The principle crucially hinges on $w$, which is a positive integer called the window size. Intuitively, when the variation across $\theta_1, \ldots, \theta_T$ increases, the window size $w$ should be smaller, since the past observations become obsolete at a faster rate. We treat $w$ as a fixed parameter in this section, and then shine lights on choosing $w$ in subsequent sections.

The SW-RLSE $\hat{\theta}_t$ is the optimal solution to the following ridge regression problem with regularization parameter $\lambda >0$: 
$$\min_{\theta:\theta\in\Re^d} \lambda \left\|\theta\right\|^2_2 + \sum^{t-1}_{s = 1\vee (t-w)} (X_s^\top \theta - Y_s)^2.$$ Define matrix $V_{t-1} := \lambda I + \sum^{t-1}_{s = 1\vee (t-w)} X_s X_s^\top$. The SW-RLSE $\hat{\theta}_t$ can be explicitly expressed as
\begin{align}
	\label{eq:sw20}
	&\hat{\theta}_t = V_{t-1}^{-1}\left( \sum^{t-1}_{s = 1\vee(t-w)}X_s Y_s \right) =V_{t-1}^{-1}\sum_{s=1\vee(t-w)}^{t-1}X_sX_s^{\top}\theta_s+V_{t-1}^{-1}\sum_{s=1\vee (t-w)}^{t-1}\eta_sX_s.
\end{align}
Next, we demonstrate the accuracy of the SW-RLSE. Denoting
\begin{equation}\label{eq:sw_beta}
\beta := R\sqrt{d\ln\left(\frac{1+wL^2/\lambda}{\delta}\right)}+\sqrt{\lambda}S,
\end{equation} 
we provide an error bound on estimating the latent reward, \ie, the confidence radius, of any action $x\in D_t$ in a round $t$, {\blue under the following regularity assumption made in \cite{FauryRAC21} over the decision sets $D_t$'s.}
\begin{assumption}\label{ass:reg}
There exists an orthonormal basis $\Psi=(\psi_1,\ldots,\psi_d)$ such that for any $t\in[T]$ and any $X\in D_t,$ there exists a number $z\in\mathbb{R}$ and an $i\in[d]$ such that $X=z\cdot \psi_i.$ 
\end{assumption}
\begin{remark}
One can easily verify that this assumption holds in the multi-armed bandits case. Of course, this assumption allows for more general models than the multi-armed bandits setting as it still allows each of the time-varying $D_t$'s to have arbitrarily large number of actions. 
\end{remark}

In what follows, we analyze the linear bandit setting under Assumption \ref{ass:reg}. We also discuss how to remove this assumption in Remark \ref{remark:ass} of the forthcoming Section \ref{sec:bob}.
\begin{theorem}
	\label{theorem:sw_deviation}
 For any $t\in[T]$ and any $\delta\in[0,1]$, we have with probability at least $1-\delta,$ 
	$\left|x^\top ( \hat{\theta}_t - \theta_t)\right|\leq L \sum^{t-1}_{s = 1\vee (t-w)}\left\|\theta_s-\theta_{s+1}\right\|_2+\beta\left\|x\right\|_{V^{-1}_{t-1}}$
	holds for all $x\in D_t.$
\end{theorem}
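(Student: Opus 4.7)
The plan is to decompose $\hat\theta_t - \theta_t$ into three pieces and bound them separately. Starting from Equation~(\ref{eq:sw20}) and using the identity $V_{t-1}\theta_t = \lambda\theta_t + \sum_{s=1\vee(t-w)}^{t-1}X_s X_s^\top\theta_t$, I would rewrite
\begin{equation*}
\hat\theta_t - \theta_t \;=\; V_{t-1}^{-1}\sum_{s=1\vee(t-w)}^{t-1}X_s X_s^\top(\theta_s - \theta_t) \;-\; \lambda V_{t-1}^{-1}\theta_t \;+\; V_{t-1}^{-1}\sum_{s=1\vee(t-w)}^{t-1}\eta_s X_s.
\end{equation*}
Taking inner product with an arbitrary $x\in D_t$ and using the triangle inequality splits the problem into a non-stationarity bias, a regularization bias, and a noise term.

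For the noise term, the plan is to invoke the self-normalized concentration inequality of \citet{AYPS11} (their Theorem~1) applied to the vector martingale $S_t := \sum_{s=1\vee(t-w)}^{t-1}\eta_s X_s$ with respect to the filtration generated by $\{(X_s,\eta_s)\}_{s=1\vee(t-w)}^{t-1}$. Combined with Cauchy-Schwarz in the $V_{t-1}^{-1}$ norm, this gives $|x^\top V_{t-1}^{-1} S_t| \le \|x\|_{V_{t-1}^{-1}}\|S_t\|_{V_{t-1}^{-1}}$, and the log-determinant factor in the bound is controlled by $\det(V_{t-1})\le(\lambda + wL^2/d)^d$ (AM-GM on eigenvalues, using $\operatorname{trace}(V_{t-1})\le d\lambda + wL^2$). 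This produces the $R\sqrt{d\ln((1+wL^2/\lambda)/\delta)}\,\|x\|_{V_{t-1}^{-1}}$ contribution with probability at least $1-\delta$.

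For the regularization bias, a straightforward Cauchy-Schwarz yields $\lambda|x^\top V_{t-1}^{-1}\theta_t| \le \lambda\|x\|_{V_{t-1}^{-1}}\|\theta_t\|_{V_{t-1}^{-1}} \le \sqrt{\lambda}\,S\,\|x\|_{V_{t-1}^{-1}}$, since $\|\theta_t\|_{V_{t-1}^{-1}}^2 \le \|\theta_t\|_2^2/\lambda \le S^2/\lambda$. Summing the noise and regularization contributions gives exactly $\beta\|x\|_{V_{t-1}^{-1}}$ with $\beta$ as in Equation~(\ref{eq:sw_beta}).

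The main obstacle is the non-stationarity bias $|x^\top V_{t-1}^{-1}\sum_{s}X_s X_s^\top(\theta_s-\theta_t)|$, which must be bounded by $L\sum_{s}\|\theta_s-\theta_{s+1}\|_2$ without introducing an extra $\|x\|_{V_{t-1}^{-1}}$ factor or window-size multiplier. The plan here is to telescope $\theta_s-\theta_t = -\sum_{r=s}^{t-1}(\theta_{r+1}-\theta_r)$, which gives the uniform per-index bound $|X_s^\top(\theta_s - \theta_t)| \le L\sum_{r=1\vee(t-w)}^{t-1}\|\theta_{r+1}-\theta_r\|_2$ via $\|X_s\|_2\le L$. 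The remaining step is to control the action of the operator $V_{t-1}^{-1}\sum_s X_s X_s^\top$ on $x$ using the key fact $\sum_{s}X_s X_s^\top = V_{t-1}-\lambda I \preceq V_{t-1}$, so that after factoring out the uniform bound on $|X_s^\top(\theta_s-\theta_t)|$, the residual contraction $\|V_{t-1}^{-1}\sum_{s}X_s X_s^\top\|$ is controlled. Assembling the three pieces via the triangle inequality then yields the claim. The noise and regularization steps are standard; the bias step requires the most careful bookkeeping and is the main technical hurdle.
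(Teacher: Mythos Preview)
Your treatment of the noise and regularization pieces is fine and matches the paper. The gap is in the non-stationarity bias.

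Your plan is to bound $|X_s^\top(\theta_s-\theta_t)|$ uniformly by $C:=L\sum_{r=1\vee(t-w)}^{t-1}\|\theta_{r+1}-\theta_r\|_2$ and then ``factor it out,'' leaving the operator norm of $V_{t-1}^{-1}\sum_s X_sX_s^\top$ to control. But once you take absolute values term by term, what remains is
\[
\Bigl|x^\top V_{t-1}^{-1}\sum_s X_s\bigl(X_s^\top(\theta_s-\theta_t)\bigr)\Bigr|
\;\le\; C\sum_s \bigl|x^\top V_{t-1}^{-1}X_s\bigr|,
\]
and the sum $\sum_s |x^\top V_{t-1}^{-1}X_s|$ is \emph{not} bounded by $\|x\|_2\cdot\|V_{t-1}^{-1}\sum_s X_sX_s^\top\|_{\mathrm{op}}$. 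The operator $V_{t-1}^{-1}\sum_s X_sX_s^\top$ acts on a single vector; here the scalar weight $X_s^\top(\theta_s-\theta_t)$ varies with $s$, so you cannot pull a common vector through the operator. Pushing your route through (e.g.\ by Cauchy--Schwarz over $s$) introduces an extra $\sqrt{w}$ or $w$ factor, which is exactly the window-size multiplier you were trying to avoid.

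The paper fixes this with one extra move after the telescoping: interchange the order of summation. Writing $\theta_s-\theta_t=\sum_{p=s}^{t-1}(\theta_p-\theta_{p+1})$ and swapping sums gives
\[
V_{t-1}^{-1}\sum_{s=1\vee(t-w)}^{t-1}X_sX_s^\top(\theta_s-\theta_t)
=\sum_{p=1\vee(t-w)}^{t-1}\Bigl(V_{t-1}^{-1}\sum_{s=1\vee(t-w)}^{p}X_sX_s^\top\Bigr)(\theta_p-\theta_{p+1}).
\]
Now, for each fixed $p$, the vector $\theta_p-\theta_{p+1}$ is constant in $s$, so it genuinely can be pulled through the operator. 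Since $\sum_{s\le p}X_sX_s^\top\preceq V_{t-1}$, each partial-sum operator $V_{t-1}^{-1}\sum_{s\le p}X_sX_s^\top$ is PSD with eigenvalues in $[0,1]$, hence has operator norm at most $1$. Taking Euclidean norms and summing over $p$ gives $\|\cdot\|_2\le \sum_p\|\theta_p-\theta_{p+1}\|_2$, and the final $L$ enters via $\|x\|_2\le L$ when you take the inner product with $x$. That interchange is the missing idea.
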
 
\begin{proof}{Proof Sketch.}
	The complete proof is in Section \ref{sec:theorem:sw_deviation} of the appendix. Note that $
		\hat{\theta}_t-\theta_t=V_{t-1}^{-1}\sum_{s=1\vee(t-w)}^{t-1}X_sX_s^{\top}\left(\theta_s-\theta_t\right)+V_{t-1}^{-1} \left(\sum_{s=1\vee(t-w)}^{t-1}\eta_sX_s-\lambda\theta_t\right),$
	we first upper bound the first term as
	$\left\|V_{t-1}^{-1}\sum_{s=1\vee(t-w)}^{t-1}X_sX_s^{\top}\left(\theta_s-\theta_t\right)\right\|_2\leq\sum^{t-1}_{s = 1\vee (t-w)}\left\|\theta_s-\theta_{s+1}\right\|_2,$
	and then adopts Theorem 2 from \citep{AYPS11} for the second term, \ie, with probability at least $1-\delta,$
	$\left\|\sum_{s=1\vee(t-w)}^{t-1}\eta_sX_s-\lambda\theta_t\right\|_{V_{t-1}^{-1}}\leq \beta.$ Therefore, fixed any $\delta\in[0,1],$ we have that for any $t\in[T]$ and any $x\in D_t,$
	\begin{align}
\nonumber\left|x^\top ( \hat{\theta}_t - \theta_t)\right|=&\left| x^\top \left( V_{t-1}^{-1}\sum_{s=1\vee(t-w)}^{t-1}X_sX_s^{\top}\left(\theta_s-\theta_t\right)\right)+ x^\top V_{t-1}^{-1}\left(\sum_{s=1\vee(t-w)}^{t-1}\eta_sX_s-\lambda\theta_t\right) \right|  \nonumber\\
\label{eq:sw24}\leq& \left\|x\right\|_2 \cdot \left\|V_{t-1}^{-1}\sum_{s=1\vee(t-w)}^{t-1}X_sX_s^{\top}\left(\theta_s-\theta_t\right)\right\|_2 + \left\|x\right\|_{V^{-1}_{t-1}}\left\|\sum_{s=1\vee(t-w)}^{t-1}\eta_sX_s-\lambda\theta_t\right\|_{V_{t-1}^{-1}}\\
\nonumber\leq&L \sum^{t-1}_{s = 1\vee (t-w)}\left\|\theta_s-\theta_{s+1}\right\|_2 + \beta \left\|x\right\|_{V^{-1}_{t-1}},		
\end{align}
where we have applied the triangle inequality and the Cauchy-Schwarz inequality successively in inequality \eqref{eq:sw24}.\halmos
\end{proof}
\section{Sliding Window-Upper Confidence Bound (\texttt{SW-UCB}) Algorithm: An Optimal Strategy with Known Variation Budgets}
\label{sec:swofu}
In this section, we describe the Sliding Window Upper Confidence Bound (\texttt{SW-UCB}) algorithm for the linear model. When the variation budget $B_T$ is known, we show that \swofu ~with a tuned window size achieves a dynamic regret bound which is optimal up to a multiplicative logarithmic factor. When the variation budget $B_T$ is unknown, we show that \swofu ~can still be implemented with a suitably chosen window size so that the regret dependency on $T$ is optimal, akin to that of \citep{KZ16}.
\subsection{Design Intuition and Design Details}
\label{sec:swofu_intuition}
\label{sec:swofu_details}
In the stochastic environment where the reward function is stationary, the well known UCB algorithm follows the principle of optimism in face of uncertainty \citep{ABF02,AYPS11}. Under this principle, the DM selects an action that maximizes the UCB, which is the value of ``mean plus confidence radius" \citep{ABF02} in each round. Following this principle, in each round $t,$ the \swofu~first computes the estimate $\hat{\theta}_t$ for $\theta_t$ according to eq. \eqref{eq:sw20} (one can set $\lambda=1$), and then constructs an UCB on the latent mean reward $\langle x, \theta_t\rangle$ for each action $x\in D_t.$ By Theorem \ref{theorem:sw_deviation}, the UCB of $x\in D_t$ in each round $t\in[T]$ is $\langle x,\hat{\theta}_t\rangle+L \sum^{t-1}_{s = 1\vee (t-w)}\left\|\theta_s-\theta_{s+1}\right\|+\beta\left\|x\right\|_{V^{-1}_{t-1}}.$ The \swofu~then choose the action $X_t$ with the highest UCB, \ie,
\begin{align}
	\label{eq:sw_policy}
	X_t=&\argmax_{x\in D_t}\left\{\langle x,\hat{\theta}_t\rangle+L \sum^{t-1}_{s = 1\vee (t-w)}\left\|\theta_s-\theta_{s+1}\right\|+\beta\left\|x\right\|_{V^{-1}_{t-1}}\right\}=\argmax_{x\in D_t}\left\{\langle x,\hat{\theta}_t\rangle+\beta\left\|x\right\|_{V^{-1}_{t-1}} \right\}.
\end{align}
Finally, the corresponding reward $Y_t$ is observed. The pseudo-code of the \swofu~is shown in Algorithm \ref{alg:swofu}.
\begin{algorithm}[!ht]
	\caption{\swofu ~for drifting linear bandits}
	\label{alg:swofu}
	\begin{algorithmic}[1]
		\State \textbf{Input:} Sliding window size $w$, dimension $d,$ variance proxy of the noise terms $R,$ upper bound of all the actions' Euclidean norms $L,$ upper bound of all the $\theta_t$'s Euclidean norms $S,$ and regularization constant $\lambda.$
		\State \textbf{Initialization:} $V_0\leftarrow\lambda I.$
		\For{$t=1,\ldots,T$}
		\State{Update $\hat{\theta}_t\leftarrow V_{t-1}^{-1}\left(\sum_{s=1\vee(t-w)}^{t-1}X_sY_s\right).$}
		\State{$X_t\leftarrow\argmax_{x\in D_t}\left\{x^\top\hat{\theta}_t+ \beta \left\|x\right\|_{V^{-1}_{t-1}} \right\},$ where $\beta$ is defined in (\ref{eq:sw_beta}).}
		\State{Observe $Y_t=\langle X_t,\theta_t\rangle+\eta_t.$} 
		\State{Update $V_t\leftarrow\lambda I+\sum_{s=1\vee(t-w+1)}^tX_sX_s^{\top}.$}
		\EndFor
	\end{algorithmic}
\end{algorithm}
\subsection{Dynamic Regret Analysis}
\label{sec:swofu_regret}
We are now ready to formally state a dynamic regret upper bound of the \swofu~for drifting linear bandits.
\begin{theorem}
	\label{theorem:sw_main}
For the drifting linear bandit setting, the dynamic regret of the \swofu~is upper bounded as $\R_T\left(\swofu\right)=\widetilde{O}\left(wB_T+dT/\sqrt{w}\right).$
	When $B_T$ is known, by taking $w=\Theta\left((dT)^{2/3}B_T^{-2/3}\right),$ the dynamic regret of the \swofu~is $\R_T\left(\swofu\right)=\widetilde{O}\left(d^{{2}/{3}}B_T^{{1}/{3}}T^{{2}/{3}}\right).$
	When $B_T$ is unknown, by taking $w=\Theta\left((dT)^{2/3}\right),$ the dynamic regret of the \swofu~is $\R_T\left(\swofu\right)=\widetilde{O}\left(d^{{2}/{3}}B_TT^{{2}/{3}}\right).$
\end{theorem}
\begin{proof}{Poof Sketch.}
	The complete proof is in Section \ref{sec:theorem:sw_main} of the appendix. Upon selecting $X_t,$ we have
	\begin{align}
	\label{eq:sw14}\langle x^*_t,\hat{\theta}_t\rangle+L \sum^{t-1}_{s = 1\vee (t-w)}\left\|\theta_s-\theta_{s+1}\right\|_2+\beta\left\|x^*_t\right\|_{V^{-1}_{t-1}}\leq&\langle X_t,\hat{\theta}_t\rangle+L \sum^{t-1}_{s = 1\vee (t-w)}\left\|\theta_s-\theta_{s+1}\right\|_2+ \beta\left\|X_t\right\|_{V^{-1}_{t-1}}
	\end{align}
	by virtue of the UCB action selection rule. From Theorem \ref{theorem:sw_deviation}, we further have with probability at least $1-\delta,$
	\begin{align}
	\label{eq:sw18}
	\langle x^*_t,\theta_t \rangle \leq \langle x^*_t,\hat{\theta}_t\rangle+ L \sum^{t-1}_{s = 1\vee (t-w)}\left\|\theta_s-\theta_{s+1}\right\|_2+\beta\left\|x^*_t\right\|_{V^{-1}_{t-1}}
	\end{align}
	and
	\begin{align}
	\label{eq:sw19}&\langle X_t,\hat{\theta}_t\rangle+L \sum^{t-1}_{s = 1\vee (t-w)}\left\|\theta_s-\theta_{s+1}\right\|_2+\beta\left\|X_t\right\|_{V^{-1}_{t-1}}\leq\langle X_t,\theta_t\rangle+ 2L \sum^{t-1}_{s = 1\vee (t-w)}\left\|\theta_s-\theta_{s+1}\right\|_2+ 2\beta\left\|X_t\right\|_{V^{-1}_{t-1}} .
	\end{align}
	Combining inequalities (\ref{eq:sw14}), (\ref{eq:sw18}), and (\ref{eq:sw19}), we establish the following high probability upper bound for the expected per round regret, \ie, with probability $1-\delta,$
	\begin{equation}\label{eq:sw-ucb}
	\langle x^*_t - X_t, \theta_t\rangle \leq 2 L \sum^{t-1}_{s = 1\vee (t-w)}\left\|\theta_s-\theta_{s+1}\right\|_2+2\beta\left\|X_t\right\|_{V^{-1}_{t-1}} .
	\end{equation}
	The regret upper bound of the \swofu~is thus
	\begin{align}
	\label{eq:decomp}
	2&\sum_{t\in[T]}L \sum^{t-1}_{s = 1\vee (t-w)}\left\|\theta_s-\theta_{s+1}\right\|_2+\beta\left\|X_t\right\|_{V^{-1}_{t-1}} 
	=\widetilde{O}\left(wB_T+\frac{dT}{\sqrt{w}}\right).
	\end{align} 
	If $B_T$ is known, the DM can set $w=\lfloor d^{2/3}T^{2/3}B_T^{-2/3}\rfloor$ and achieve a regret upper bound $\widetilde{O} (d^{2/3}B_T^{1/3}T^{2/3} ).$ If $B_T$ is not known, which is often the case in practice, the DM can set $w=\lfloor (dT)^{2/3}\rfloor$ to obtain a regret upper bound $\widetilde{O}(d^{2/3}(B_T+1)T^{2/3}).$ 	\halmos
\end{proof}
\begin{remark}
When the variation budget $B_T$ is known, Theorem  \ref{theorem:sw_main} recommends choosing the size $w$ of the sliding window to be decreasing with $B_T$. The recommendation is in agreement with the intuition that, when the learning environment becomes more volatile, the DM should focus on more recent observations. Indeed, if the underlying learning environment is changing at a higher rate, then the DM's past observations become obsolete faster. Theorem \ref{theorem:sw_main} pins down the intuition of forgetting past observation in face of drifting environments, by providing the mathematical definition of the sliding window size $w$ that yields the optimal dynamic regret bound. 
\end{remark}
\section{Bandit-over-Bandit (\texttt{BOB}) Algorithm: Adapting to the Unknown Variation Budget}
\label{sec:bob}
When $B_T$ is not known, the DM can achieve the dynamic regret bound $\widetilde{O}\left(d^{2/3}(B_T+1)T^{2/3}\right)$ for the drifting linear bandit problem, by setting $w = \Theta((dT)^{2/3})$ (see Section \ref{sec:swofu}). While the bound is optimal in terms of $T$ by Theorem \ref{theorem:lower_bound}, the bound becomes trivial when $B_T=\Omega(T^{1/3})$, since then the resulting dynamic regret bound is linear in $T$. 

To mitigate this issue, we make use of the \swofu~as a sub-routine, and ``hedge" \citep{ABFS02,AB09} against the (possibly adversarial) changes of $\theta_t$'s to identify a reasonable fixed window size. Inspired by the heuristic envelop policy \citep{BGZ18} and the bandit corralling technique \citep{ALNS17,LWAL18}, we develop a novel Bandit-over-Bandit (\texttt{BOB}) algorithm that achieves a nearly optimal dynamic regret bound without knowing $B_T$. Specifically, we show that the \bob~has a dynamic regret sub-linear in $T$ even when $B_T =o(T)$ is not known, unlike the \swofu. Similar to the style of previous sections, the discussion in this section focuses on linear model. Nevertheless, we emphasize that the proposed framework applies to a variety of bandit models (see the forthcoming Section \ref{sec:applications}).
\subsection{Design Intuition and Design Details}
\label{sec:bob_details}
As illustrated in Fig. \ref{fig:bob_int}, the \bob~divides the whole time horizon into $\lceil T/H\rceil$ blocks of equal length $H$ rounds (the last block can possibly have less than $H$ rounds). In addition, the algorithm specifies a set of candidate window sizes $J$. For each block $i\in\left[\lceil T/H\rceil\right]$, the \bob~first selects a window size $w_i \in J$. Then, the \texttt{BOB} algorithm restarts the \swofu~from scratch (see Remark \ref{remark:bob_design} for a discussion on the design of restarting) with the selected window size $w_i$ for $H$ rounds. On top of this, the \bob~also maintains a separate bandit algorithm to determine each window size $w_i$ based on the observed history in the previous $i-1$ blocks, and thus the name Bandit-over-Bandit. The choice of $w_i$ is based on the EXP3 algorithm \citep{ABFS02}, which allows us to compete with the best window size in $J$ (in the sense of minimizing dynamic regret), even when the $\theta_t$'s variation does not follow any pattern. The EXP3 algorithm is designed for adversarial multi-armed bandits, where the underlying reward function is designed by an oblivious adversary \citep{ABFS02,AB09}. Finally, to properly apply the EXP3 algorithm, we note that the  total reward during each block is normalized so that the normalized reward lies in $[0, 1]$ with high probability. 
\begin{figure}[!ht]
	\centering
	\includegraphics[width=13cm,height=4.1cm]{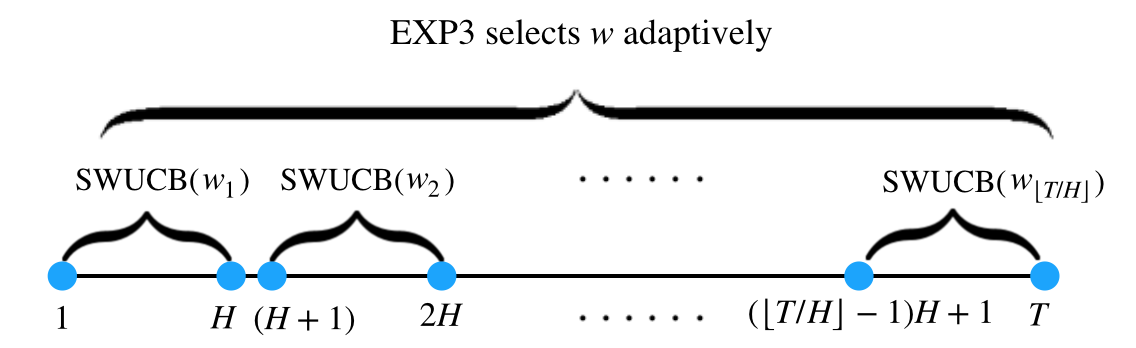}
	\caption{Structure of the \bob}
	\label{fig:bob_int}
\end{figure}

\begin{algorithm}[!ht]
	\caption{\bob~for drifting linear bandits}
	\label{alg:bob}
	\begin{algorithmic}[1]
		\State \textbf{Input:} Time horizon $T$, the \swofu, parameters $H, \Delta, J,Q$ (as defined in \ref{eq:bob_parameters}).
		\State \textbf{Initialize} parameters 
		$\gamma, \{s_{j, 1}\}^{\Delta}_{j=0}$ by eq. (\ref{eq:bob_parameters1}).		
		\For{$i=1,2,\ldots,\lceil T/H\rceil$}
		\State {Define distribution $(p_{j, i})^{\Delta}_{j=0}$ by eq. (\ref{eq:p_j_i}), and set $j_t\leftarrow j$ with probability $p_{j,i}$.}
		\State{Set the window size $w_i\leftarrow\left\lfloor H^{j_t/\Delta}\right\rfloor$.}
		\State{Restart the \swofu~for $H$ rounds with window size $w_i$.}\label{alg:template}
		\State{Update $s_{j_i,i+1}$ according to eq. (\ref{eq:s_j_i}), and $s_{u,i+1}\leftarrow s_{u,i}~\forall u\neq j_i$}
		\EndFor
	\end{algorithmic}
\end{algorithm}

To this end, we describe the details of the \bob, displayed in Algorithm \ref{alg:bob},  for the linear bandit model. Define the parameters (we justify these choices in Section \ref{sec:bob_intuition})
\begin{align}
	\label{eq:bob_parameters}
	&H=\left\lfloor dT^{\frac{1}{2}}\right\rfloor,\Delta=\lceil\ln H\rceil,J=\left\{H^0,\left\lfloor H^{\frac{1}{\Delta}}\right\rfloor,\ldots, H\right\},Q=2H+4R\sqrt{H\ln (T/\sqrt{H})}.
\end{align} The \bob~first divides the time horizon $T$ into $\lceil T/H\rceil$ blocks of length $H$ rounds (except for the last block, which can be less than $H$ rounds), and then initiates the parameters 
\begin{align}
	\label{eq:bob_parameters1}
	&\gamma=\min\left\{1,\sqrt{\frac{(\Delta+1)\ln(\Delta+1)}{(e-1)\lceil T/H\rceil}}\right\},
	s_{j,1}=1\quad\forall j=0,1,\ldots,\Delta.
\end{align}
for the EXP3 algorithm \citep{ABFS02}.  At the beginning of each block $i\in\left[\lceil T/H\rceil\right],$ the \bob~first sets
\begin{align}\label{eq:p_j_i}
	p_{j,i}=(1-\gamma)\frac{s_{j,i}}{\sum_{u=0}^{\Delta}s_{u,i}}+\frac{\gamma}{\Delta+1}\quad\forall j=0,1,\ldots,\Delta,
\end{align}
and then sets $j_i=j$ with probability $p_{j,i}$ for each $j=0,1,\ldots,\Delta.$ The selected window size is then $w_i=\left\lfloor H^{j_i/\Delta}\right\rfloor.$ Afterwards, the \bob~selects actions $X_t$ by running the \swofu~with window size $w_i$ for each round $t$ in block $i,$ and the total collected reward is 
$$\sum_{t=(i-1)H+1}^{i\cdot H\wedge T}Y_t=\sum_{t=(i-1)H+1}^{i\cdot H\wedge T}\langle X_t,\theta_t\rangle+\eta_t.$$ 
Finally, the total rewards is normalized by first dividing $Q,$ and then added by $1/2$ so that it lies within $[0,1]$ with high probability. The parameter $s_{j_i,i+1}$ is set to
\begin{align}\label{eq:s_j_i}
	s_{j_i,i}\cdot\exp\left(\frac{\gamma}{(\Delta+1)p_{j_i,i}}\left(\frac{1}{2}+\frac{\sum_{t=(i-1)H+1}^{i\cdot H\wedge T}Y_t}{Q}\right)\right);
\end{align}
while $s_{u,i+1}$ is the same as $s_{u,i}$ for all $u\neq j_i.$ 

\subsection{Dynamic Regret Analysis}
We are now ready to present the dynamic regret bound for the \bob.
\begin{proposition}\label{prop:bob}
	For the drifting linear bandit setting, the dynamic regret of the \bob~is 
	\begin{align}
	\label{eq:bob_regret}&\R_T(\bob)=\widetilde{O}\left(w^{\dag}B_T+\frac{dT}{\sqrt{w^{\dag}}}+Q\sqrt{\frac{|J|T}{H}}\right).
	\end{align}
\end{proposition}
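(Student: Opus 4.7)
The plan is to decompose the dynamic regret of the \bob~into two additive pieces: (i) the regret of the idealized policy that restarts the \swofu~with the single best fixed window size $w^{\dag}\in J$ at the beginning of every block, measured against the dynamic optimum; and (ii) the ``meta-regret'' from letting the EXP3 layer pick a (possibly suboptimal) window each block instead of committing to $w^{\dag}$ throughout. Writing $X^{\text{BOB}}_t$ and $X^{(w^{\dag})}_t$ for the respective actions in round $t$, I would use the identity
\[
\R_T(\bob)=\underbrace{\Ex\!\left[\sum_{t=1}^T\langle x^*_t-X^{(w^{\dag})}_t,\theta_t\rangle\right]}_{(\text{I})}+\underbrace{\Ex\!\left[\sum_{t=1}^T\langle X^{(w^{\dag})}_t-X^{\text{BOB}}_t,\theta_t\rangle\right]}_{(\text{II})},
\]
where (I) will produce the first two summands and (II) will produce the last summand in \eqref{eq:bob_regret}.

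For (I), I would apply Theorem \ref{theorem:sw_main} block by block. Writing $B_T^{(i)}$ for the variation of $\{\theta_t\}$ restricted to block $i\in[\lceil T/H\rceil]$ of length (at most) $H$, the per-block dynamic regret of the restarted \swofu~with window $w^{\dag}$ is $\widetilde{O}(w^{\dag}B_T^{(i)}+dH/\sqrt{w^{\dag}})$. Summing across the $\lceil T/H\rceil$ blocks and using $\sum_i B_T^{(i)}\le B_T$ yields $\widetilde{O}(w^{\dag}B_T+dT/\sqrt{w^{\dag}})$. The per-block restart is benign here, because the proof of Theorem \ref{theorem:sw_main} treats the \swofu~as a black box over any horizon and the $1\vee(t-w)$ truncation built into the algorithm already absorbs the ``warm-up'' at the start of each block.

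For (II), I would view the EXP3 layer of the \bob~as a standard adversarial multi-armed bandit instance with $|J|=\Delta+1$ arms and $N=\lceil T/H\rceil$ rounds, whose $i$-th reward is the normalized block total $\tfrac12+\sum_{t\in\text{block }i}Y_t/Q$. By Lemma \ref{lemma:bob} applied with $\nu=1$ together with the choice $Q=2H+4R\sqrt{H\ln(T/\sqrt{H})}$, these normalized rewards lie in $[0,1]$ simultaneously across all blocks with probability at least $1-2/T$. Invoking the standard EXP3 regret guarantee of \citep{ABFS02} against the best fixed arm gives expected meta-regret $O(\sqrt{N|J|\ln|J|})$ on the normalized scale, which, after multiplying through by $Q$, becomes $\widetilde{O}(Q\sqrt{|J|T/H})$ on the original scale. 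Because (II) is precisely the expected per-round reward gap between the restarted \swofu~with window $w^{\dag}$ and the \bob, this translates directly into the desired bound.

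The main obstacle will be a clean treatment of the $2/T$ failure event in Lemma \ref{lemma:bob}: on that event the normalized rewards can leave $[0,1]$ and the standing EXP3 hypothesis formally breaks. I plan to absorb this by a worst-case argument: a single block can contribute at most $O(Q)$ extra regret and there are $O(T/H)$ blocks, so the total contribution is $O((QT/H)\cdot(1/T))=O(Q/H)$ in expectation, which is dominated by $Q\sqrt{|J|T/H}$ and is therefore absorbed into the last summand of \eqref{eq:bob_regret}. A secondary subtlety is that EXP3's comparator is the stochastic cumulative reward of the \swofu~with window $w^{\dag}$ restarted block-by-block, rather than the deterministic dynamic optimum $\{x^*_t\}$; the action-value decomposition above is chosen precisely to keep the dynamic-regret contribution of (I) cleanly separated from the adversarial-bandit contribution of (II), so that each piece can be bounded with its own, already-established tool.
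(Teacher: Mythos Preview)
Your proposal is correct and mirrors the paper's own proof almost exactly: the paper uses the same additive decomposition into (I) the block-by-block dynamic regret of the restarted \swofu\ with window $w^{\dag}$ (bounded via Theorem~\ref{theorem:sw_main} summed over blocks, using $\sum_i B_T^{(i)}\le B_T$) and (II) the EXP3 meta-regret (bounded via the guarantee of \citep{ABFS02} after normalizing block rewards by $Q$, with Lemma~\ref{lemma:bob} certifying the $[0,1]$ range). The only cosmetic difference is the failure-event bookkeeping: the paper simply conditions on the bad event of probability $\le 2/T$ and bounds the regret there by the trivial $T$, yielding an $O(1)$ contribution, whereas you route the same bound through a per-block $O(Q)$ cap; both are equivalent since $Q=\Theta(H)$ and the conclusion is absorbed by the $\widetilde O(\cdot)$.
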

\begin{proof}{Proof Sketch.} The complete proof is presented in Section \ref{sec:prop:bob} of the appendix.
	The dynamic regret bound (\ref{eq:bob_regret}) can be decomposed as
	\begin{equation}\label{eq:bob_lin_decompose}
	\underbrace{\widetilde{O}\left(w^{\dag}B_T+\frac{dT}{\sqrt{w^{\dag}}}\right)}_{\R_T\left(\swofu\right)\text{ with $w^{\dag}$}}+ \underbrace{\widetilde{O}\left(Q\sqrt{\frac{|J|T}{H}}\right)}_{\text{Loss in learning $w^{\dag}$}}.
	\end{equation}
	The first term in (\ref{eq:bob_lin_decompose}) is due to the dynamic regret of the underlying \swofu~under the optimally tuned window size $w^\dag$. More precisely, we can view each block as a new non-stationary linear bandit instance, and the dynamic regret is due to the application of \swofu~with window size $w^{\dag}$ on each block. The second term in (\ref{eq:bob_lin_decompose}) is due to the loss by the EXP3 algorithm, which essentially treat each of the window size in $J$ as an expert, and compete with the best expert. Here, we point out due to the design of restarting, any instance of the \swofu~cannot last for more than $H$ rounds. As a consequence, even if the EXP3 algorithm selects a window size $w_i>H$ for some block $i,$ the effective window size is $H.$ In other words, $w^*$ is not necessarily attainable, \ie, by definition, $w^*=\left\lfloor (dT)^{2/3}B_T^{-2/3}\right\rfloor$ might be larger than $H$ when $B_T$ is small. We thus have to denote the optimally (over $J$) tuned window size as $w^{\dag}.$  	\halmos
\end{proof}

\begin{theorem}
	\label{theorem:bob}With the parameters specified in Section \ref{sec:bob_details}, the dynamic regret of the \bob~for drifting linear bandit is
	$\R_T\left(\bob\right)=\widetilde{O}\left(d^{{2}/{3}}B_T^{{1}/{3}}T^{{2}/{3}}+d^{{1}/{2}}T^{{3}/{4}}\right).$
\end{theorem}
The proof of Theorem \ref{theorem:bob} can be found in Section \ref{sec:theorem:bob} of the appendix. In the next section, we discuss the choice of parameters in \eqref{eq:bob_parameters} and discuss its relationship

\subsection{Choices of Parameters and Justifications}
\label{sec:bob_intuition}
We first justify the choice of $Q$ in \eqref{eq:bob_parameters}. Note that $Q$ is used to perform normalization, we thus prove high probability upper and lower bounds for the total rewards of each block (here, we prove a slightly more general result by allowing $\max_{t\in[T],x\in D_t}|\langle x,\theta_t\rangle|$ to be in $[-\nu,\nu]$ for some $\nu>0$). 
\begin{lemma}
	\label{lemma:bob}
	Suppose $\max_{t\in[T],x\in D_t}|\langle x,\theta_t\rangle|\in[-\nu,\nu]$ for some $\nu>0$ and denote $M_i$ as the absolute value of cumulative rewards for block $i$, then with probability at least $1-2/T,$ $M_i$ does not exceed $H\nu+2R\sqrt{H\ln(T/\sqrt{H})}$ for all $i,$ \ie, $\Pr\left(\forall i\in\lceil T/H\rceil \quad M_i\leq H\nu+2R\sqrt{H\ln\frac{T}{\sqrt{H}}}\right)\geq 1-\frac{2}{T}.$
\end{lemma}
The complete proof of Lemma \ref{lemma:bob} is in Section \ref{sec:lemma:bob} of the appendix. With Lemma \ref{lemma:bob} and the choice of $Q=2H+4R\sqrt{H\ln(T/\sqrt{H})}$ (note that $\nu=1$ by our model assumption in Section \ref{sec:formulation}), it is evident that ${\sum_{t=(i-1)H+1}^{i\cdot H\wedge T}Y_t}/{Q}$ in eq. \eqref{eq:s_j_i} lies in $[-1/2,1/2]$ with probability at least $1-2/T.$ Adding this by $1/2,$ we normalize the total rewards of each block to $[0,1]$ with probability at least $1-2/T$ for all the blocks.

To determine $H,\Delta,$ and $J$, we consider the dynamic regret bound of the \bob~as stated in Proposition \ref{prop:bob}. Eq. (\ref{eq:bob_regret}) in Proposition \ref{prop:bob} exhibits a similar structure to the regret of the \swofu~as stated in Theorem \ref{theorem:sw_main}, and this immediately indicates a clear trade-off in the design of the block length $H:$ 
\begin{itemize}
	\item On one hand, $H$ should be small to control the regret incurred by the EXP3 algorithm in identifying $w^{\dag},$ \ie, the third term in eq. (\ref{eq:bob_regret}).
	\item On the others, $H$ should also be large enough to allow $w^{\dag}$ to get close to $w^*=\lfloor (dT)^{2/3}B_T^{-2/3}\rfloor$ so that the sum of the first two terms in eq. (\ref{eq:bob_regret}) is minimized. 
\end{itemize} A more careful inspection also reveals the tension in the design of $J.$ Obviously, we hope that $|J|$ is small to minimize the third term in eq. (\ref{eq:bob_regret}), but we also wish $J$ to be dense enough so that it forms a cover to the set $[H].$ Otherwise, even if $H$ is large enough that $w^{\dag}$ can approach $w^*,$ approximating $w^*$ with any element in $J$ can cause a major loss.

These observations suggest the following choice of $J.$ 
\begin{align}
	J=\left\{H^0,\left\lfloor H^{\frac{1}{\Delta}}\right\rfloor,\ldots, H\right\}
\end{align}
for some positive integer $\Delta,$ and since the choice of $H$ should not depend on $B_T,$ we can set $H=\left\lfloor d^{\epsilon}T^{\alpha}\right\rfloor$ 
with some $\alpha\in[0,1]$ and $\epsilon>0$ to be determined. We then distinguish two cases depending on whether $w^*$ is smaller than $H$ or not (or alternatively, whether $B_T$ is larger than $d^{(2-3\epsilon)/2}T^{(2-3\alpha)/2}$ or not).
\paragraph{Case 1: $w^*\leq H$ or $B_T\geq d^{(2-3\epsilon)/2}T^{(2-3\alpha)/2}.$} Under this situation, $w^{\dag}$ can automatically adapt to the nearly optimal window size $\text{clip}_J\left(w^*\right)$ , where $\text{clip}_J(x)$ finds the largest element in $J$ that does not exceed $x.$  Notice that $|J|=\Delta+1,$ the dynamic regret of the \bob~then becomes
\begin{align}
	\nonumber\R_T(\bob)=&\widetilde{O}\left(w^{\dag}B_T+\frac{dT}{\sqrt{w^{\dag}}}+\sqrt{H|J|T}\right)\\
	\nonumber=&\widetilde{O}\left(w^*H^{\frac{1}{\Delta}}B_T +\frac{dT}{\sqrt{w^*H^{-1/\Delta}}}+\sqrt{d^{\epsilon}T^{{\alpha+1}}\Delta}\right)\\
	\label{eq:bob_regret1}=&\widetilde{O}\left(d^{\frac{2}{3}}\left(B_T+1\right)^{\frac{1}{3}}T^{\frac{2}{3}}H^{\frac{1}{\Delta}} +d^{\frac{\epsilon}{2}}T^{\frac{\alpha+1}{2}}\Delta^{\frac{1}{2}}\right).
\end{align}
\paragraph{Case 2: $w^*> H$ or $B_T< d^{(2-3\epsilon)/2}T^{(2-3\alpha)/2}.$} Under this situation, $w^{\dag}$ equals to $H,$ which is the window size closest to $w^*,$ the regret of the \bob~then becomes
\begin{align}
	\nonumber\R_T(\bob)=&\widetilde{O}\left(w^{\dag}B_T+\frac{dT}{\sqrt{w^{\dag}}}+\sqrt{H|J|T}\right)\\
	\nonumber=&\widetilde{O}\left(HB_T+\frac{dT}{\sqrt{H}}+\sqrt{H|J|T}\right)\\
	\nonumber=&\widetilde{O}\left(d^{\epsilon}\left(B_T+1\right)T^{\alpha}+d^{1-\frac{\epsilon}{2}}T^{\frac{2-\alpha}{2}}++d^{\frac{\epsilon}{2}}T^{\frac{\alpha+1}{2}}\Delta^{\frac{1}{2}}\right)\\
	\label{eq:bob_regret2}=&\widetilde{O}\left(d^{1-\frac{\epsilon}{2}}T^{\frac{2-\alpha}{2}}+d^{\frac{\epsilon}{2}}T^{\frac{\alpha+1}{2}}\Delta^{\frac{1}{2}}\right),
\end{align}
where we have make use of the fact that $B_T< d^{(2-3\epsilon)/2}T^{(2-3\alpha)/2}$ in the last step.

Now both eq. (\ref{eq:bob_regret1}) and eq. (\ref{eq:bob_regret2}) suggests that we should set $\Delta=\lceil\ln H\rceil,$ and eq. (\ref{eq:bob_regret2})  further reveals that we should take $\alpha=1/2$ and $\epsilon=1.$ These then lead to the choice of parameters presented in eq. \eqref{eq:bob_parameters}, \ie,
$H=\left\lfloor dT^{\frac{1}{2}}\right\rfloor,\Delta=\lceil\ln H\rceil,J=\left\{H^0,\left\lfloor H^{\frac{1}{\Delta}}\right\rfloor,\ldots, H\right\}.$ Here we have to emphasize that $w^{\dag},\alpha,$ and $\epsilon$ are used only in the analysis, while the only parameters that we need to decide are $H,\Delta,J,$ and $Q,$ which clearly do not depend on $B_T.$
\subsection{Further Remarks Regarding the \bob}
\begin{remark}[Removing Assumption \ref{ass:reg}]\label{remark:ass}
To remove Assumption \ref{ass:reg}, one can apply a restarting strategy \citep{BGZ18} together with an algorithm for adversarial linear bandit, \eg, Algorithm 15 of \cite{LS18}. When $B_T$ is known and $D_t$'s are fixed, by an argument similar to Theorem 2 of \cite{BGZ18}, one can show that this restarting strategy can achieve the minimax-optimal dynamic regret bound $\widetilde{O}(d^{2/3}B_T^{1/3}T^{2/3});$ when $B_T$ is unknown, we can apply the \bob~to adaptively tune the restarting rate to achieve the dynamic regret bound $\widetilde{O}(d^{2/3}B_T^{1/3}T^{2/3}+d^{1/2}T^{3/4}).$
\end{remark}
\begin{remark}[Algorithm's Optimality]
	Compared with the lower bound of Theorem \ref{theorem:lower_bound}, the dynamic regret bound presented in Theorem \ref{theorem:bob} is optimal when $B_T\geq d^{-1/2}T^{1/4};$ while it also leaves a small $O(T^{1/12})$ gap in the worst case \ie, when $B_T=\Theta(1).$ This is because for the \bob, the smaller the amount of non-stationarity (as quantified in the left hand side of (\ref{eq:variation_budget})), the harder it is for the EXP3 algorithm to detect the amount of non-stationarity, resulting in a worse dynamic regret bound. Indeed, the worst possible case for our analysis is when $B_T=dT^{-1/2}$ according to Theorem \ref{theorem:lower_bound}.
\end{remark}
\begin{remark}[Failure of Naive Learning of $B_T$]
	Theorem \ref{theorem:sw_main} shows that running the \swofu~for $T$ with window size $
	w^*=\left\lfloor (dT)^{2/3}B_T^{-2/3}\right\rfloor$
	leads to an optimal dynamic regret. However, the choice of the window size $w^*$ requires the crucial knowledge of $B_T$, which is not available to the DM. A natural attempt would be to ``learn" the unknown $B_T$ in order to properly tune the window size $w$. In a more restrictive setting in which the differences between consecutive $\theta_t$'s follow some underlying stochastic process, one possible approach is to apply a suitable machine learning technique to learn the underlying stochastic process and tune the parameter $w$ accordingly. However, under the general setting of drifting environments (\ref{eq:variation_budget}), the differences between consecutive $\theta_t$'s need not follow any pattern, which challenges the use of statistical machine learning algorithms for identifying the patterns on the underlying changes. 
\end{remark}
\begin{remark}[Restarting Structure of the \bob]\label{remark:bob_design}
	The block structure and restarting the \swofu~with a single window size for each block are essential for the correctness of the \bob. Otherwise, suppose the DM utilizes the EXP3 algorithm to select the window size $w_t$ for each round $t,$ and implements the \swofu~with the selected window size without ever restarting it. Instead of eq. (\ref{eq:bob_decompose}), the regret of the \bob~is then decomposed as 
	\begin{align}
		\label{eq:disc}
		\nonumber&\sum_{t=1}^T\left(\text{Reward of }\texttt{SW-UCB}\left(\left\{w^{\dag}\right\}_{\tau=1}^t\right)\text{ in round $t$}-\text{Reward of }\texttt{SW-UCB}\left(\left\{w_{\tau}\right\}_{\tau=1}^t\right)\text{ in round $t$}\right)\\
		&+\sum_{t=1}^T\left(\text{Optimal reward in round }t-\text{Reward of }\texttt{SW-UCB}\left(\left\{w^{\dag}\right\}_{\tau=1}^t\right)\text{ in round $t$}\right)
	\end{align}  
	Here, with some abuse of notations, $\texttt{SW-UCB}(\{w^{\dag}\}_{\tau=1}^t)$ (respectively $(\texttt{SW-UCB}( \{w_{\tau}\}_{\tau=1}^t )$) refers to in round $t,$ the DM runs the \swofu~with window size $w^{\dag}$ (respectively $w_t$) and historical data, \eg, (action, reward) pairs, generated by running the \swofu~with window size $w^{\dag}$ (respectively $w_{\tau}$) for rounds $\tau=1,\ldots,t-1.$ Same as before, the second term of eq. (\ref{eq:disc}) can be upper bounded as a result of Theorem \ref{theorem:sw_main}. It is also tempting to apply results from the EXP3 algorithm to upper bound the first term. Unfortunately, this is incorrect as it is required by the adversarial bandits protocol \citep{ABFS02} that the DM and its competitor should receive the same reward if they select the same action, \ie, the reward of $\texttt{SW-UCB}\left(\left\{w^{\dag}\right\}_{\tau=1}^{t-1},w_t=w\right)$ in round $t$ and the reward of $\texttt{SW-UCB}\left(\left\{w_{\tau}\right\}_{\tau=1}^{t-1},w_t=w^{\dag}\right)$ in round $t$ should be the same for every $w.$ Nevertheless, this is violated as running the \swofu~with different window sizes for previous rounds can generate different (action,reward) pairs, and this results in possibly different estimated $\hat{\theta}_t$'s for the two \swofu s even if both of them use the same window size in round $t.$ Hence, the selected actions and the corresponding reward by these two instances might also be different. By the careful design of blocks as well as the restarting scheme, the \bob~decouples the \swofu~for a block from previous blocks, and thus fixes the above mentioned problem, \ie, the regret of the \bob~is decomposed as eq. (\ref{eq:bob_decompose}).
\end{remark}
\begin{remark}[Applications]
	The Bandit-over-Bandit framework can go beyond the problem of non-stationary bandit optimization. In a high level, it provides us a viable approach to automatically optimize the performances of data-driven sequential decision-making algorithms. Although not always optimal, it can be applied to bandit model selection \citep{FosterKL19} as well as online meta-learning \citep{BastaniSLZ19}, in which the DM is trying to optimize the performances of her algorithms by selecting a correct model class or a set of proper parameters. Both of these are of great importance in the operations of data-driven decision-making algorithms.
\end{remark}
\section{Extensions to Other Bandit Models}
\label{sec:applications}
In this section, we demonstrate the generality of our established results. As illustrative examples, we apply our technique to several bandit settings, including multi-armed bandits \citep{ABF02},  the generalized linear bandits  \citep{FCAS10,LLZ17}, and the combinatorial semi-bandits \citep{GKJ12,KWAS15}. A preview of the results is shown in Table \ref{table:application}. Note that for generalized linear bandits, we need to impose Assumption \ref{ass:reg}. On the other hand, for multi-armed bandits, this assumption is always valid while for combinatorial semi-bandits, this assumption is not required.
\begin{table}[!ht]
	\begin{center}
		\begin{tabular}{|c|c|c|} 
			\hline
			&Known $B_T$&Unknown $B_T$\\
			\hline
			$d$-armed bandit&$\widetilde{O}\left(d^{{1}/{3}}B_T^{{1}/{3}}T^{{2}/{3}}\right)$&$\widetilde{O}\left(d^{{1}/{3}}B_T^{{1}/{3}}T^{{2}/{3}}+d^{{1}/{4}}T^{{3}/{4}}\right)$\\
			\hline
			Generalized linear bandit&$\widetilde{O}\left(d^{{2}/{3}}B_T^{{1}/{3}}T^{{2}/{3}}\right)$&$\widetilde{O}\left(d^{{2}/{3}}B_T^{{1}/{3}}T^{{2}/{3}}+d^{{1}/{2}}T^{{3}/{4}}\right)$\\
			\hline
			Combinatorial semi-bandit&$\widetilde{O}\left(d^{{1}/{3}}m^{2/3}B_T^{{1}/{3}}T^{{2}/{3}}\right)$&$\widetilde{O}\left(d^{1/3}m^{2/3}B_T^{1/3}T^{2/3}+d^{1/4}m^{3/4}T^{3/4}\right)$\\
			\hline
		\end{tabular}
		\caption{Dynamic regret bounds of the \swofu~and the \bob~for different settings. Here $m$ is an upper bound for the 1-norm of all the actions in the combinatorial semi-bandit problem.}
		\label{table:application}
	\end{center}
\end{table}
\subsection{An Algorithmic Template}
The \swofu~and the \bob~developed in the previous sections can be viewed as an algorithmic template that allows us to extend the results from linear bandits to other bandit settings. Given a bandit setting $\texttt{A}$, we leverage the forgetting principle (similar to Section \ref{sec:swlse}), and first modify the reward estimator used in the stationary setting to a sliding-window estimator. We then incorporate it into the UCB algorithm to arrive at the corresponding \swofu~for the drifting environments. When the variation budget is known, we could optimally tune the window size to enjoy an optimal dynamic regret bound. To achieve low dynamic regret when the variation budget is unknown, we can proceed by plugging the \swofu~for $\texttt{A}$ into the \bob, \ie, line \ref{alg:template} of Algorithm \ref{alg:bob}, and custom-tailor the parameters (as those listed in eq. \eqref{eq:bob_parameters}) to accommodate the need of $\texttt{A}.$

We note that the power of this algorithmic template is indeed entailed by a salient property,  \ie, the dynamic regret of the \swofu~can be decomposed as ``dynamic regret of drift" + ``dynamic regret of uncertainty" (or eq. \eqref{eq:decomp}), that actually holds for a variety of bandit learning models in addition to linear models. In what follows, we shall derive the \swofu~as well as the parameters required by the \bob, \ie, similar to those defined in eq. (\ref{eq:bob_parameters}), for each of the above mentioned settings. 
\subsection{$d$-Armed Bandits}
The $d$-armed bandit problem in drifting environments was first studied by \citep{BGZ15}, who proposed Rexp3, an innovative and interesting variant of the EXP3 algorithm \citep{ABFS03}. When the underlying variation budget is known, their algorithm achieves the optimal dynamic regret bound. In this subsection, we provide an alternative derivation of the dynamic regret bound by our framework. 

In the $d$-armed bandits setting, every action set $D_t$ is comprised of $d$ actions $e_1,\ldots,e_d.$ The $i^{\text{th}}$ action $e_i$ has coordinate $i$ equals to 1 and all other coordinates equal to $0.$ Therefore, the reward of choosing action  $X_t=e_{I_t}$ in round $t$ is $Y_t=\langle X_t,\theta_t\rangle+\eta_t=\theta_{t}(I_t)+\eta_t,$ where $\theta_{t}(I_t)$ is the $I_t^{\text{th}}$ coordinate of $\theta_t.$ We again assume $|\langle x,\theta_t\rangle|\in[-1,1]$ for all $x\in D_t$ and all $t\in[T].$ Different than the linear bandit setting, we follow \citep{BGZ15,BGZ18} to define the variation budget with the infinity norm, \ie, $\sum_{t=1}^{T-1}\left\|\theta_{t+1}-\theta_t\right\|_{\infty}\leq B_T.$
For a window size $w,$ we also define $N_{t-1}(i)$ as the number of times that action $i$ is chosen within rounds $(t-w),\ldots,(t-1),$ \ie, for all $i\in[d],$ 
$N_{t-1}(i)=\sum_{s=1\wedge(t-w)}^{t-1}\bm{1}[X_t=e_i].$
Here $\bm{1}[\cdot]$ is the indicator function. Similar to the procedure in Section \ref{sec:swlse}, we set the regularization parameter $\lambda=0,$ and compute the sliding window least squares estimate $\hat{\theta}_t$ for $\theta_t$ in each round, \ie,
\begin{align}
\label{eq:sw21}
&\hat{\theta}_t = V_{t-1}^{*}\left( \sum^{t-1}_{s = 1\vee(t-w)}X_s Y_s \right),
\end{align}
where $V_{t-1}^*$ is Moore-Penrose pseudo-inverse of $V_{t-1}.$ We can also derive the error bound for the latent expected reward of every action $x\in D_t$ in any round $t.$
\begin{theorem}
	\label{theorem:mab_deviation}
	For any $t\in[T]$ and any $i\in[d],$ we have with probability at least $1-{1}/{T},$ 
$\left|e_i^\top ( \hat{\theta}_t - \theta_t)\right|\leq\sum^{t-1}_{s = 1\vee (t-w)}\left\|\theta_s-\theta_{s+1}\right\|_{\infty}+R\sqrt{{2\ln\left({2dT^2}\right)}}\left\|e_i\right\|_{V^{*}_{t-1}}.$	holds for all $x\in D_t.$
\end{theorem}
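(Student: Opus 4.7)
The plan is to mimic the proof of Theorem~\ref{theorem:sw_deviation}, while exploiting the combinatorial simplification afforded by the fact that every action is a standard basis vector. My starting point is the bias-plus-noise decomposition
\[
\hat{\theta}_t - \theta_t \;=\; V_{t-1}^{*} \sum_{s = 1\vee(t-w)}^{t-1} X_s X_s^\top (\theta_s - \theta_t) \;+\; V_{t-1}^{*} \sum_{s = 1\vee(t-w)}^{t-1} \eta_s X_s,
\]
obtained from eq.~\eqref{eq:sw21} after plugging in $Y_s = X_s^\top \theta_s + \eta_s$ and using that $V_{t-1}^{*} V_{t-1} e_i = e_i$ whenever $N_{t-1}(i) \geq 1$. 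The degenerate case $N_{t-1}(i)=0$ is handled separately by a short initialization phase that plays each arm once, after which $N_{t-1}(i)\geq 1$ uniformly.

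For the bias term, since $X_s = e_{I_s}$, the matrix $V_{t-1}$ equals $\mathrm{diag}(N_{t-1}(1), \ldots, N_{t-1}(d))$ and $V_{t-1}^{*}$ is its coordinate-wise inverse on the support. Hence
\[
e_i^\top V_{t-1}^{*} \sum_s X_s X_s^\top (\theta_s - \theta_t) \;=\; \frac{1}{N_{t-1}(i)} \sum_{s: I_s = i} \bigl(\theta_s(i) - \theta_t(i)\bigr).
\]
Each inner difference is controlled by the telescoping bound $|\theta_s(i) - \theta_t(i)| \leq \sum_{\tau=s}^{t-1}\|\theta_{\tau+1}-\theta_\tau\|_\infty \leq \sum_{\tau = 1\vee(t-w)}^{t-1} \|\theta_{\tau+1}-\theta_\tau\|_\infty$, and averaging over the $N_{t-1}(i)$ summands preserves this uniform upper bound, yielding the first term on the right-hand side of the theorem.

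For the noise term, the diagonal structure gives
\[
e_i^\top V_{t-1}^{*} \sum_s \eta_s X_s \;=\; \frac{1}{N_{t-1}(i)} \sum_{s=1\vee(t-w)}^{t-1} \bm{1}[I_s = i]\,\eta_s, \qquad \|e_i\|_{V_{t-1}^{*}} \;=\; \frac{1}{\sqrt{N_{t-1}(i)}},
\]
so the claim reduces to showing $\bigl|\sum_{s} \bm{1}[I_s=i]\,\eta_s\bigr| \leq R\sqrt{2\,N_{t-1}(i)\ln(2dT^2)}$ with high probability. Because the indicators $\bm{1}[I_s=i]$ are $\mathcal{F}_{s-1}$-measurable and the $\eta_s$ are sub-Gaussian martingale increments with variance proxy $R$, this is a standard sub-Gaussian martingale tail bound, or equivalently the scalar version of the self-normalized inequality of Theorem~2 in \citep{AYPS11}. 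Setting the per-pair failure probability to $1/(dT^3)$ and union-bounding over $t \in [T]$ and $i \in [d]$ delivers the overall failure probability $\leq 1/T$.

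The main obstacle is the noise concentration, because $N_{t-1}(i)$ is a random quantity shaped by the algorithm's adaptive past decisions, so a plain Hoeffding bound conditional on a deterministic number of samples does not apply. The self-normalized sub-Gaussian inequality circumvents this by natively handling the random effective sample size, producing a deviation that scales as $\sqrt{N_{t-1}(i)}$; this matches the $\|e_i\|_{V_{t-1}^{*}}$ factor exactly and loses only the mild $\sqrt{\ln(2dT^2)}$ factor required for the union bound.
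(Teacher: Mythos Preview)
Your proposal is correct and follows essentially the same route as the paper: the same bias-plus-noise decomposition from eq.~\eqref{eq:sw21}, the same exploitation of the diagonal structure of $V_{t-1}^{*}$ to reduce the bias term to a coordinate-wise telescoping average, and the same sub-Gaussian concentration plus union bound over $(t,i)$ for the noise term. The only minor differences are that the paper handles the degenerate case $N_{t-1}(i)=0$ by observing that the noise contribution then vanishes (rather than invoking an initialization phase, which would modify the algorithm and is not needed to prove the estimator bound as stated), and that the paper cites a fixed-sample sub-Gaussian tail bound (Corollary~1.7 of \citep{RH18}) rather than the self-normalized martingale inequality you invoke---your version is arguably the more careful choice given the adaptivity of $N_{t-1}(i)$, but the conclusion is the same.
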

The complete proof is provided in Section \ref{sec:theorem:mab_deviation} of the appendix. We can now follow the same principle in Section \ref{sec:swofu} by choosing in each round the action $X_t$ with the highest UCB, \ie,
\begin{align}
\label{eq:mab_sw_policy}
X_t=&\argmax_{x\in D_t}\left\{\langle x,\hat{\theta}_t\rangle+R\sqrt{{2\ln\left({2dT^2}\right)}}\left\|x\right\|_{V^{*}_{t-1}} \right\},
\end{align}
and arrive at the following regret upper bound for the \swofu.
\begin{theorem}
	\label{theorem:mab_sw_main}
	For the $d$-armed bandit setting, the dynamic regret of the \swofu~is upper bounded as $
	\R_T\left(\swofu\right)=\widetilde{O}\left(wB_T+\sqrt{d}T/\sqrt{w}\right).$ When $B_T~(>0)$ is known, by taking $w=\Theta\left(d^{1/3}T^{2/3}B_T^{-2/3}\right),$ the dynamic regret of the \swofu~is $\R_T\left(\swofu\right)=\widetilde{O}\left(d^{1/3}B_T^{1/3}T^{2/3}\right).$
	When $B_T$ is unknown, by taking $w=\Theta\left(d^{1/3}T^{2/3}\right),$ the dynamic regret of the \swofu~is $\R_T\left(\swofu\right)=\widetilde{O}\left(d^{1/3}B_TT^{2/3}\right).$
\end{theorem}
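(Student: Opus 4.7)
The plan is to mirror the template of Theorem \ref{theorem:sw_main}, specialized to the $d$-armed setting. First, combining the UCB selection rule \eqref{eq:mab_sw_policy} with the deviation bound of Theorem \ref{theorem:mab_deviation} and writing $\tilde\beta := R\sqrt{2\ln(2dT^2)}$, the standard optimism argument of Section \ref{sec:swofu_intuition} (mirroring inequalities \eqref{eq:sw14}--\eqref{eq:sw19}) yields the per-round bound
\[
\langle x_t^* - X_t,\, \theta_t\rangle \;\leq\; 2\sum_{s = 1\vee(t-w)}^{t-1}\|\theta_s - \theta_{s+1}\|_\infty \;+\; 2\tilde\beta\,\|X_t\|_{V_{t-1}^*}
\]
on a good event of probability at least $1 - 1/T$; on the complementary event the per-round regret is trivially bounded by $2$ and contributes only $O(1)$ to the expected cumulative regret. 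Summing the per-round bound over $t\in[T]$ then splits the dynamic regret into a \emph{drift} contribution and an \emph{uncertainty} contribution, paralleling eq.~\eqref{eq:decomp}.

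For the drift contribution I would swap the order of summation: each variation $\|\theta_s - \theta_{s+1}\|_\infty$ appears in at most $w$ outer summands, so the total is bounded by $w B_T$. For the uncertainty contribution I would exploit the diagonal structure $V_{t-1} = \mathrm{diag}(N_{t-1}(1),\ldots,N_{t-1}(d))$ of the $d$-armed case, which yields $\|e_i\|_{V_{t-1}^*} = 1/\sqrt{N_{t-1}(i)}$ whenever $N_{t-1}(i) > 0$ (any arm with no in-window pulls is played first by the UCB rule, contributing a lower-order $O(dT/w)$ term that is absorbed into the uncertainty bound). To bound $\sum_t 1/\sqrt{N_{t-1}(I_t)}$, I would partition $[T]$ into $\lceil T/w\rceil$ non-overlapping chunks of length $w$: within a chunk the per-arm counts are monotone, so $\sum_{k=1}^n 1/\sqrt k \leq 2\sqrt n$ combined with Cauchy--Schwarz $\sum_{i=1}^d \sqrt{n_i} \leq \sqrt{d\sum_i n_i} = \sqrt{dw}$ gives a per-chunk bound $O(\sqrt{dw})$; summed over chunks this totals $O(T\sqrt{d/w})$.

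Combining the two pieces gives $\R_T(\swofu) = \widetilde{O}(wB_T + \sqrt d\, T/\sqrt w)$. Balancing the two terms then selects $w = \Theta(d^{1/3} T^{2/3} B_T^{-2/3})$ and produces the optimal $\widetilde{O}(d^{1/3} B_T^{1/3} T^{2/3})$ when $B_T$ is known. When $B_T$ is unknown, substituting the conservative choice $w = \Theta(d^{1/3} T^{2/3})$ yields the claimed $\widetilde{O}(d^{1/3} B_T T^{2/3})$.

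The main obstacle I anticipate is the sliding-window complication in bounding $\sum_t \|X_t\|_{V_{t-1}^*}$: in the stationary $d$-armed case $N_{t-1}(I_t)$ is monotone along the successive pulls of any fixed arm, so the $\sum_k 1/\sqrt k \leq 2\sqrt n$ estimate applies globally. Under sliding windows, per-arm counts can decrease as older pulls drop out, invalidating any direct global application of the bound. The chunking device restores local monotonicity at the cost of a multiplicative $T/w$ factor, ultimately producing the $\sqrt d\, T/\sqrt w$ scaling that is notably tighter than the $dT/\sqrt w$ one would obtain by directly invoking the generic linear-bandit (Sherman--Morrison / elliptical-potential) argument used for Theorem \ref{theorem:sw_main}, reflecting the benign combinatorial structure of $d$-armed bandits.
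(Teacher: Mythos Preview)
Your proposal is correct and follows the same overall template as the paper: the optimism argument of Section~\ref{sec:swofu_intuition} specialized via Theorem~\ref{theorem:mab_deviation}, the drift term handled by swapping the summation order, and the uncertainty sum controlled by chunking $[T]$ into $\lceil T/w\rceil$ blocks of length $w$.

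The one genuine point of departure is how you bound the uncertainty sum $\sum_t \|X_t\|_{V_{t-1}^*}$. The paper's sketch simply says to reuse the linear-bandit machinery of Theorem~\ref{theorem:sw_main} (Cauchy--Schwarz, Lemma~\ref{lemma:sw2} via Sherman--Morrison, then the elliptical-potential Lemma~11 of \citep{AYPS11}), with the $\sqrt d$ improvement arising solely because the confidence width $\beta$ for the $d$-armed case is $\widetilde O(1)$ rather than $\widetilde O(\sqrt d)$. You instead exploit the diagonal structure directly: $\|e_{I_t}\|_{V_{t-1}^*}=1/\sqrt{N_{t-1}(I_t)}$, bound each sliding-window count below by the corresponding within-chunk count (which is monotone), and apply $\sum_{k=1}^{n}1/\sqrt k\leq 2\sqrt n$ plus Cauchy--Schwarz over arms. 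This is exactly the argument the paper deploys for the combinatorial semi-bandit case in the proof of Theorem~\ref{theorem:semi_sw_main} (with $m=1$), and it is more elementary than invoking Sherman--Morrison and the elliptical-potential lemma. Both routes land on the same $\widetilde O(\sqrt d\,T/\sqrt w)$ bound; yours is cleaner for this particular setting, while the paper's is the generic template that also covers non-diagonal designs. One small caution: when you say ``within a chunk the per-arm counts are monotone,'' make explicit that you mean the \emph{within-chunk} counts $\bar N_{t-1}(i)$ (which lower-bound the sliding-window counts $N_{t-1}(i)$), since the sliding-window counts themselves need not be monotone.
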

\begin{proof}{Proof Sketch.}
	The proof of this theorem is very similar to that of Theorem \ref{theorem:sw_main}, and is thus omitted. The key difference is that $\beta$ (defined in eq. (\ref{eq:sw_beta}) for the linear bandit setting) is now set to $R\sqrt{2\ln\left(2dT^2\right)},$ and this saves the extra $\sqrt{d}$ factor presented in eq. (\ref{eq:explicit_swucb_bd}). Hence the dynamic regret bound can be obtained accordingly.\halmos
\end{proof}
Comparing the results obtained in Theorem \ref{theorem:mab_sw_main} to the lower bound presented in \citep{BGZ15}, we can easily see that the dynamic regret bound is optimal when $B_T$ is known. When $B_T$ is unknown, we can implement the \bob~with the following parameters:
\begin{align}
\label{eq:mab_bob_parameters}
&H=\left\lfloor \left(dT\right)^{\frac{1}{2}}\right\rfloor,\Delta=\lceil\ln H\rceil,J=\left\{H^0,\left\lfloor H^{\frac{1}{\Delta}}\right\rfloor,\ldots, H\right\},Q=2H+4R\sqrt{H\ln (T/\sqrt{H})}.
\end{align}
The regret of the \bob~for the MAB setting is characterized as follows.
\begin{theorem}
	\label{theorem:mab_bob}
	The dynamic regret of the \bob~for the $d$-armed bandit setting is $\R_T\left(\bob\right)=\widetilde{O}\left(d^{1/3}B_T^{1/3}T^{2/3}+d^{1/4}T^{3/4}\right).$
\end{theorem}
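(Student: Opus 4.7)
\textbf{Proof plan for Theorem \ref{theorem:mab_bob}.} The plan is to mimic the template used to derive Theorem \ref{theorem:bob}, with the \texttt{SW-UCB}$_{d\text{-MAB}}$ rate replacing the linear-bandit rate. Concretely, I would first establish a meta-regret proposition that is the $d$-armed analogue of Proposition~\ref{prop:bob}, then plug in the specific parameters from eq.~\eqref{eq:mab_bob_parameters} and take a min over two regimes of $B_T$.

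\textbf{Step 1 (meta-regret decomposition).} Fix an arbitrary element $w^{\dag}\in J$. Decompose
\begin{equation*}
\R_T(\bob)\;=\;\underbrace{\Ex\Big[\sum_{i}\text{(optimal reward in block $i$) $-$ (reward of \swofu~with window $w^{\dag}$ on block $i$)}\Big]}_{\text{(A): per-block SW-UCB regret}}\;+\;\underbrace{\Ex\Big[\sum_i\text{(SW-UCB$(w^{\dag})$ reward) $-$ (\bob~reward)}\Big]}_{\text{(B): EXP3 meta-loss}}.
\end{equation*}
Because the blocks each have length $\le H$ and the \swofu~is restarted at each block, (A) is controlled block-wise by Theorem~\ref{theorem:mab_sw_main}; summing yields $(\text{A})=\widetilde O\!\left(w^{\dag}B_T+\sqrt{d}\,T/\sqrt{w^{\dag}}\right)$. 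For (B), apply the EXP3 regret bound to the meta-bandit whose arms are the $|J|$ window sizes and whose per-round payoff is the block reward normalized by $Q$ (plus $1/2$). By Lemma~\ref{lemma:bob} applied with $\nu=1$, that normalized payoff lies in $[0,1]$ uniformly over blocks with probability $1-2/T$; the standard EXP3 guarantee on $\lceil T/H\rceil$ rounds of a $(|J|)$-armed adversarial bandit then gives $(\text{B})=\widetilde O\big(Q\sqrt{|J|T/H}\big)$, with the contribution of the low-probability failure event absorbed into a lower-order $O(1)$ term (since $|Y_t|$ has sub-Gaussian tails). Thus
\begin{equation*}
\R_T(\bob)\;=\;\widetilde O\!\left(w^{\dag}B_T+\frac{\sqrt{d}\,T}{\sqrt{w^{\dag}}}+Q\sqrt{\frac{|J|T}{H}}\right)\qquad\text{for any $w^{\dag}\in J$.}
\end{equation*}

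\textbf{Step 2 (plug in parameters and split into two cases).} Substitute $H=\lfloor(dT)^{1/2}\rfloor$, $\Delta=\lceil\ln H\rceil$ (so $H^{1/\Delta}\le e$ and $|J|=\Delta+1=\widetilde\Theta(1)$), and $Q=\widetilde O(H)$, giving the third term $\widetilde O\bigl(H\sqrt{T/H}\bigr)=\widetilde O(\sqrt{HT})=\widetilde O(d^{1/4}T^{3/4})$. The unconstrained minimizer of $wB_T+\sqrt{d}T/\sqrt{w}$ is $w^*=\Theta(d^{1/3}T^{2/3}B_T^{-2/3})$.

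Case (i): $w^*\le H$, equivalently $B_T\ge d^{-1/4}T^{1/4}$. Choose $w^{\dag}=\mathrm{clip}_J(w^*)\in J$. Because $J$ is a geometric grid with ratio $H^{1/\Delta}=O(1)$, we have $w^{\dag}=\Theta(w^*)$, so the first two terms collapse to the tuned \swofu~rate $\widetilde O(d^{1/3}B_T^{1/3}T^{2/3})$.

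Case (ii): $w^*>H$, equivalently $B_T<d^{-1/4}T^{1/4}$. The best available window in $J$ is $w^{\dag}=H$. Then $w^{\dag}B_T=\sqrt{dT}\,B_T\le \sqrt{dT}\cdot d^{-1/4}T^{1/4}=d^{1/4}T^{3/4}$, and $\sqrt{d}T/\sqrt{H}=\sqrt{d}T/(dT)^{1/4}=d^{1/4}T^{3/4}$.

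In both cases the three terms sum to $\widetilde O\!\left(d^{1/3}B_T^{1/3}T^{2/3}+d^{1/4}T^{3/4}\right)$, proving the theorem.

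\textbf{Main obstacle.} The conceptual heavy lifting has already been done for the linear case; the new ingredient is that Theorem~\ref{theorem:mab_sw_main} replaces the $dT/\sqrt{w}$ term by $\sqrt{d}\,T/\sqrt{w}$, which shifts the optimal window size to $w^*\asymp d^{1/3}T^{2/3}B_T^{-2/3}$ and correspondingly changes the two break-point thresholds (on $w^*$ versus $H$, and on $B_T$). The delicate point is therefore arithmetic rather than structural: one must verify that with $H=\sqrt{dT}$ the second regime degrades gracefully to $d^{1/4}T^{3/4}$ and that the EXP3 meta-cost $\widetilde O(\sqrt{HT})$ matches this same $d^{1/4}T^{3/4}$ rate, so that the choice of $H$ is indeed dimension-aware-optimal. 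A minor technical concern is to confirm that Lemma~\ref{lemma:bob} is applied correctly in the $d$-armed setting (its statement only uses $|\langle x,\theta_t\rangle|\le 1$ and sub-Gaussian noise, both of which are satisfied here), so that the EXP3 normalization by $Q$ is valid with high probability.
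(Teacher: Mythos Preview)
Your proposal is correct and follows exactly the approach the paper intends: the paper explicitly states that the proof of Theorem~\ref{theorem:mab_bob} ``is very similar to Theorem~\ref{theorem:bob}'s, and it is thus omitted,'' and you have carried out precisely that adaptation, replacing the linear-bandit rate $dT/\sqrt{w}$ by the $d$-armed rate $\sqrt{d}\,T/\sqrt{w}$ from Theorem~\ref{theorem:mab_sw_main} and recomputing the threshold $B_T\gtrless d^{-1/4}T^{1/4}$ and the resulting $d^{1/4}T^{3/4}$ term accordingly. Your arithmetic in both regimes and your handling of the EXP3 meta-cost via Lemma~\ref{lemma:bob} are all consistent with the paper's template.
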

The proof of the theorem is very similar to Theorem \ref{theorem:bob}'s, and it is thus omitted.
\subsection{Generalized Linear Bandits}
For the generalized linear bandits model, we adopt the setup in \citep{FCAS10,LLZ17}: it is essentially the same as the linear bandit setting except that the decision set is time invariant, \ie, $D_t=D$ for all $t\in[T],$ and the reward of choosing action $X_t\in D$ is $Y_t=\mu\left(\langle X_t,\theta_t\rangle\right)+\eta_t.$

Let $\dot{\mu}(\cdot)$ and $\ddot{\mu}(\cdot)$ denote the first derivative and second derivative of $\mu(\cdot)$, respectively, we follow \citep{FCAS10} to make the following assumption. 
\begin{assumption}\label{ass:glm}
	i) There exists a set of $d$ actions $a_1,\ldots,a_d\in D$ such that the minimal eigenvalue of $\sum_{i=1}^da_ia_i^{\top}$ is $\lambda_0~(>0).$ ii) The link function $\mu(\cdot):\Re\to\Re$ is strictly increasing, continuously differentiable, Lipschitz with constant $k_{\mu},$ and we define $c_{\mu}=\inf_{x\in D,\theta\in\Re^d:\|\theta\|\leq S}\dot\mu\left(\langle x,\theta\rangle\right).$ iii) There exists $Y_{\max}>0$ such that for any $t\in[T],$ $Y_t\in\left[0,Y_{\max}\right].$
\end{assumption}
Similar to the procedure in Section \ref{sec:swlse}, we compute the maximum quasi-likelihood estimate $\hat{\theta}_t$ for $\theta_t$ in each round $t\in[T]$ by solving the equation
\begin{align}
\label{eq:glm}
\sum^{t-1}_{s = 1\vee(t-w)}\left(Y_s-\mu\left(\left\langle X_s,\hat{\theta}_t\right\rangle\right)\right)X_s=0.
\end{align}
Defining $\beta=2k_{\mu}Y_{\max}\sqrt{2d\ln(w)\ln(2dT^2)\left(3+2\ln\left(1+2{L^2}/{\lambda_0}\right)\right)}/c_{\mu},$ we can also derive the deviation inequality type bound for the latent expected reward of every action $x\in D_t$ in any round $t.$ {\blue Here, as pointed out in \cite{FauryRAC21}, we need to assume that $\|\hat{\theta}_t\|\leq S$ holds for every $t\in[T]$. Otherwise, we need to perform a projection step similar to \cite{FCAS10,FauryRAC21}.}
\begin{theorem}
	\label{theorem:glm_sw_deviation}
For any $t\in[T],$ we have with probability at least $1-1/T,$ 
$\left|\mu\left(x^\top \hat{\theta}_t\right) - \mu\left(x^{\top}\theta_t\right)\right|\leq \frac{k^2_{\mu}L}{c_{\mu}} \sum^{t-1}_{s = 1\vee (t-w)}\left\|\theta_s-\theta_{s+1}\right\|_2+\beta\left\|x\right\|_{V^{-1}_{t-1}}$
	holds for all $x\in D_t.$ 
\end{theorem}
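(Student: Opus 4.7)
The plan is to parallel the proof of Theorem~\ref{theorem:sw_deviation}, using the mean value theorem (MVT) to linearize the generalized linear structure around $\theta_t$. Starting from the defining equation~\eqref{eq:glm} and substituting $Y_s = \mu(\langle X_s,\theta_s\rangle) + \eta_s$, I obtain
\begin{equation*}
\sum_{s=1\vee(t-w)}^{t-1}\bigl[\mu(\langle X_s,\hat{\theta}_t\rangle) - \mu(\langle X_s,\theta_t\rangle)\bigr]X_s
= \underbrace{\sum_{s}\bigl[\mu(\langle X_s,\theta_s\rangle) - \mu(\langle X_s,\theta_t\rangle)\bigr] X_s}_{=:\,D}
\;+\; \underbrace{\sum_{s}\eta_s X_s}_{=:\,N}.
\end{equation*}
Applying the MVT to each summand on the left produces intermediate scalars $\bar{\xi}_s$ with $\mu(\langle X_s,\hat{\theta}_t\rangle) - \mu(\langle X_s,\theta_t\rangle) = \dot{\mu}(\bar{\xi}_s)\langle X_s,\hat{\theta}_t-\theta_t\rangle$, so the left-hand side rewrites as $H_{t-1}(\hat{\theta}_t-\theta_t)$, where $H_{t-1} := \sum_s \dot{\mu}(\bar{\xi}_s) X_s X_s^\top$ satisfies $c_\mu V_{t-1} \preceq H_{t-1} \preceq k_\mu V_{t-1}$.

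A second MVT application controls the target directly: $|\mu(x^\top \hat{\theta}_t) - \mu(x^\top \theta_t)| \leq k_\mu |x^\top (\hat{\theta}_t - \theta_t)| = k_\mu |x^\top H_{t-1}^{-1}(D+N)|$. Then Cauchy--Schwarz in the $H_{t-1}^{-1}$ inner product gives $k_\mu \|x\|_{H_{t-1}^{-1}}\bigl(\|D\|_{H_{t-1}^{-1}} + \|N\|_{H_{t-1}^{-1}}\bigr)$, and the sandwich $c_\mu V_{t-1}\preceq H_{t-1}$ (hence $H_{t-1}^{-1}\preceq V_{t-1}^{-1}/c_\mu$) converts each weighted norm into its $V_{t-1}^{-1}$-analogue at the cost of a $1/\sqrt{c_\mu}$ factor, supplying the $1/c_\mu$ prefactor once both conversions are combined. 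For the noise $N$, I would invoke the self-normalized martingale deviation inequality (Theorem~1 of \citep{AYPS11}) with a union bound over $t \in [T]$ to obtain $\|N\|_{V_{t-1}^{-1}} \leq \widetilde{O}(\sqrt{d\log(wT)})$ with probability at least $1-1/T$; Assumption~1 guarantees $V_{t-1}\succeq \lambda_0 I$ after a $d$-round warm-up on $a_1,\ldots,a_d$, and folding the Lipschitz constant $k_\mu$, the boundedness $|\eta_s|\le Y_{\max}$, and the $\log(1+2L^2/\lambda_0)$ conditioning term together reproduces the stated $\beta$.

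For the drift $D$, I would apply the MVT a third time to each summand to obtain $D = \sum_s \dot{\mu}(\tilde{\xi}_s) X_s X_s^\top (\theta_s - \theta_t)$ for intermediate scalars $\tilde{\xi}_s$, and then replicate the telescoping argument from Theorem~\ref{theorem:sw_deviation} --- rewriting $\theta_s - \theta_t = -\sum_{u=s}^{t-1}(\theta_{u+1}-\theta_u)$ and swapping the order of summation --- while absorbing the bounded scalar weights $\dot{\mu}(\tilde{\xi}_s)\in[c_\mu,k_\mu]$ into a surrogate design matrix sandwiched between $c_\mu V_{t-1}$ and $k_\mu V_{t-1}$. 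Combined with $\|x\|_2 \leq L$, the outer $k_\mu$ from the second MVT, and the $1/c_\mu$ from the norm conversion, this produces the $(k_\mu^2 L/c_\mu)\sum_s\|\theta_s-\theta_{s+1}\|_2$ coefficient.

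The main technical obstacle will be the drift step. In the linear case the drift decomposes cleanly as $\sum_s X_s X_s^\top (\theta_s - \theta_t)$ and slots directly into a telescoping identity against $V_{t-1}^{-1}$; here the MVT weights $\dot{\mu}(\tilde{\xi}_s)$ reshape the effective design matrix, and the telescoping comparison must be routed through a matrix only sandwiched between $c_\mu V_{t-1}$ and $k_\mu V_{t-1}$. I will need to be careful that the ratio $k_\mu/c_\mu$ enters only once as a single prefactor rather than compounding per summand. A secondary subtlety is that equation~\eqref{eq:glm} contains no explicit ridge regularization, so invertibility of $V_{t-1}$ and the $\log(1+2L^2/\lambda_0)$ factor inside $\beta$ must be sourced from Assumption~1 rather than from a design-time $\lambda$, a minor accounting step that I would handle by folding the initial $d$ exploratory rounds into lower-order terms.
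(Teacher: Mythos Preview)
Your proposal is correct and follows essentially the same route as the paper's proof: linearize via the mean value theorem, split into a drift term and a noise term, control the noise with a self-normalized concentration inequality, and telescope the drift while absorbing the $\dot\mu$ weights into a surrogate design matrix sandwiched between $c_\mu V_{t-1}$ and $k_\mu V_{t-1}$. The only cosmetic differences are that the paper packages the linearization and noise bound by invoking Proposition~1 of \citep{FCAS10} (an integral-form MVT that yields $G_{t-1}$ in place of your $H_{t-1}$ and directly produces the stated $\beta$), and that the paper telescopes the drift \emph{first} and then applies the MVT to each consecutive difference $\mu(\langle X_s,\theta_{p+1}\rangle)-\mu(\langle X_s,\theta_p\rangle)$, rather than applying MVT once to $\mu(\langle X_s,\theta_s\rangle)-\mu(\langle X_s,\theta_t\rangle)$ as you do; both orderings lead to the same $k_\mu$ factor from $\lambda_{\max}\bigl(V_{t-1}^{-1}\sum_{s}\dot\mu(\cdot)X_sX_s^\top\bigr)\leq k_\mu$.
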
 
\begin{proof}{Proof Sketch.}
	The proof is a consequence of Proposition 1 of \citep{FCAS10} and Theorem \ref{theorem:sw_deviation}. Please refer to Section \ref{sec:theorem:glm_sw_deviation} of the appendix for the complete proof.
	\halmos
\end{proof}
We can now follow the same principle in Section \ref{sec:swofu} to design the \swofu. Note that in order for $V_{t-1}$ to be invertible for all $t,$ our algorithm should select the actions $a_1,\ldots,a_d$ every $w$ rounds for some window size $w.$ For each of the remaining round $t,$ it chooses the action $X_t$ with the highest UCB, \ie,
\begin{align}
\label{eq:glm_sw_policy}
X_t=&\argmax_{x\in D_t}\left\{\langle x,\hat{\theta}_t\rangle+\beta\left\|x\right\|_{V^{*}_{t-1}} \right\},
\end{align}
and arrive at the following regret upper bound.
\begin{theorem}
	\label{theorem:glm_sw_main}
For the drifting generalized linear bandit setting, the dynamic regret of the \swofu~is upper bounded as $\R_T\left(\swofu\right)=\widetilde{O}\left(wB_T+dT/\sqrt{w}\right).$ When $B_T~(>0)$ is known, by taking $w=\Theta\left((dT)^{2/3}B_T^{-2/3}\right),$ the dynamic regret of the \swofu~is $\R_T\left(\swofu\right)=\widetilde{O}\left(d^{2/3}B_T^{1/3}T^{2/3}\right).$ When $B_T$ is unknown, by taking $w=\Theta\left((dT)^{2/3}\right),$ the dynamic regret of the \swofu~is $
	\R_T\left(\swofu\right)=\widetilde{O}\left(d^{2/3}B_TT^{2/3}\right).$
\end{theorem}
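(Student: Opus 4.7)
The plan is to mirror the argument for Theorem \ref{theorem:sw_main} in the linear case, now built on top of the generalized linear deviation bound from Theorem \ref{theorem:glm_sw_deviation}. First, I would separate the $T$ rounds into two groups: the \emph{forced exploration} rounds, in which the algorithm plays one of $a_1,\ldots,a_d$ to ensure that $V_{t-1}$ eventually becomes positive definite, and the \emph{UCB rounds}, in which $X_t = \argmax_{x\in D}\{\langle x,\hat{\theta}_t\rangle + \beta\|x\|_{V^{*}_{t-1}}\}$. Since each of the $d$ special actions is played at most once per window of length $w$, the forced exploration contributes at most $O(dT/w)$ total regret, each such round incurring constant per-round regret by the assumptions on $\mu$ and $\|\langle x,\theta_t\rangle\|\le 1$. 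Because $dT/w\le dT/\sqrt{w}$ for $w\ge 1$, this piece is absorbed into the second term of the bound.

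Second, for the UCB rounds I would replicate the optimism chain in Section \ref{sec:swofu_intuition}. Exploiting the Lipschitz property of $\mu$ (constant $k_\mu$), the UCB selection rule, and Theorem \ref{theorem:glm_sw_deviation}, the per-round regret satisfies, with probability at least $1-1/T$,
\begin{align*}
\mu(\langle x_t^*,\theta_t\rangle)-\mu(\langle X_t,\theta_t\rangle) \leq \frac{2k_\mu^2 L}{c_\mu}\sum_{s=1\vee(t-w)}^{t-1}\|\theta_s-\theta_{s+1}\|_2 + 2\beta\,\|X_t\|_{V^{-1}_{t-1}}.
\end{align*}
Summing over $t$, the drift term telescopes: each $\|\theta_s-\theta_{s+1}\|_2$ is counted in at most $w$ sliding windows, yielding a contribution of $O(w B_T)$.

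Third, for the uncertainty term, I would invoke the Sherman--Morrison argument already described in the sketch of Theorem \ref{theorem:sw_main}. Partitioning the horizon into $\lceil T/w\rceil$ blocks of length $w$, one can bound $\sum_t \|X_t\|_{V^{-1}_{t-1}}^2$ via a determinant ratio applied block-by-block, giving $\widetilde{O}(dT/w)$. By Cauchy--Schwarz, $\sum_t \|X_t\|_{V^{-1}_{t-1}} = \widetilde{O}(T\sqrt{d/w})$, and with $\beta = \widetilde{O}(\sqrt{d})$ from Theorem \ref{theorem:glm_sw_deviation}, the uncertainty contribution is $\widetilde{O}(dT/\sqrt{w})$. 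Combining both pieces gives $\R_T(\swofu)=\widetilde{O}(wB_T + dT/\sqrt{w})$. Optimizing $w$: the choice $w=\Theta((dT)^{2/3}B_T^{-2/3})$ balances the two terms to yield $\widetilde{O}(d^{2/3}B_T^{1/3}T^{2/3})$ when $B_T$ is known, and $w=\Theta((dT)^{2/3})$ yields $\widetilde{O}(d^{2/3}B_T T^{2/3})$ when $B_T$ is unknown.

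The main obstacle I foresee is the careful handling of the pseudo-inverse $V^{*}_{t-1}$ during the initial rounds, before each basis direction $a_1,\ldots,a_d$ has been forced once; on those rounds the elliptical-potential style argument does not yet apply. Since this transient phase lasts at most $O(dw)$ rounds with at most constant per-round regret, it adds a lower-order term that is dominated by $dT/\sqrt{w}$. A minor care is also needed for the $1/T$ failure probability from Theorem \ref{theorem:glm_sw_deviation}: on the complement event the regret is trivially $O(T)$, which adds a constant to the expected bound and does not change the asymptotic rate. Everything else is a direct transcription of the proof of Theorem \ref{theorem:sw_main} with the linear confidence radius replaced by its generalized linear counterpart.
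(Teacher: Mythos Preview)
Your proposal is correct and follows essentially the same approach as the paper. The paper's own proof sketch simply states that the argument is identical to that of Theorem~\ref{theorem:sw_main}, with the only new ingredient being the $\widetilde{O}(dT/w)$ regret from the forced exploration of $a_1,\ldots,a_d$ every $w$ rounds, which is dominated by $\widetilde{O}(dT/\sqrt{w})$; you identify exactly this difference and handle it the same way, while supplying more of the routine details (the optimism chain via Theorem~\ref{theorem:glm_sw_deviation}, the telescoping of the drift term, the blockwise elliptical-potential bound, and the treatment of the failure event) than the paper itself does.
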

\begin{proof}{Proof Sketch.}
	The proof of this theorem is similar to that of Theorem \ref{theorem:sw_main}, and is thus omitted. The only difference is that we need to include the regret contributed by selecting actions $a_1,\ldots,a_d$ every $w$ rounds. But these sums to $\widetilde{O}\left(dT/w\right),$ which is dominated by the term $\widetilde{O}\left(dT/\sqrt{w}\right).$ Hence the dynamic regret bounds can be obtained similarly as the linear bandit setting.\halmos
\end{proof}
We can now implement the \bob~with the same set of parameters as eq. (\ref{eq:bob_parameters}), except that $Q$ is set to $H\cdot Y_{\max},$ \ie,
\begin{align}
\label{eq:glm_bob_parameters}
&H=\left\lfloor \left(dT\right)^{\frac{1}{2}}\right\rfloor,\Delta=\lceil\ln H\rceil,J=\left\{H^0,\left\lfloor H^{\frac{1}{\Delta}}\right\rfloor,\ldots, H\right\},Q=2H\cdot Y_{\max}.
\end{align}
 This is because the total rewards of each block is deterministically bounded by $[-H\cdot Y_{\max},H\cdot Y_{\max}].$ The dynamic regret bound when $B_T$ is unknown thus follows.
\begin{theorem}
	\label{theorem:glm_bob}
	The dynamic regret bound of the \bob~for the drifting generalized linear bandit setting is $\R_T\left(\bob\right)=\widetilde{O}\left(d^{2/3}B_T^{1/3}T^{2/3}+d^{1/2}T^{3/4}\right).$
\end{theorem}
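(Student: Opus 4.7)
The plan is to mirror the proof of Theorem~\ref{theorem:bob} for the linear setting, with only minor adjustments to accommodate the generalized linear model. Concretely, I would first establish a GLM analogue of Proposition~\ref{prop:bob}, namely
\[
\R_T(\bob) = \widetilde{O}\!\left(w^{\dag}B_T + \frac{dT}{\sqrt{w^{\dag}}} + Q\sqrt{\frac{|J|\,T}{H}}\right),
\]
where $w^{\dag}$ is the best window size in $J$ and $Q = 2H\cdot Y_{\max}$. The first two terms come from applying Theorem~\ref{theorem:glm_sw_main} to each of the $\lceil T/H\rceil$ blocks, viewing each block as an independent drifting GLM instance on which \swofu~is restarted with window $w^{\dag}$. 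The third term bounds the cost of using EXP3 over the $|J| = \Delta+1$ candidate window sizes. Here the normalization is simpler than in the linear case: since $Y_t \in [0,Y_{\max}]$ deterministically, the per-block cumulative reward lies in $[0, H\cdot Y_{\max}]$ with probability one, so after shifting and dividing by $Q = 2H\cdot Y_{\max}$ the normalized rewards lie in $[0,1]$ surely, and the standard EXP3 regret bound of $O(\sqrt{(\Delta+1)\ln(\Delta+1)\cdot\lceil T/H\rceil})$ applied to rewards of scale $Q$ gives exactly the third term. This dispenses with the need for anything analogous to Lemma~\ref{lemma:bob}.

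Next I would substitute the parameter choices from \eqref{eq:glm_bob_parameters}, namely $H = \lfloor (dT)^{1/2}\rfloor$, $\Delta = \lceil\ln H\rceil$, and $J = \{H^0, \lfloor H^{1/\Delta}\rfloor,\ldots, H\}$, and split into two cases depending on whether the (unknown) optimal window size $w^* = \lfloor (dT)^{2/3}B_T^{-2/3}\rfloor$ from Theorem~\ref{theorem:glm_sw_main} fits inside $J$.

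In the regime $w^* \le H$, i.e.\ $B_T \ge d^{-1/2}T^{1/4}$, the geometric grid $J$ contains an element $w^{\dag}$ with $w^{\dag} \le w^* \le H^{1/\Delta}\,w^{\dag} = O(w^{\dag})$ since $H^{1/\Delta} = O(1)$. Plugging $w^{\dag} = \Theta(w^*)$ into the first two terms reproduces the $\widetilde{O}(d^{2/3}B_T^{1/3}T^{2/3})$ bound from Theorem~\ref{theorem:glm_sw_main}, while $Q\sqrt{|J|T/H} = \widetilde{O}(H\sqrt{T/H}) = \widetilde{O}(\sqrt{HT}) = \widetilde{O}(d^{1/2}T^{3/4})$. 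In the regime $w^* > H$, i.e.\ $B_T < d^{-1/2}T^{1/4}$, the best choice in $J$ is $w^{\dag} = H$, giving $w^{\dag}B_T + dT/\sqrt{w^{\dag}} = \widetilde{O}(HB_T + dT/\sqrt{H}) = \widetilde{O}(d^{1/2}T^{1/2}B_T + d^{1/2}T^{3/4})$, and since $B_T < d^{-1/2}T^{1/4}$ the first term is dominated by $d^{1/2}T^{3/4}$, matching the claimed bound.

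The main obstacle, as in the linear case, is bookkeeping rather than a new idea: one must verify that Theorem~\ref{theorem:glm_sw_main}'s per-block bound remains valid after restarting within a block that may straddle many changes in $\theta_t$, and that the $H^{1/\Delta} = O(1)$ slack from rounding $w^*$ to the nearest element of $J$ does not blow up the first two terms. Both are handled exactly as in the proof of Theorem~\ref{theorem:bob} (Section~\ref{sec:theorem:bob} of the appendix). The only GLM-specific deviation is the deterministic reward bound replacing the high-probability normalization, which actually simplifies the argument.
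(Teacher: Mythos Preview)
Your proposal is correct and follows exactly the approach the paper intends; indeed the paper omits the proof entirely, stating only that it is similar to that of Theorem~\ref{theorem:bob}. One small bookkeeping slip: you quote $H = \lfloor (dT)^{1/2}\rfloor$ from \eqref{eq:glm_bob_parameters} but then compute $\sqrt{HT} = d^{1/2}T^{3/4}$ and $dT/\sqrt{H} = d^{1/2}T^{3/4}$, which require $H = \lfloor d\,T^{1/2}\rfloor$ as in \eqref{eq:bob_parameters}---the display in \eqref{eq:glm_bob_parameters} appears to be a typo, since the surrounding text says ``the same set of parameters as eq.~(\ref{eq:bob_parameters})'' and only $H = dT^{1/2}$ yields the claimed $d^{1/2}T^{3/4}$ term.
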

The proof of the theorem is similar to Theorem \ref{theorem:bob}'s, and it is thus omitted.
\subsection{Combinatorial Semi-Bandits}
Finally, we consider the drifting combinatorial semi-bandit problem. For ease of presentation, we use $X(i)$ to denote the $i^{\text{th}}$ coordinate of a vector $X.$ Following the setup in Kveton et al. \citep{KWAS15}, an instance of combinatorial semi-bandit is represented by the tuple $(E, \mathcal{E}, \{P_t\}^T_{t=1}),$ where the ground set $E$ consist of $d$ items, and ${\cal E}$ 
is a family of indicator vectors of subsets of $E$. 
Each $P_t$ is a latent distribution on the reward vector $W_t = (W_t(1),\ldots W_t(d))$ on each and every item $i\in E$ in round $t\in[T].$  The DM only knows that $W_t(i)$ belongs to $[0,1]$ for each $i\in[d]$ and $t\in[T]$, but she does not know $\theta_t(i) = \mathbb{E}[W_t(i)]$ for any $i\in[d]$ and $t\in[T].$ We can thus know from Lemma 1.8 of Rigollet and H\"utter \citep{RH18} that $W_t(i)-\theta_t(i)$ is $R=1/2$ sub-Gaussian for all $t\in[T]$ and $i\in[d]$. The sequence $\{P_t\}^T_{t=1}$ are generated by an oblivious adversary before the online process begins.

In each round $t,$ a reward vector $W_t$ is sampled according to the latent distribution $P_t$. Then, the DM pulls an action $X_t\in {\cal E}_t$, and earns a reward $Y_t=\langle X_t,W_t\rangle=\sum_{i\in E} X_t(i) W_t(i)$ that corresponds to the items indicated by $X_t$. Under the semi-bandit feedback model, the DM observes the realized rewards $\{W_t(i) : X_t(i) = 1\}$ for the indicated items, but she does not observe $W_t(i)$ for $X_t(i) = 0$. The DM desires to minimize the dynamic regret $\mathbb{E}\left[\sum^T_{t=1}\max_{x_t^*\in \mathcal{E}}\langle x_t^*-X_t,\theta_t\rangle  \right].$
Similar to the $d$-armed bandit setting, we define the variation budget $B_T$ with the infinity norm:
$\sum^{T-1}_{t = 1} \|\theta_{t+1} - \theta_t\|_\infty \leq B_T.$ For the subsequent discussion, we denote $m = \max_{X\in {\cal E}} \sum_{i\in E}X(i)$ as the maximum arm size of the underlying instance.

We first show a lower bound for this setting. 
\begin{theorem}
	\label{theorem:semi_lower_bound}
	Let $(d, m, T, B_T)$ be a tuple that satisfies inequalities $d\geq 2m \geq 2$, $T\geq 1$, $m/d \leq B_T\leq T m /d$. For any non-anticipatory policy, there exists a drifting combinatorial bandit instance $(E, \mathcal{E}, \{P_t\}^T_{t=1}),$ with $d$ items, maximum arm size $m$, and variation budget $B_T$ such that the dynamic regret in $T$ rounds is $\Omega(d^{1/3}m^{2/3}B_T^{1/3}T^{2/3}).$
\end{theorem}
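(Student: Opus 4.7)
The plan is to mimic the blockwise lower-bound construction used in the proof of Theorem \ref{theorem:lower_bound}, but with the stationary linear-bandit minimax bound replaced by the stationary minimax lower bound for combinatorial semi-bandits from \citep{KWAS15}, which asserts that any policy incurs $\Omega(\sqrt{dmH})$ regret on a suitable worst-case stationary semi-bandit instance over $H$ rounds with ground set size $d$ and maximum arm size $m$.

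I partition the horizon into $\lceil T/H\rceil$ blocks of length $H$ (the final block possibly shorter), for an $H$ to be chosen at the end. In each block, nature independently samples a hard stationary instance constructed as follows: because $d\geq 2m$, the ground set $E$ admits a partition into $d/m$ disjoint groups of size $m$; let $\mathcal{E}$ be the collection of indicator vectors of these groups. Nature draws an index $g^*\in[d/m]$ uniformly at random and sets $\theta_t(i)=1/2+\epsilon$ for items $i$ in group $g^*$ and $\theta_t(i)=1/2$ otherwise, with a gap $\epsilon$ to be tuned. Since $g^*$ is independent across blocks, observations from prior blocks carry no information about the current $g^*$, and a standard KL/Le Cam argument (Pinsker combined with a Bretagnolle--Huber-style decomposition, identical to the stationary lower-bound proof of \citep{KWAS15}) applied blockwise delivers a per-block dynamic regret of $\Omega(\sqrt{dmH})$ provided $\epsilon=\Theta(\sqrt{d/(mH)})$.

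Next I would verify the variation budget. Since $\theta_t$ is constant within a block, the only variation occurs at the $\lceil T/H\rceil-1$ block boundaries, each contributing at most $\epsilon$ in infinity norm; the total is $O(\epsilon T/H)$. Requiring $\epsilon T/H\leq B_T$ and substituting $\epsilon=\Theta(\sqrt{d/(mH)})$ forces $H=\Omega\left((dT^2/(mB_T^2))^{1/3}\right)$. Choosing $H=\Theta\left((dT^2/(mB_T^2))^{1/3}\right)$ then yields total dynamic regret $(T/H)\cdot\Omega(\sqrt{dmH})=\Omega(T\sqrt{dm/H})=\Omega(d^{1/3}m^{2/3}B_T^{1/3}T^{2/3})$, as claimed. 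The parameter-range hypotheses $d\geq 2m\geq 2$ and $m/d\leq B_T\leq Tm/d$ are precisely what is needed to ensure $1\leq H\leq T$ and $\epsilon\leq 1/2$ (so that $\theta_t(i)\in[0,1]$ for all $i$ and $t$, consistent with the reward support $W_t(i)\in[0,1]$ stipulated in the model).

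The main technical obstacle is the careful instantiation of the stationary $\Omega(\sqrt{dmH})$ lower bound \emph{within} a single block while the policy possesses arbitrary history accumulated in the preceding blocks. This is resolved by exploiting the independence of the $g^*$'s across blocks: conditionally on the full history of the preceding blocks, the posterior over the current $g^*$ coincides with the uniform prior, so the usual divergence-based argument goes through verbatim, and the expected per-block regret is genuinely $\Omega(\sqrt{dmH})$. A subsidiary step is to derandomize the construction (nature's random draws of the $g^*$'s) into a deterministic adversarial sequence, which costs only a constant factor by an averaging/Markov argument. Everything else reduces to the routine tuning of $H$ and $\epsilon$ performed above.
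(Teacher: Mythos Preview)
Your proposal is correct and ends up at the same hard instance as the paper --- the ground set partitioned into $d/m$ disjoint groups of size $m$, with one randomly chosen ``good'' group re-randomized every $H$ rounds --- but the two proofs factor the argument differently. The paper invokes the drifting $K$-armed lower bound of \citep{BGZ18} as a black box with $K=d/m$: it makes the items within each group perfectly correlated, so that pulling a combinatorial arm is informationally equivalent to pulling one arm in a $K$-armed instance while the reward (and hence the regret) is exactly $m$ times larger, and the bound drops out of $m\cdot\Omega\bigl((KB_T)^{1/3}T^{2/3}\bigr)=\Omega\bigl(d^{1/3}m^{2/3}B_T^{1/3}T^{2/3}\bigr)$. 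You instead take the stationary $\Omega(\sqrt{dmH})$ semi-bandit bound of \citep{KWAS15} as the black box and carry out the block-length tuning, the variation-budget accounting, and the derandomization by hand. The paper's reduction is shorter, since all of the latter is absorbed into the cited theorem; your route is more self-contained and makes the mechanics of the construction explicit. One small caveat: the hypothesis $B_T\geq m/d$ does not by itself force $H\leq T$ (that would require $B_T\geq\sqrt{d/(mT)}$), so your final sentence is not quite right; however, when $B_T<\sqrt{d/(mT)}$ a single block of length $T$ already gives the stationary bound $\Omega(\sqrt{dmT})\geq\Omega(d^{1/3}m^{2/3}B_T^{1/3}T^{2/3})$, so the edge case is harmless.
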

The complete proof is presented in Section \ref{sec:theorem:semi_lower_bound} of the appendix. For a window size $w,$ we define $N_{t-1}(i)$ as the number of times that coordinate $i$ of the chosen action is set to $1$ within rounds $(t-w),\ldots,(t-1),$ \ie, for all $i\in[d],$ 
$N_{t-1}(i)=\sum^{t-1}_{s = 1\vee (t-w)} \mathbf{1}[X_s(i) = 1].$ Here $\bm{1}[\cdot]$ is the indicator function. In each round $t$, the DM also maintains the sliding-window estimates for each coordinate $i\in[d]$ of $\theta_t$:
$$\hat{\theta}_{t}(i) = \frac{\sum^{t-1}_{s = 1\vee (t-w)} W_s(i) \cdot \mathbf{1}[X_s(i) = 1]}{\max\{N_{i, t-1},1\}}. $$
Thanks to the semi-bandit feedback, the outcome $W_s(i)$ is observed when $X_s(i) = 1$, so $\hat{\theta}_{t, i}$ can be constructed from the observations in the previous $w$ rounds. We can thus reuse the Theorem \ref{theorem:mab_deviation} derived for the $d$-armed bandit case:
\begin{theorem}
	\label{theorem:semi_deviation}
	For all $t\in[T]$ and all $i\in[d],$ we have with probability at least $1-{1}/{T},$ 
	$\left|\hat{\theta}_t(i) - \theta_t(i)\right|\leq\sum^{t-1}_{s = 1\vee (t-w)}\left\|\theta_s-\theta_{s+1}\right\|_{\infty}+4R \sqrt{\frac{\ln(2 d T^2)}{N_{t-1}(i) + 1}},$
	holds for all $x\in D_t.$
\end{theorem}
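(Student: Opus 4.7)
The plan is to mirror the proof of Theorem \ref{theorem:mab_deviation} from the $d$-armed bandit case, exploiting that under semi-bandit feedback, whenever $X_s(i) = 1$ the DM directly observes an unbiased sample $W_s(i)$ of $\theta_s(i)$. Hence estimating $\theta_t(i)$ decouples across coordinates and reduces to $d$ parallel one-dimensional mean-estimation problems over the sliding window, which sidesteps the full matrix machinery used in Theorem \ref{theorem:sw_deviation}.

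The first step is to decompose, when $N_{t-1}(i) \geq 1$,
\begin{align*}
\hat{\theta}_t(i) - \theta_t(i) = \frac{1}{N_{t-1}(i)}\sum_{s = 1\vee(t-w)}^{t-1}\mathbf{1}[X_s(i)=1]\bigl(\theta_s(i) - \theta_t(i)\bigr) + \frac{1}{N_{t-1}(i)}\sum_{s = 1\vee(t-w)}^{t-1}\mathbf{1}[X_s(i)=1]\bigl(W_s(i) - \theta_s(i)\bigr),
\end{align*}
splitting the error into a drift-induced bias and a stochastic noise term. The bias is controlled by a telescoping triangle inequality: for every $s$ in the window, $|\theta_s(i) - \theta_t(i)| \leq \sum_{\tau = s}^{t-1}\|\theta_\tau - \theta_{\tau+1}\|_\infty \leq \sum_{\tau = 1\vee(t-w)}^{t-1}\|\theta_\tau - \theta_{\tau+1}\|_\infty$, so after averaging the first term is dominated by the first summand in the claim.

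For the noise, I would observe that, with $Z_s := \mathbf{1}[X_s(i)=1](W_s(i)-\theta_s(i))$, the indicator is $\mathcal{H}_s$-measurable and $W_s(i)-\theta_s(i)$ is conditionally mean-zero and $R$-sub-Gaussian with $R=1/2$, so $(Z_s)$ forms a martingale difference sequence whose cumulative conditional variance proxy is $R^2 N_{t-1}(i)$. A peeling argument then conditions on the value $n\in\{1,\ldots,T\}$ of $N_{t-1}(i)$ and applies the Azuma-Hoeffding inequality to obtain $\Pr(|\sum_s Z_s| \geq 2R\sqrt{n\ln(2dT^2)}\mid N_{t-1}(i)=n)\leq 1/(dT^3)$; a union bound over the at most $dT^2$ triples $(t,i,n)$ then gives total failure probability at most $1/T$. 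Dividing by $n$, absorbing the $\sqrt{2}$ into the constant, and using $1/n \leq 2/(n+1)$ for $n\geq 1$ yields the $4R\sqrt{\ln(2dT^2)/(N_{t-1}(i)+1)}$ form stated; the boundary case $N_{t-1}(i)=0$ is handled directly since then $\hat{\theta}_t(i):=0$ makes the error equal to $|\theta_t(i)|\leq 1 \leq 4R\sqrt{\ln(2dT^2)}$ for $T\geq 2$. The main subtlety, and the step I expect to require the most care, is ensuring that the random stopping structure of $N_{t-1}(i)$ under the adaptive choices $\{X_s\}$ does not spoil concentration; the peeling union bound over $n$ handles this cleanly provided one is careful to invoke Azuma-Hoeffding on the full martingale $(Z_s)$ restricted to a deterministic window rather than on a randomly subsampled sequence.
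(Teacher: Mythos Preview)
Your proposal is correct and follows essentially the same route as the paper: both decompose $\hat{\theta}_t(i)-\theta_t(i)$ into a drift term handled by your telescoping bound (the paper's inequality (\ref{eq:d_arm_adv})) and a stochastic term handled by sub-Gaussian concentration plus a union bound to accommodate the random count $N_{t-1}(i)$ (the paper's inequality (\ref{eq:d_arm_stoc})). The paper phrases the latter as a Hoeffding bound uniformly over all contiguous intervals $(q,t)$ of the pull-enumerated noise sequence rather than your Azuma-plus-peeling over $(t,i,n)$, but these are interchangeable devices for exactly the adaptivity subtlety you flag.
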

The complete proof is presented in Section \ref{sec:theorem:semi_deviation}. Following the rationale of UCB algorithm for stochastic combinatorial semi-bandit \citep{KWAS15} as well as that of Section \ref{sec:swofu}, we consider the \swofu~which selects a combinatorial action $X_t$ with highest UCB in each round $t$, \ie, 
$$
\max_{X\in {\cal E}_t} \left\{\sum_{i\in E} X(i) \cdot \left[\hat{\theta}_{t, i}  + 4R\sqrt{\frac{\ln (2dT^2)}{N_{t-1}(i)+1}}\right]\right\}.
$$
Denoting $m:=\max_{t\in[T],X\in \mathcal{E}_t}\|X\|_1$, we can now arrive at the following regret upper bound.
\begin{theorem}
	\label{theorem:semi_sw_main}
	For any window size $w\geq d/m,$ the dynamic regret of the \swofu~for the drifting combinatorial semi-bandit setting is upper bounded as
	$\R_T\left(\swofu\right)=\widetilde{O}\left(wmB_T+{\sqrt{dm}T}/{\sqrt{w}}\right).$
	When $B_T<mT/d,$ is known, by taking $w=\Theta\left(d^{1/3}m^{-1/3}T^{2/3}B_T^{-2/3}\right),$ the dynamic regret of the \swofu~is $
	\R_T\left(\swofu\right)=\widetilde{O}\left(d^{1/3}m^{2/3}B_T^{1/3}T^{2/3}\right).$
	When $B_T$ is unknown, by taking $w=\Theta\left(d^{1/3}m^{-1/3}T^{2/3}\right),$ the dynamic regret of the \swofu~is $
	\R_T\left(\swofu\right)=\widetilde{O}\left(d^{1/3}m^{2/3}B_TT^{2/3}\right).$
\end{theorem}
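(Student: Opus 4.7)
The plan is to follow the same three-step template used in the proofs of Theorems \ref{theorem:sw_main} and \ref{theorem:mab_sw_main}, namely (i) derive a per-round regret inequality via the optimism principle using the confidence bound of Theorem \ref{theorem:semi_deviation}, (ii) bound the resulting variation-driven term via telescoping, and (iii) bound the resulting confidence-radius term via a block decomposition that circumvents the non-monotonicity introduced by the sliding window. Once the $\widetilde O(wmB_T + \sqrt{dm}\,T/\sqrt{w})$ bound is in place, the stated choices of $w$ follow from elementary calculus.

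For step (i), condition on the $1-1/T$ event of Theorem \ref{theorem:semi_deviation} and set $U_{t,i}=\hat\theta_t(i)+4R\sqrt{\ln(2dT^2)/(N_{t-1}(i)+1)}$. The theorem yields, simultaneously for all $i\in E$, both $\theta_t(i)\le U_{t,i}+\sum_{s=1\vee(t-w)}^{t-1}\|\theta_s-\theta_{s+1}\|_\infty$ and $U_{t,i}\le\theta_t(i)+\sum_{s=1\vee(t-w)}^{t-1}\|\theta_s-\theta_{s+1}\|_\infty+8R\sqrt{\ln(2dT^2)/(N_{t-1}(i)+1)}$. Combining these with the UCB selection inequality $\langle x_t^*,U_t\rangle\le\langle X_t,U_t\rangle$ and using $\|X_t\|_1\le m$ yields the per-round regret
\[
\langle x_t^*-X_t,\theta_t\rangle\le 2m\sum_{s=1\vee(t-w)}^{t-1}\|\theta_s-\theta_{s+1}\|_\infty+8R\sum_{i\in E}X_t(i)\sqrt{\frac{\ln(2dT^2)}{N_{t-1}(i)+1}}.
\]
Summing the first term over $t$ and reversing the order of summation (each $\|\theta_s-\theta_{s+1}\|_\infty$ appears in at most $w$ windows) gives the variation contribution $\widetilde O(wmB_T)$.

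The main obstacle is step (iii): under the sliding window, $N_{t-1}(i)$ is not monotone in $t$, so the standard ``pigeonhole over pull counts'' trick does not apply directly. I would resolve this by partitioning $[T]$ into $\lceil T/w\rceil$ disjoint consecutive blocks of length $w$; for any $t$ in block $B_k$ the sliding window covers all earlier rounds of that block, hence $N_{t-1}(i)\ge N^{(k)}_{t-1}(i):=\sum_{s\in B_k,\,s<t}\mathbf{1}[X_s(i)=1]$. The within-block counts \emph{are} monotone, so $\sum_{t\in B_k}X_t(i)/\sqrt{N^{(k)}_{t-1}(i)+1}\le 2\sqrt{M_k(i)}$ where $M_k(i)=\sum_{t\in B_k}X_t(i)$. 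Since $\sum_i M_k(i)\le wm$, Cauchy--Schwarz gives $\sum_i\sqrt{M_k(i)}\le\sqrt{d\sum_iM_k(i)}\le\sqrt{dwm}$, and it is precisely the hypothesis $w\ge d/m$ that prevents this Cauchy--Schwarz bound from being loose (otherwise fewer than $d$ coordinates could be hit per block). Summing over the $\lceil T/w\rceil$ blocks yields the confidence contribution $\widetilde O(\sqrt{dm}\,T/\sqrt{w})$.

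Adding the two contributions gives $\R_T(\texttt{SW-UCB})=\widetilde O(wmB_T+\sqrt{dm}\,T/\sqrt{w})$ on the high-probability event, and the low-probability complement contributes only $O(1)$ after bounding the per-round regret by the trivial constant. Balancing the two terms by setting their derivatives with respect to $w$ equal produces $w=\Theta(d^{1/3}m^{-1/3}T^{2/3}B_T^{-2/3})$, which gives $\widetilde O(d^{1/3}m^{2/3}B_T^{1/3}T^{2/3})$ under known $B_T<mT/d$; for unknown $B_T$, substituting the $B_T$-free choice $w=\Theta(d^{1/3}m^{-1/3}T^{2/3})$ produces the claimed $\widetilde O(d^{1/3}m^{2/3}B_TT^{2/3})$ bound. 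The proofs of the $B_T$-based lower bound and of the corresponding \bob~result would then follow exactly the linear-bandit blueprint, with $Q$ replaced by $Hm$ in light of $|\langle x,\theta_t\rangle|\le m$.
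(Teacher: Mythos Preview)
Your proposal is correct and follows essentially the same path as the paper: identical per-round optimism inequality, identical telescoping for the drift term $(\ddagger_{\text{SCB}})$, and the same block decomposition with the Cauchy--Schwarz step $\sum_i\sqrt{M_k(i)}\le\sqrt{dmw}$ for the confidence term $(\dagger_{\text{SCB}})$. One minor remark: in the paper the hypothesis $w\ge d/m$ is used not to tighten Cauchy--Schwarz but to absorb an additive $+d$ per block (arising from their $\max\{\bar N,1\}$ normalization rather than your $N^{(k)}_{t-1}+1$) into the leading $\sqrt{dmw}$ term; in your formulation that additive term never appears, so the hypothesis is in fact not needed for the bound you derived.
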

The complete proof is presented in Section \ref{sec:theorem:semi_sw_main} of the appendix. When $B_T$ is unknown, we can implement the \bob~with the following parameters:
\begin{align}
	\label{eq:semi_bob_parameters}
	&H=\left\lfloor \left(dT\right)^{\frac{1}{2}}m^{-\frac{1}{2}}\right\rfloor,\Delta=\lceil\ln H\rceil,J=\left\{H^0,\left\lfloor H^{\frac{1}{\Delta}}\right\rfloor,\ldots, H\right\},Q=2H\cdot m
\end{align}
This is because the total rewards of each block is deterministically bounded by $[-H\cdot m,H\cdot m].$
The dynamic regret bound of the \bob~for the combinatorial semi-bandit setting is characterized as follows.
\begin{theorem}
	\label{theorem:semi_bob}
	The dynamic regret of the \bob~for the drifting combinatorial semi-bandit setting is  $\R_T\left(\bob\right)=\widetilde{O}\left(d^{1/3}m^{2/3}B_T^{1/3}T^{2/3}+d^{1/4}m^{3/4}T^{3/4}\right).$
\end{theorem}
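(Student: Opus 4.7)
{Proof Proposal.}
My plan is to mirror the argument used for Theorem~\ref{theorem:bob}, adapted to the dynamic regret bound of Theorem~\ref{theorem:semi_sw_main} and the parameter choices in eq.~\eqref{eq:semi_bob_parameters}. First I would establish a semi-bandit analogue of Proposition~\ref{prop:bob}, namely
\[
\R_T(\bob) \;=\; \widetilde O\!\left( w^{\dag} m B_T \;+\; \frac{\sqrt{dm}\,T}{\sqrt{w^{\dag}}} \;+\; Q\sqrt{\frac{|J|T}{H}}\right),
\]
where $w^{\dag}$ denotes the best window size in $J$ (clipped into $[1,H]$). The first two terms come from applying Theorem~\ref{theorem:semi_sw_main} on each of the $\lceil T/H\rceil$ blocks (note that the $mB_T$ per-block variation bound telescopes into $mB_T$ overall) and using the fact that restarting on each block only costs an additive amount absorbed into the $\widetilde O(\cdot)$. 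The third term comes from treating each candidate window size in $J$ as an expert and using the standard EXP3 regret bound of $O(Q\sqrt{|J|\lceil T/H\rceil})$; this requires that the per-block rewards, normalized by $Q$, lie in $[0,1]$ with high probability, which is immediate from the deterministic bound $|\sum_t Y_t| \leq H m$ on each block that justifies setting $Q = 2Hm$.

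Next I would substitute $H = \lfloor (dT/m)^{1/2}\rfloor$, $\Delta = \lceil \ln H\rceil$, $|J| = \Delta+1 = \widetilde O(1)$, and $Q = 2Hm$. The EXP3 overhead then becomes
\[
Q\sqrt{\frac{|J|T}{H}} \;=\; \widetilde O\!\left(m\sqrt{HT}\right) \;=\; \widetilde O\!\left(d^{1/4} m^{3/4} T^{3/4}\right).
\]
I would then split into two cases according to whether the optimal window $w^{*}=\Theta(d^{1/3}m^{-1/3}T^{2/3}B_T^{-2/3})$ from Theorem~\ref{theorem:semi_sw_main} is at most $H$ or not. Equivalently, the threshold is $B_T \lessgtr d^{-1/4}m^{1/4}T^{1/4}$.

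In the ``moderate/large variation'' case $w^{*}\leq H$, the geometric grid $J=\{H^{j/\Delta}\}_{j=0}^{\Delta}$ contains a point $w^{\dag}$ within a multiplicative factor of $H^{1/\Delta}=\widetilde\Theta(1)$ of $w^{*}$. Plugging this into the first two terms recovers the optimal SW-UCB rate $\widetilde O(d^{1/3}m^{2/3}B_T^{1/3}T^{2/3})$. In the ``small variation'' case $w^{*}>H$, the best available window is $w^{\dag}=H$, and a direct substitution gives $HmB_T + \sqrt{dm}T/\sqrt{H} = \widetilde O( (dTm)^{1/2}B_T + d^{1/4}m^{3/4}T^{3/4})$; using the case hypothesis $B_T < d^{-1/4}m^{1/4}T^{1/4}$, the first piece is dominated by the second. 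Taking the maximum of the two cases and adding the EXP3 overhead yields the claimed $\widetilde O(d^{1/3}m^{2/3}B_T^{1/3}T^{2/3}+d^{1/4}m^{3/4}T^{3/4})$.

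The main obstacle, in my view, is nothing algebraic but rather cleanly importing the meta-bandit machinery: one has to verify that the per-block normalization remains valid (here easier than in the linear case since rewards are in $[0,m]$ almost surely, so no concentration lemma analogous to Lemma~\ref{lemma:bob} is needed) and that the semi-bandit SW-UCB regret bound of Theorem~\ref{theorem:semi_sw_main} holds uniformly across blocks so that the EXP3 competitive guarantee against the best fixed $w\in J$ can be composed with it to obtain the stated dynamic regret. Beyond these verifications, the argument is essentially a parameter substitution into the template already executed for Theorem~\ref{theorem:bob}. \halmos
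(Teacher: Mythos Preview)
Your proposal is correct and follows essentially the same route as the paper's own proof: decompose into the \swofu~regret under the best $w^{\dag}\in J$ plus the EXP3 overhead, substitute $H=\lfloor(dT/m)^{1/2}\rfloor$ and $Q=2Hm$, and split into the two cases $B_T\gtrless d^{-1/4}m^{1/4}T^{1/4}$. Your observation that the per-block normalization is deterministic here (so no analogue of Lemma~\ref{lemma:bob} is needed) is exactly right and is in fact a slight streamlining of the paper's presentation, which still carries the probability conditioning from the linear case even though it is vacuous for combinatorial semi-bandits.
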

The complete proof is presented in Section \ref{sec:theorem:semi_bob}.

\section{Numerical Experiments}\label{sec:numerical}
As a complement to our theoretical results, we conduct numerical experiments on synthetic datasets and the CPRM-12-001: On-Line Auto Lending dataset provided by the Center for Pricing and Revenue Management at Columbia University to compare the dynamic regret performances of the \swofu~and the \bob~with several existing non-stationary bandit algorithms.

\subsection{Experiments on Synthetic Dataset}
For synthetic dataset, in Section \ref{sec:numerical_BGZ_eg}, we first evaluate the growth of dynamic regret when $T$ increases. We follow the setup of \citep{BGZ18} for fair comparisons. Then, in Section \ref{sec:numerical_piecewise_lin}, we fix $T=10^5$, and evaluate the behavior of the algorithms across rounds.

\subsubsection{The Trend of Dynamic Regret with Varying $T$}\label{sec:numerical_BGZ_eg}
We consider a 2-armed bandit setting, and we vary $T$ from $3\times10^4$ to $2.4\times 10^5$ with a step size of $3\times10^4.$ We set $\theta_t$ to be the following sinusoidal process, \ie, $\forall t\in[T],$ $\theta_t=\begin{pmatrix}
0.5+0.3\sin\left({5B_T\pi t}/{T}\right),
0.5+0.3\sin\left(\pi+{5B_T\pi t}/{T}\right)
\end{pmatrix}^{\top}.$
The total variation of the $\theta_t$'s across the whole time horizon is upper bounded by $\sqrt{2}B_T.$ We also use i.i.d. normal distribution with $R=0.1$ for the noise terms. 
\paragraph{Known Constant Variation Budget.} We start from the known constant variation budget case, \ie, $B_T=1,$ to measure the regret growth of the two optimal algorithms, \ie,  the optimally tuned (\ie, knowing $B_T$) \swofu~and the modified EXP3.S algorithm \citep{BGZ15}, with respect to the total number of rounds. The log-log plot is shown in Fig. \ref{fig:const_var}. From the plot, we can see that the regret of \swofu~is only about $20\%$ of the regret of EXP3.S algorithm.
\paragraph{Unknown Time-Dependent Variation Budget.} We then turn to the more realistic time-dependent variation budget case, \ie, $B_T=T^{1/3}.$ As the modified EXP3.S algorithm does not apply to this setting, we compare the performances of the obliviously tuned (\ie, not knowing $B_T$) \swofu~and the \bob. The log-log plot is shown in Fig. \ref{fig:inc_var}. From the results, we verify that the slope of the regret growth of both algorithms roughly match the established results, and the regret of \bob's~is much smaller than that of the \swofu's.
\begin{figure}[!ht]
	\subfigure[Log-log plot for known $B_T=O(1).$]{	\label{fig:const_var}\includegraphics[width=8cm,height=5.8cm]{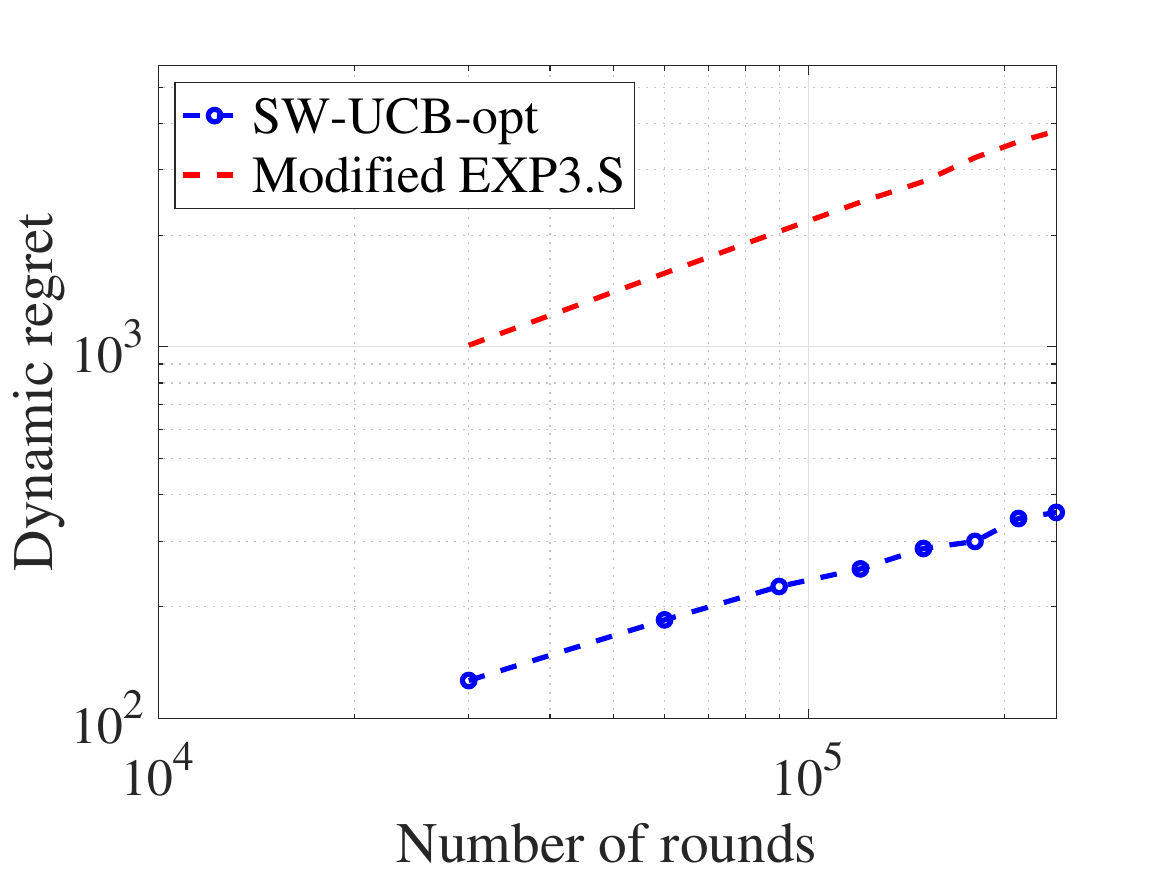}}
	\subfigure[Log-log plot for unknown $B_T=O(T^{1/3}).$]{	\label{fig:inc_var}\includegraphics[width=8cm,height=5.8cm]{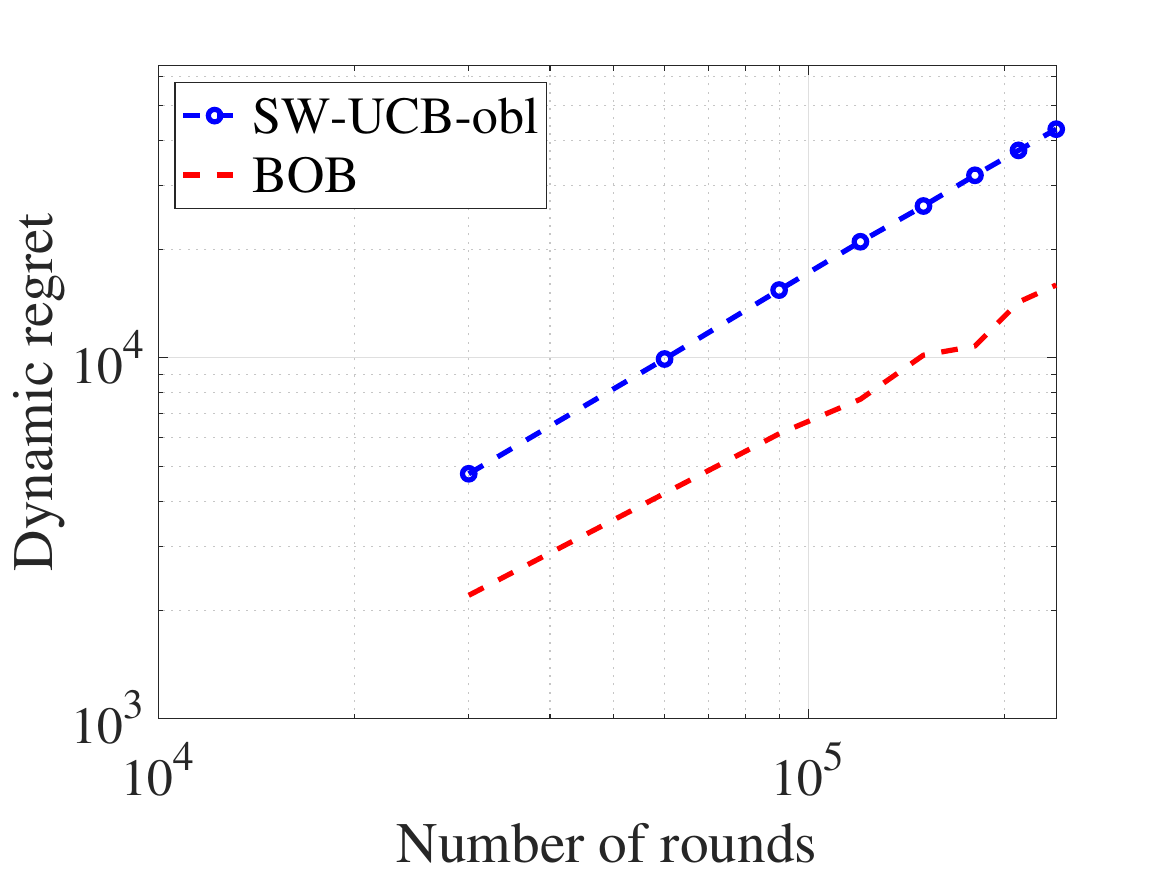}}
	\caption{Results for gradually change environment with 2 arms}
\end{figure}

\subsubsection{A Further Study on the Algorithms' Behavior}\label{sec:numerical_piecewise_lin}
We provide additional numerical evaluation, by considering \emph{piecewise linear instances}, where the reward vector $\theta_t\in \mathbb{R}^d$ is a randomly generated piecewise linear function of $t$. To generate such an instance, we first set $T = 10^5$, and then we randomly sample 30 time points in $\tau_1, \tau_2, \ldots, \tau_{30}\in \{2, \ldots, T-1\}$ without replacement. We further denote $\tau_ 0 =1, \tau_{31} = T$. After that, we randomly sample 32 random unit length vectors 
$v_0, \ldots, v_{31} \in \Re^d$. Finally, for each $t \in [T]$, we define $\theta_t$ as the linear interpolation between $v_s, v_{s+1}$, where $\tau_s\leq t\tau_{s+1}$. More precisely, we have $\theta_t = ((\tau_{s+1} - t) v_{s} + (t - \tau_s) v_{s+1}) / (\tau_{s+1} - \tau_s)$. Note that the random reward in each period can be negative. 

In what follows, we first evaluate the performance of the algorithms by \citep{BGZ18} as well as our algorithms in a 2-armed bandit piece-wise linear instance. Then, we evaluate the performance of our algorithms in a linear bandit piece-wise linear instance, where $d=5$, and each $D_t$ is a random subset of 40 unit length vectors in $\Re^d$. We do not evaluate the algorithms by \citep{BGZ18} in the second instance, since the algorithms by \citep{BGZ18} are only designed for the non-stationary $K$-armed bandit setting. For each instance, each algorithm is evaluated 50 times.
\paragraph{Two armed bandits.} We first evaluate the performance of the modified EXP.3S in \citep{BGZ18} as well as the performance of the \swofu, \bob in a randomly generated 2-armed bandit instance. Fig \ref{fig:2-arm_reward} illustrates the average cumulative reward earned by each algorithm in the 50 trials, and Fig \ref{fig:2-arm_regret} depicts the average dynamic regret incurred by each algorithm in the 50 trials. In Figs \ref{fig:2-arm_reward}, \ref{fig:2-arm_regret}, shorthand SW-UCB-opt is the \swofu, where $B_T$ is known and $w = w^\text{opt}$ is set to further optimized the log factors of the dynamic regret bound (see Appendix \ref{sec:numerical_details} for the expression of $w^\text{opt}$). Shorthand $\text{EXP3.S}$ stands for the modified EXP3.S algorithm by \citep{BGZ18}, where $B_T$ is known and the window size is set to optimized the dynamic regret bound. Shorthand $\text{BOB}$ stands for the \bob. Shorthand SW-UCB-obl is the \swofu, where $B_T$ is not known, and $w = w^\text{obl}$ is obliviously set (see Appendix \ref{sec:numerical_details} for the expression of $w^\text{obl}$). Finally, shorthand $\text{UCB}$ stands for the UCB algorithm by \citep{AYPS11}, which is applicable to the stationary $K$-armed bandit problem. Note that $B_T$ is known to SW-UCB-opt, $\text{EXP3.S}$, but not to $\text{BOB}$, SW-UCB-obl, $\text{UCB}$.

Overall, we observe that SW-UCB-opt is the better performing algorithm when $B_T$ is known, and BOB is the best performing when $B_T$ is not known. It is evident from Fig \ref{fig:2-arm_reward} that SW-UCB-opt, $\text{EXP3.S}$ and $\text{BOB}$ are able to adapt to the change in the reward vector $\theta_t$ across time $t$. We remark that $\text{BOB}$, which does not know $B_T$, achieves a comparable amount of cumulative reward to $\text{EXP3.S}$, which does know $B_T$, across time. It is also interesting to note that UCB, which is designed for the stationary setting, fails to converge (or even to achieve a non-negative total reward) in the long run, signifying the need of an adaptive UCB algorithm in a non-stationary setting.
\begin{figure}[!ht]
	\subfigure[Cumulative reward]{	\label{fig:2-arm_reward}\includegraphics[width=8cm,height=5.33cm]{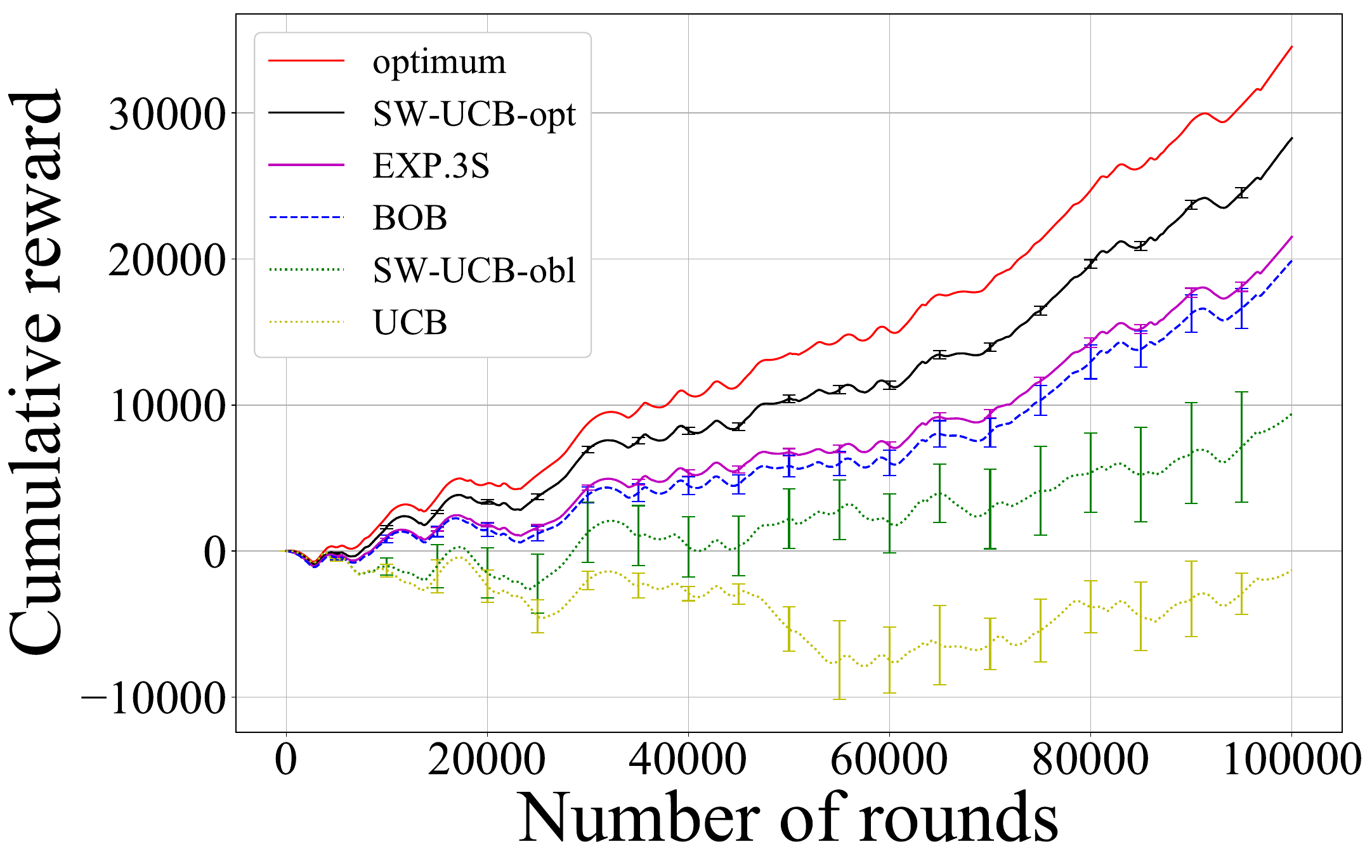}}
	\subfigure[Dynamic regret]{	\label{fig:2-arm_regret}\includegraphics[width=8cm,height=5.33cm]{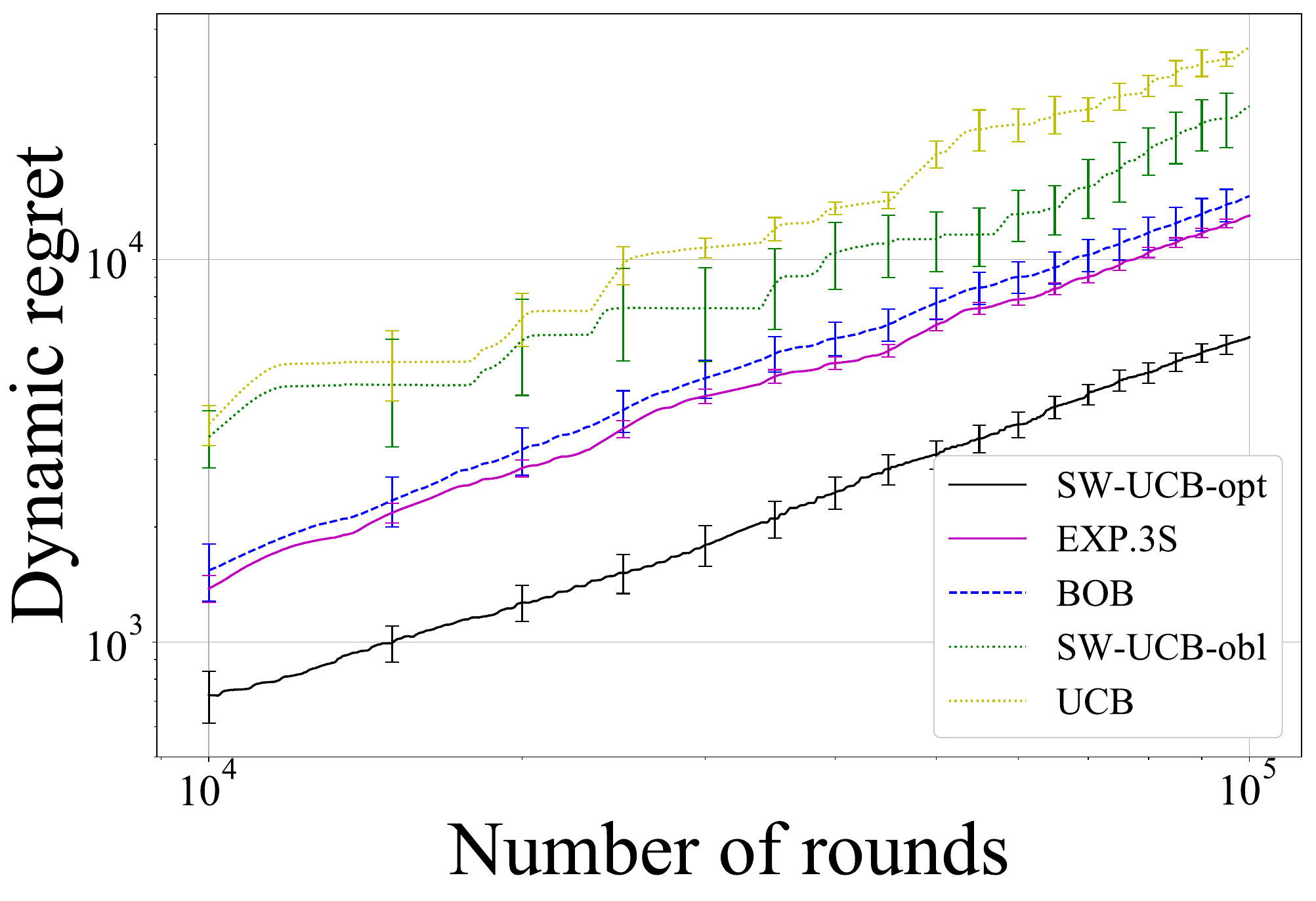}}
	\caption{Results for piecewise linear environment with 2 arms}
\end{figure}
\paragraph{Linear bandits. }Next, we move to the linear bandit case, and we consider the performance of  SW-UCB-opt, SW-UCB-obl, $\text{BOB}$ and $\text{UCB}$, as illustrated in Figs \ref{fig:lin-arm_reward}, \ref{fig:lin-arm_regret}. While the performance of the algorithms ranks similarly to the previous 2-armed bandit case, we witness that UCB, which is designed for the stationary setting, has a much better performance in the current case than the 2-armed case. We surmise that the relatively larger size of the action space $D_t$ here allows UCB to choose an action that performs well even when the reward vector is changing.
\begin{figure}[!ht]
	\subfigure[Cumulative reward]{	\label{fig:lin-arm_reward}\includegraphics[width=8cm,height=5.33cm]{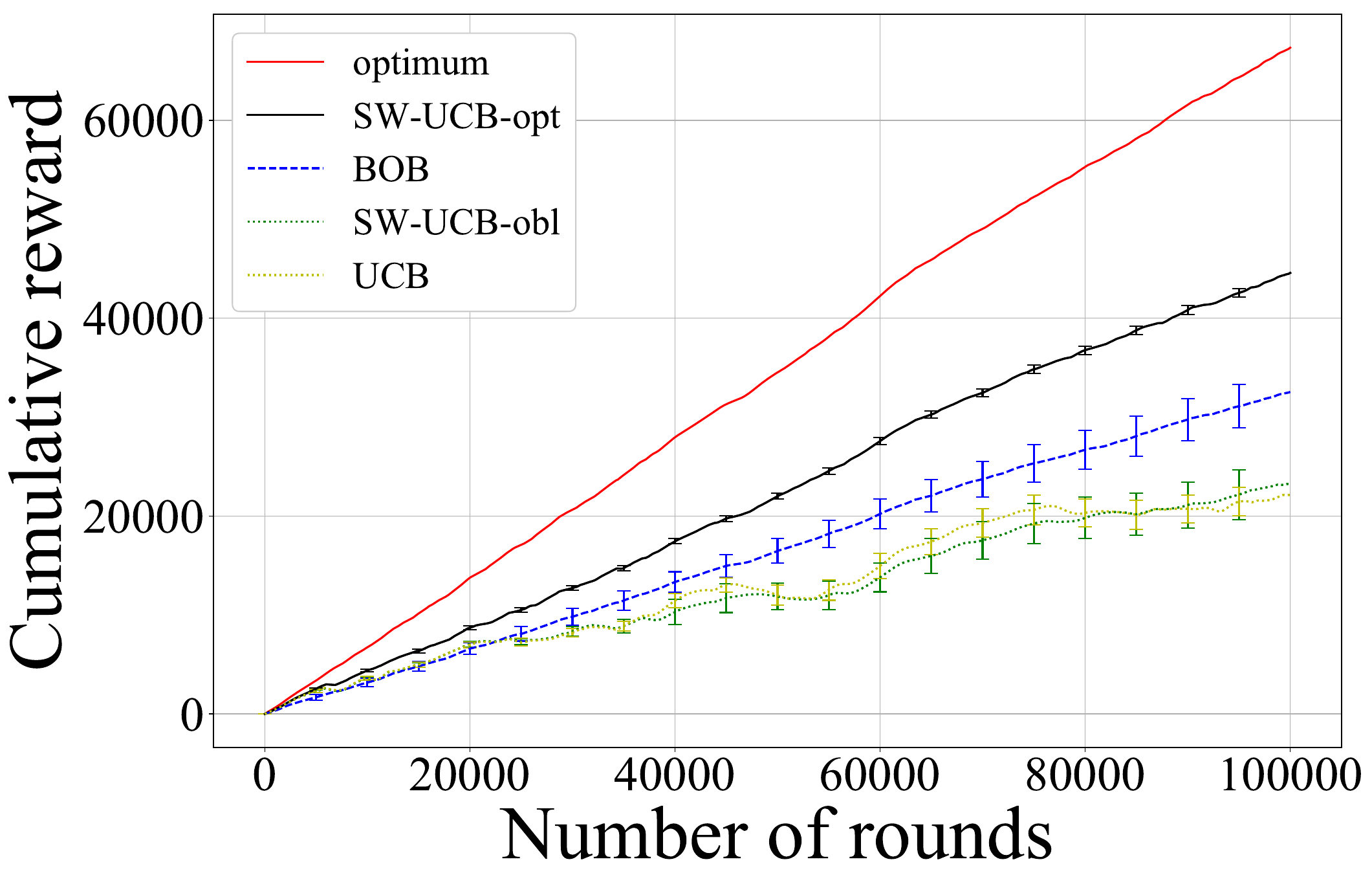}}
	\subfigure[Dynamic Regret]{	\label{fig:lin-arm_regret}\includegraphics[width=8cm,height=5.33cm]{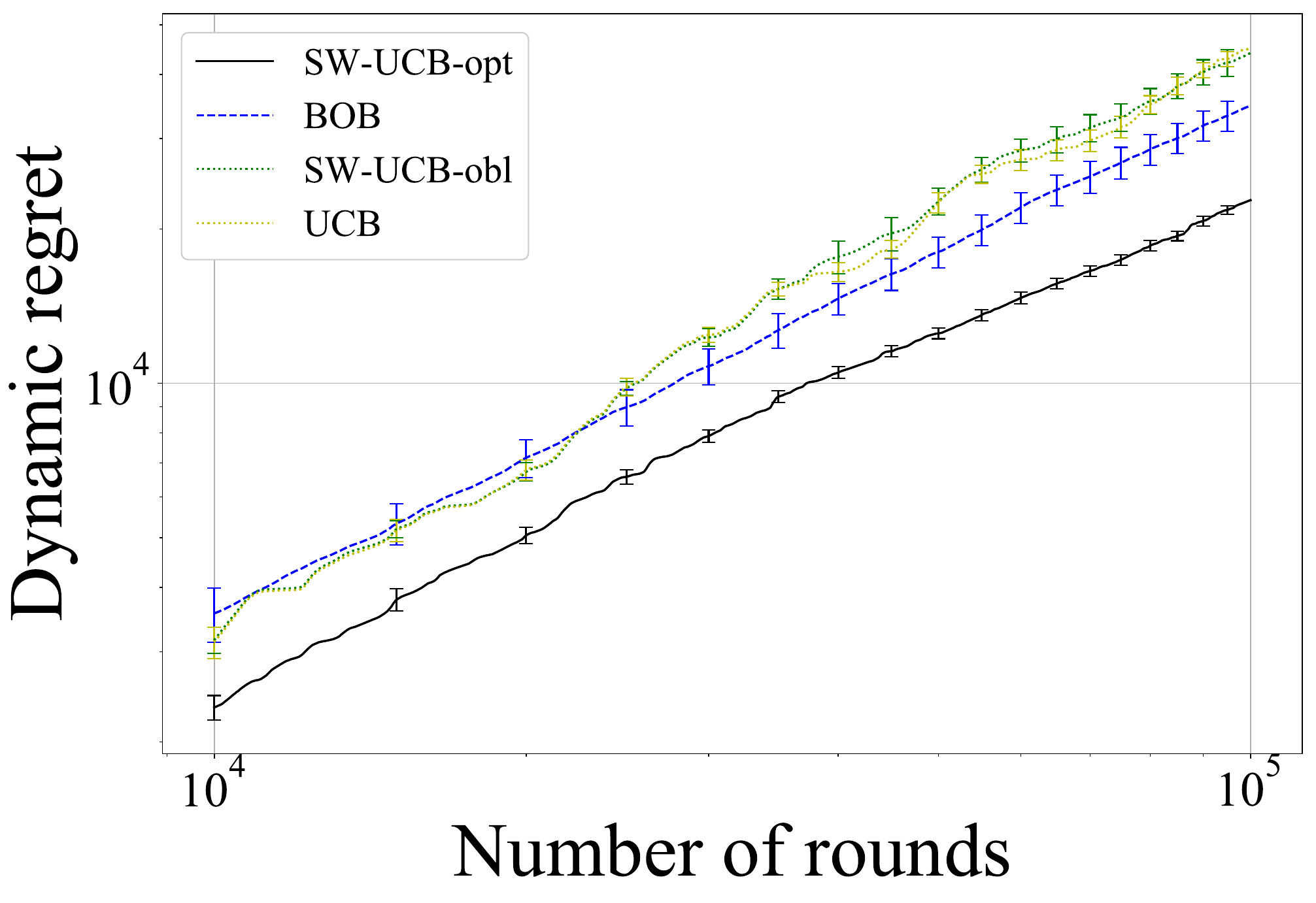}}
	\caption{Results for piecewise linear environment with linear action set.}
\end{figure}
\subsection{Experiments on Online Auto-Lending Dataset}\label{sec:auto}
We now conduct experiments on the on-line auto lending dataset, which was first studied by \citep{PSR15}, and subsequently used to evaluate dynamic pricing algorithms by \citep{BK18}. The dataset records all auto loan applications received by a major online lender in the United States from July 2002 through November 2004. Note that this was the time amid the severe acute respiratory syndrome (SARS) epidemic period \citep{WHO03}, and one could thus expect high volatility in demand similar to the COVID-19 pandemic period. Each datum consists of the borrower's feature (\eg, date of an application, the term and amount of loan requested, and some personal
information), the lender's decision (\eg, the monthly payment for the borrower), and whether or not this offer is accepted by the borrower. Please refer to Columbia University Center for Pricing and Revenue Management \citep{CRPM} for a detailed description of the dataset.

Similar to \cite{BK18}, we use the first $T=5\times10^4$ arrivals that span 276 days for this experiment. We adopt the commonly used \citep{LCLS10,BZ15} linear regression model to interpolate the response of each customer: for the $t^{\text{th}}$ customer with feature $x_t,$ if price $p_t$ is offered, she accepts the offer with ``probability" $\langle\theta_t,[x_t;p_tx_t]\rangle.$ Although the customers' responses are binary, \ie, whether or not she accepts the loan, \citep{BZ15} theoretically justified that the revenue loss caused by using this misspecified model is negligible. For the changing environment, we consider a piecewise stationary environment. In particular, we assume that the $\theta_t$'s remain stationary in a single day period, but can change across days. We also use the feature selection results in \citep{BK18} to pick the FICO score, the term of contract, the loan amount approved, prime rate, the type of car, and the competitor's rate as the feature vector for each customer.

Firstly, we recover the latent parameters $\theta_t$'s from the dataset with linear regression method. Since the lender's decisions, \ie, the price for each customer, is not presented in the dataset, we impute the price of a loan as the net present value of future payments (a function of the monthly payment, customer rate, and term approved, please refer to \citep{CRPM,BK18} for more details).  The resulted $B_T$ is $1.9\times 10^2~\left(\approx T^{0.48}\right),$ which means we are in the moderately non-stationary environment. Since the maximum of the imputed prices is $\approx400,$ the range of price in our experiment is thus set to $[0,500]$ with a step size of 10.

\begin{figure}[!ht]
	\centering
	\includegraphics[width=9cm,height=6.5cm]{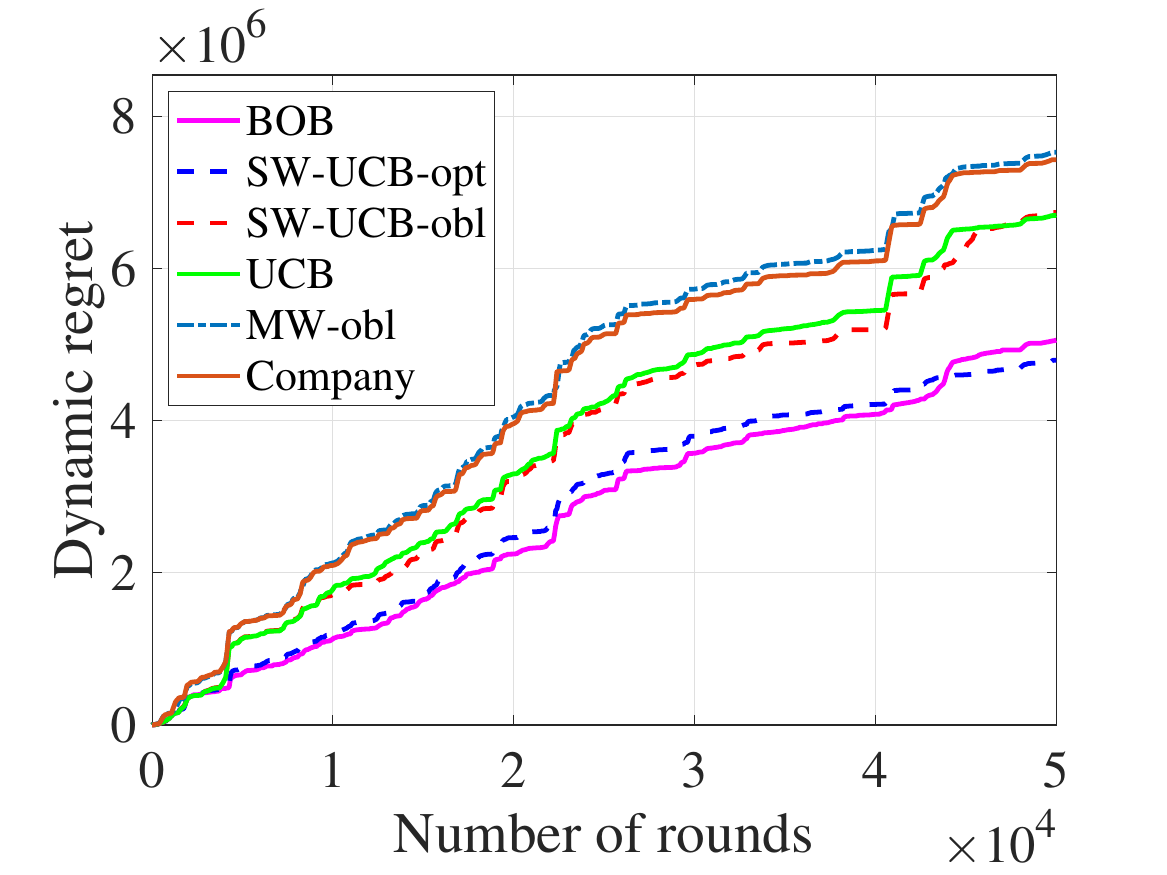}
	\caption{Results for the on-line auto lending dataset.}
	\label{fig:auto}
\end{figure}
We then run the experiment with the recovered parameters, and measure the dynamic regrets of the \swofu~(known $B_T$ and unknown $B_T$), the \bob, the UCB algorithm, the Moving Window (MW) algorithm \citep{KZ16} without knowing $B_T$, as well as the company's original decisions. Here, we note that the MW algorithm does not permit customer features, and hence its dynamic regret should scale linearly in $T$. The results are shown in Fig. \ref{fig:auto}. The plot shows that the \swofu~with known $B_T$ (SW-UCB-opt) and the \bob~have the lowest dynamic regrets. Besides, the dynamic regret of the parameter-free \bob~is $\geq24\%$ less than those of the obliviously tuned \swofu~(SW-UCB-obl) and the UCB algorithm. It also saves $\geq32\%$ dynamic regret when compared to the MW algorithm and the company's original decisions. The results clearly indicate that the \swofu~and the \bob~can deal with the drift while the UCB algorithm fails to keep track of the dynamic environment. More importantly, the results validate our theoretical findings regarding the parameter-free adaptation of the \bob.
\section{Conclusion}
\label{sec:conclusion}
In this paper, we develop general data-driven decision-making algorithms with state-of-the-art dynamic regret bounds in various non-stationary bandit settings. We characterize a minimax dynamic regret lower bound, and present a tuned Sliding Window Upper-Confidence-Bound algorithm with matching dynamic regret bounds. We further propose the parameter-free Bandit-over-Bandit framework that automatically adapts to the unknown non-stationarity. Finally, we conduct extensive numerical experiments on both synthetic and real-world datasets to validate our theoretical results.
\ACKNOWLEDGMENT{The authors thank the department editor J.George Shanthikumar,
	the anonymous associate editor, and three anonymous referees whose comments improved the
	manuscript. The authors would like to express sincere gratitude to Omar Besbes, Xi Chen, Dylan Foster, Yonatan Gur, Yujia Jin, Akshay Krishnamurthy, Haipeng Luo, Sasha Rakhlin, Vincent Tan, Kuang Xu, Assaf Zeevi, as well as various seminar attendees for helpful discussions and comments. {\blue The previous version of the paper contains an error in the proof of Theorem \ref{theorem:sw_deviation}, which requires the Assumption 4 of \cite{FauryRAC21} to fix}. 
	The authors also gratefully acknowledge Columbia University Center for Pricing and Revenue Management for providing us the dataset on auto loans. This research is supported by the Ministry of Education, Singapore, under its 2019 Academic Research Fund Tier 3 grant call (Award ref: MOE-2019-T3-1-010). The research is also supported  by the MIT Data Science Lab, a lab focused on the development
	of analytic techniques and tools for improving decision making in environments that involve uncertainty and
	require statistical learning.} 
\bibliographystyle{ormsv080}
\bibliography{paperlist}
\newpage
\begin{APPENDIX}{Proofs}
	\section{Proof of Theorem \ref{theorem:lower_bound}}
\label{sec:theorem:lower_bound}
First, let's review the lower bound of the linear bandit setting, which is related to ours except that the $\theta_t$'s do not vary across rounds, and are equal to the same (unknown) $\theta,$ \ie, $\forall t\in[T]~\theta_t=\theta.$
\begin{lemma}[\citep{LS18}]
	\label{lemma:lower_bound}
	For any $T_0\geq\sqrt{d}/2$ and let $D=\left\{x\in\Re^d:\|x\|\leq1\right\},$ then there exists a $\theta\in\left\{\pm\sqrt{d/4T_0}\right\}^d,$ such that the worst case regret of any algorithm for linear bandits with unknown parameter $\theta$ is $\Omega(d\sqrt{T_0}).$
\end{lemma}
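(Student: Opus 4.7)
The plan is to establish this stationary linear-bandit minimax lower bound via a two-point (Assouad-style) information-theoretic argument applied coordinate by coordinate. I would fix $\Delta=\sqrt{d/(4T_0)}$ and consider the hypothesis family $\{\theta_\epsilon=\Delta\epsilon : \epsilon\in\{-1,+1\}^d\}$, specializing the noise to $N(0,R^2)$ (which is sub-Gaussian and suffices for the lower bound). For each $\theta_\epsilon$, the optimal action in the unit ball $D$ is $x^\ast_\epsilon=\epsilon/\sqrt{d}$, with optimal per-round reward $\Delta\sqrt{d}=d/(2\sqrt{T_0})$. Writing $\langle X_t,\theta_\epsilon\rangle=\Delta\langle X_t,\epsilon\rangle$, the regret admits the coordinate-wise decomposition
\begin{align*}
\text{Regret}_\pi(\theta_\epsilon) = \Delta\sum_{t=1}^{T_0}\Ex_\epsilon\bigl[\sqrt{d}-\langle X_t,\epsilon\rangle\bigr] = \Delta\sum_{i=1}^d\sum_{t=1}^{T_0}\Ex_\epsilon\bigl[1/\sqrt{d}-\epsilon_i X_t(i)\bigr],
\end{align*}
which reduces the task to showing that no policy can make $\epsilon_i X_t(i)$ close to $1/\sqrt{d}$ on average across many $(t,i)$ pairs; in particular, any round $t$ for which $\text{sign}(X_t(i))\neq \epsilon_i$ contributes at least $\Delta/\sqrt{d}$ to the regret.

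Next, I would bound the KL divergence between the induced observation laws $P_\epsilon$ and $P_{\epsilon^{(i)}}$, where $\epsilon^{(i)}$ flips the $i$-th coordinate of $\epsilon$. Under Gaussian noise, the chain rule gives
\begin{align*}
KL\bigl(P_\epsilon \,\|\, P_{\epsilon^{(i)}}\bigr) = \frac{1}{2R^2}\sum_{t=1}^{T_0}\Ex_\epsilon\!\left[\bigl\langle X_t,\theta_\epsilon-\theta_{\epsilon^{(i)}}\bigr\rangle^2\right] = \frac{2\Delta^2}{R^2}\sum_{t=1}^{T_0}\Ex_\epsilon\bigl[X_t(i)^2\bigr].
\end{align*}
Summing over $i\in[d]$ and using $\sum_i X_t(i)^2=\|X_t\|^2\leq 1$ yields a total KL budget of at most $2\Delta^2 T_0/R^2 = d/(2R^2)$, so on average over $i$ the pairwise KL is the constant $1/(2R^2)$. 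Pinsker's inequality then implies that, averaged over uniformly random $\epsilon\in\{-1,+1\}^d$ and $i\in[d]$, the total-variation distance between $P_\epsilon$ and $P_{\epsilon^{(i)}}$ is bounded away from $1$, so any tester reading the entire history fails to identify $\epsilon_i$ correctly with probability at least a constant $c>0$.

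Finally, I would combine these ingredients via Assouad's lemma. The standard reduction expresses the average regret over uniformly random $\epsilon\in\{-1,+1\}^d$ as at least $\Omega(\Delta/\sqrt{d})$ times the total number of $(t,i)$ pairs for which $\text{sign}(X_t(i))\neq\epsilon_i$, and the constant-probability misidentification above guarantees that this count is $\Omega(dT_0)$. Hence $\Ex_\epsilon[\text{Regret}_\pi(\theta_\epsilon)]=\Omega(\Delta\sqrt{d}\,T_0)=\Omega(d\sqrt{T_0})$, and existence of a deterministic $\theta\in\{\pm\Delta\}^d$ attaining this bound follows by the probabilistic method. The main obstacle is the clean execution of the Assouad step: the per-coordinate regret depends on $|X_t(i)|$ (Cauchy-Schwarz), while the per-coordinate KL depends on $X_t(i)^2$, so the two budgets must be simultaneously balanced using the single constraint $\sum_i X_t(i)^2\leq 1$; the choice $\Delta=\sqrt{d/(4T_0)}$ and the hypothesis $T_0\geq\sqrt{d}/2$ are exactly what keep the aggregate KL at the $\Theta(d)$ scale, where on average each coordinate test is just barely unresolvable and the lower bound is tight.
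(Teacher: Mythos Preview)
The paper does not prove this lemma; it is stated as a citation from \citep{LS18} and used as a black box in the proof of Theorem~\ref{theorem:lower_bound}. So there is no ``paper's own proof'' to compare against. Your proposal---Assouad-style averaging over the sign hypercube $\{\pm\Delta\}^d$ with Gaussian noise, coordinate-wise regret decomposition, chain-rule KL bounded via $\sum_i X_t(i)^2\leq 1$, and Pinsker to show each coordinate is only barely testable---is exactly the argument in the cited reference (Lattimore--Szepesv\'ari's lower bound for stochastic linear bandits on the unit ball), so you are reproducing the source rather than diverging from it.

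One point to tighten: the sentence ``the constant-probability misidentification above guarantees that this count is $\Omega(dT_0)$'' is where the proof actually needs a careful inequality, not just Pinsker plus a sign-error count. On the unit ball the action coordinate $X_t(i)$ can be correct in sign yet tiny in magnitude, so counting sign errors alone does not lower-bound the regret, and conversely a large $|X_t(i)|$ both reduces the $i$th regret term and inflates the $i$th KL. The clean way (and the way \citep{LS18} does it) is to fix $i$, apply Bretagnolle--Huber or Pinsker to bound $\bigl|\Ex_\epsilon\sum_t X_t(i)-\Ex_{\epsilon^{(i)}}\sum_t X_t(i)\bigr|$ in terms of $\sqrt{KL}$, and then average the resulting per-coordinate regret lower bound over $i$ using the global constraint $\sum_i\Ex_\epsilon\sum_t X_t(i)^2\leq T_0$ together with Cauchy--Schwarz. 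Your final paragraph correctly identifies this as the crux, but as written the body of the argument skips the step that actually balances the two budgets.
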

Going back to the non-stationary environment, suppose nature divides the whole time horizon into $\lceil T/H\rceil$ blocks of equal length $H$ rounds (the last block can possibly have less than $H$ rounds), and each block is a decoupled linear bandit instance so that the knowledge of previous blocks cannot help the decision within the current block. Following Lemma \ref{lemma:lower_bound}, we restrict the sequence of $\theta_t$'s are drawn from the set $\left\{\pm\sqrt{d/4H}\right\}^d.$ Moreover, $\theta_t$'s remain fixed within a block, and can vary across different blocks, \ie,
\begin{align}
\forall i\in\left[\left\lceil \frac{T}{H}\right\rceil\right]\forall t_1,t_2\in[(i-1)H+1,i\cdot H\wedge T]\quad \theta_{t_1}=\theta_{t_2}.
\end{align}
We argue that even if the DM knows this additional information, it still incur a regret $\Omega(d^{2/3}B_T^{1/3}T^{2/3}).$ Note that different blocks are completely decoupled, and information is thus not passed across blocks. Therefore, the regret of each block is $\Omega\left(d\sqrt{H}\right),$ and the total regret is at least
\begin{align}
\left(\left\lceil \frac{T}{H}\right\rceil-1\right)\Omega\left(d\sqrt{H}\right)=\Omega\left(dTH^{-\frac{1}{2}}\right).
\end{align} 
Intuitively, if $H,$ the number of length of each block, is smaller, the worst case regret lower bound becomes larger. But too small a block length can result in a violation of the variation budget. So we work on the total variation of $\theta_t$'s to see how small can $H$ be. 
The total variation of the $\theta_t$'s can be seen as the total variation across consecutive blocks as $\theta_t$ remains unchanged within a single block. Observe that for any pair of $\theta,\theta'\in\left\{\pm\sqrt{d/4H}\right\}^d,$ the $\ell_2$ difference between $\theta$ and $\theta'$ is upper bounded as
\begin{align}
\sqrt{\sum_{i=1}^d\frac{4d}{4H}}=\frac{d}{\sqrt{H}}
\end{align}
and there are at most $\lfloor T/H\rfloor$ changes across the whole time horizon, the total variation is at most 
\begin{align}
B=\frac{T}{H}\cdot\frac{d}{\sqrt{H}}={dTH^{-\frac{3}{2}}}.
\end{align}
By definition, we require that $B\leq B_T,$ and this indicates that
\begin{align}
H\geq(dT)^{\frac{2}{3}}B_T^{-\frac{2}{3}}.
\end{align} 
Taking $H=\left\lceil{(dT)^{\frac{2}{3}}B_T^{-\frac{2}{3}}}\right\rceil,$ the worst case regret is 
\begin{align}
\Omega\left(dT\left((dT)^{\frac{2}{3}}B_T^{-\frac{2}{3}}\right)^{-\frac{1}{2}}\right)=\Omega\left(d^{\frac{2}{3}}B_T^{\frac{1}{3}}T^{\frac{2}{3}}\right).
\end{align}
Note that in order for $H\leq T,$ we require $B_T\geq d T^{-1/2}.$ Also, to make $|\langle x,\theta_t\rangle|\leq1$ for all $t\in[T]$ and $x\in D_t,$ we need $\|\theta_t\|\leq1,$ which means $\sqrt{d^2/4H}\leq1$ or $B_T\leq8d^{-2}T.$
\section{Proof of Theorem \ref{theorem:sw_deviation}}
\label{sec:theorem:sw_deviation}
The difference $\hat{\theta}_t-\theta_t$ has the following expression:
\begin{align}
\nonumber&V_{t-1}^{-1}\left(\sum_{s=1\vee(t-w)}^{t-1}X_sX_s^{\top}\theta_s+\sum_{s=1\vee(t-w)}^{t-1}\eta_sX_s\right)-\theta_t\\
\label{eq:sw16}=&V_{t-1}^{-1}\sum_{s=1\vee(t-w)}^{t-1}X_sX_s^{\top}\left(\theta_s-\theta_t\right)+V_{t-1}^{-1} \left(\sum_{s=1\vee(t-w)}^{t-1}\eta_sX_s-\lambda\theta_t\right),
\end{align}
The first term on the right hand side of eq. (\ref{eq:sw16}) is the estimation inaccuracy due to the non-stationarity; while the second term is the estimation error due to random noise. We now upper bound the two terms separately. We upper bound the first term under the Euclidean norm.
\begin{lemma}
	\label{lemma:sw}
	For any $t\in[T],$ we have $$\left\|V_{t-1}^{-1}\sum_{s=1\vee(t-w)}^{t-1}X_sX_s^{\top}\left(\theta_s-\theta_t\right)\right\|_2\leq\sum^{t-1}_{s = 1\vee (t-w)}\left\|\theta_s-\theta_{s+1}\right\|_2.$$
\end{lemma}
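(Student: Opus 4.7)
My plan is to rewrite the left-hand side via a telescoping identity that shifts the time drift out of the reward arguments and into weighting coefficients sitting in front of partial Gram matrices, and then to control the resulting expression via a Löwner-order comparison against $V_{t-1}$.

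The first move is the identity $\theta_s-\theta_t=-\sum_{\tau=s}^{t-1}(\theta_{\tau+1}-\theta_\tau)$, inserted inside the sum and followed by an interchange of the order of summation. This yields
\[
\sum_{s=1\vee(t-w)}^{t-1}X_sX_s^\top(\theta_s-\theta_t)\;=\;\sum_{\tau=1\vee(t-w)}^{t-1}U_\tau\,(\theta_\tau-\theta_{\tau+1}),
\]
where $U_\tau:=\sum_{s=1\vee(t-w)}^{\tau}X_sX_s^\top$ is the partial Gram matrix up to time $\tau$ within the window. The drift now appears only through the one-step differences $\theta_\tau-\theta_{\tau+1}$, which are exactly what the right-hand side sums. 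Premultiplying by $V_{t-1}^{-1}$ and applying the triangle inequality, the lemma reduces to the per-step estimate $\|V_{t-1}^{-1}U_\tau(\theta_\tau-\theta_{\tau+1})\|_2\le\|\theta_\tau-\theta_{\tau+1}\|_2$ for each $\tau$, since summing these pointwise inequalities recovers the claimed bound.

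The per-step estimate is then driven by the Löwner inequality $U_\tau\preceq V_{t-1}$, which is immediate since $V_{t-1}-U_\tau=\lambda I+\sum_{s>\tau}X_sX_s^\top\succeq 0$. Conjugation by $V_{t-1}^{-1/2}$ gives $V_{t-1}^{-1/2}U_\tau V_{t-1}^{-1/2}\preceq I$, so the symmetric matrix on the left has spectrum in $[0,1]$, and correspondingly $V_{t-1}^{-1}U_\tau$ is similar to a symmetric PSD matrix with eigenvalues in $[0,1]$.

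\textbf{Main obstacle.} The delicate step is turning this spectrum information into a genuine operator-norm control on $V_{t-1}^{-1}U_\tau$, because that product is generally \emph{not} symmetric and its spectral radius need not equal its operator norm. I plan to handle this by analyzing the squared norm directly, writing $\|V_{t-1}^{-1}U_\tau\delta\|_2^2=\delta^\top U_\tau V_{t-1}^{-2}U_\tau\delta$ and chaining the Löwner inequalities $U_\tau V_{t-1}^{-1}U_\tau\preceq U_\tau\preceq V_{t-1}$ (the first following from $V_{t-1}^{-1}\preceq U_\tau^{\dagger}$ restricted to $\mathrm{range}(U_\tau)$ together with a standard limiting argument when $U_\tau$ is singular). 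Getting this PSD bookkeeping right so that no stray factor of the condition number of $V_{t-1}$ appears is the main technical point I would need to verify carefully before putting the pieces together.
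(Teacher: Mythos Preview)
Your outline—telescoping $\theta_s-\theta_t$, swapping sums, and reducing to the per-step estimate $\|V_{t-1}^{-1}U_\tau\delta\|_2\le\|\delta\|_2$—is exactly what the paper does, and you are right to flag the non-symmetry of $V_{t-1}^{-1}U_\tau$ as the crux. The paper glosses over precisely this point: it asserts that $\|My\|_2\le\lambda_{\max}(M)\|y\|_2$ for $M=V_{t-1}^{-1}U_\tau$, justifying it by the claim $\max_{\|z\|\le1}\|Mz\|_2=|\lambda_{\max}(M)|$, which is false for non-symmetric $M$ (the left side is the largest singular value, not the largest eigenvalue). So you have correctly identified a genuine gap in the paper's own argument.

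Unfortunately, your proposed repair cannot succeed, because the per-step estimate itself is false, not merely hard to prove. Take $d=2$, $\lambda=10^{-3}$, $X_1=(1,0)^\top$, $X_2=(1,0.1)^\top$, and $\theta_1=(1,0)^\top$, $\theta_2=\theta_3=0$ (so $t=3$, window $\ge2$). Then $U_1=X_1X_1^\top$, $V_2=\lambda I+X_1X_1^\top+X_2X_2^\top$, and a direct computation gives $\|V_2^{-1}U_1\theta_1\|_2\approx8.38$ while $\|\theta_1-\theta_2\|_2+\|\theta_2-\theta_3\|_2=1$; the lemma as stated is violated. Your L\"owner chain $U_\tau V_{t-1}^{-1}U_\tau\preceq U_\tau\preceq V_{t-1}$ is correct but, after conjugating by $V_{t-1}^{-1/2}$, only returns $(V_{t-1}^{-1/2}U_\tau V_{t-1}^{-1/2})^2\preceq I$, which you already knew; it does not—and, by the counterexample, cannot—deliver $U_\tau V_{t-1}^{-2}U_\tau\preceq I$. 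What \emph{is} controlled by $0\preceq U_\tau\preceq V_{t-1}$ is $\|V_{t-1}^{-1}U_\tau\delta\|_{V_{t-1}}\le\|\delta\|_{V_{t-1}}$ (equivalently $\|V_{t-1}^{-1/2}U_\tau V_{t-1}^{-1/2}\|_{\mathrm{op}}\le1$), so a valid argument has to either work in the $V_{t-1}$-norm or accept an extra factor when passing to the Euclidean norm; the downstream dynamic-regret analysis can be salvaged along such lines, but not the lemma verbatim.
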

\begin{proof}{Poof.}
	In the proof, we denote $B(1)$ as the unit Euclidean ball, and $\lambda_\text{max}(M)$ as the maximum eigenvalue of a square matrix $M$. 
	In addition, recall the definition that $V_{t-1} = \lambda I + \sum^{t-1}_{s = 1\vee (t-w)} X_s X_s^\top$ We prove the Lemma as follows:
	\begin{align}
	\nonumber&\left\|V_{t-1}^{-1}\sum_{s=1\vee(t-w)}^{t-1}X_sX_s^{\top}\left(\theta_s-\theta_t\right)\right\|_2 \\
	\nonumber= &\left\|V_{t-1}^{-1}\sum_{s=1\vee(t-w)}^{t-1}X_sX_s^{\top}\left[\sum^{t-1}_{p=s}\left(\theta_p-\theta_{p+1}\right)\right]\right\|_2\\
	\label{eq:sw} = & \left\|V_{t-1}^{-1} \sum^{t-1}_{p=1\vee(t-w)}  \sum_{s=1\vee(t-w)}^{p}X_sX_s^{\top}\left(\theta_p-\theta_{p+1}\right)\right\|_2\\
	\label{eq:sw1} \leq &  \sum^{t-1}_{p=1\vee(t-w)} \left\| V_{t-1}^{-1} \left(\sum_{s=1\vee(t-w)}^{p}X_sX_s^{\top}\right)\left(\theta_p-\theta_{p+1}\right)\right\|_2\\
	\label{eq:sw2} \leq &  \sum^{t-1}_{p=1\vee(t-w)}\sqrt{ \lambda_\text{max}\left( \left(\sum_{s=1\vee(t-w)}^{p}X_sX_s^{\top}\right)V_{t-1}^{-2} \left(\sum_{s=1\vee(t-w)}^{p}X_sX_s^{\top}\right)\right)}\left\|\theta_p-\theta_{p+1}\right\|_2\\
	\label{eq:sw3} \leq &  \sum^{t-1}_{p=1\vee(t-w)} \left\|\theta_p-\theta_{p+1}\right\|_2.
	\end{align}	
	Equality (\ref{eq:sw}) is by the observation that both sides of the equation is summing over the terms $X_s X^\top_s (\theta_p - \theta_{p+1})$ with indexes $(s, p)$ ranging over $\{(s, p) : 1\vee (t-w) \leq s\leq p \leq t-1\}$. Inequality (\ref{eq:sw1}) is by the triangle inequality. 

	 Inequality (\ref{eq:sw2}) is by the fact that, for any matrix $M\in \mathbb{R}^{d\times d}$ with $\lambda_\text{max}(M)\geq 0$ and any vector $y\in \mathbb{R}^d$, we have $\left\|M y\right\|_2 \leq \sqrt{\lambda_\text{max}(M^2)}\left\|y\right\|_2$.  Applying the above claim with $M = V_{t-1}^{-1} \left(\sum_{s=1\vee(t-w)}^{p}X_sX_s^{\top}\right)$ and $y = \theta_p - \theta_{p+1}$ demonstrates inequality (\ref{eq:sw2}). 

	Finally, for inequality (\ref{eq:sw3}), we denote the corresponding basis for each $X_s$ as $\psi_{i(s)},$ \ie, $X_s=z_s\psi_{i(s)}=z_s\Psi e_{i(s)},$ where $e_{i}$ is the $i^{\text{th}}$ standard orthonormal basis. Let $A_1=\sum_{s=1\vee(t-w)}^{t-1}e_{i(s)}e_{i(s)}^{\top}+\lambda I$ and $A_2=\sum_{s=1\vee(t-w)}^{p}e_{i(s)}e_{i(s)}^{\top},$ it is evident that $V_{t-1}=\Psi A_1\Psi^{\top}$ and $\sum_{s=1\vee(t-w)}^{p}X_sX_s^{\top}=\Psi A_2\Psi^{\top}.$ Therefore, we have 
	\begin{align}
		\nonumber\lambda_\text{max}&\left( \left(\sum_{s=1\vee(t-w)}^{p}X_sX_s^{\top}\right)V_{t-1}^{-2} \left(\sum_{s=1\vee(t-w)}^{p}X_sX_s^{\top}\right)\right)=\lambda_\text{max}\left( \Psi A_2 \Psi^{\top} (\Psi A_1 \Psi^{\top})^{-2}\Psi A_2\Psi^{\top} \right)\\
		&=\lambda_\text{max}\left( \Psi A_2A^{-2}_1 A_2\Psi^{\top} \right)=\lambda_\text{max}\left( A_2A^{-2}_1 A_2 \right)\leq 1,
\end{align} 
where we have used the fact that both $A_1$ and $A_2$ are diagonal matrix in the last step. Altogether, the Lemma is proved.\halmos
\end{proof}
Applying Theorem 2 of \citep{AYPS11}, we have the following upper bound for the second term in eq. \eqref{eq:sw20}.
\begin{lemma}[\citep{AYPS11}]
	\label{lemma:sw1}
	For any $t\in[T]$ and any $\delta\in[0,1],$ we have $$\left\|\sum_{s=1\vee(t-w)}^{t-1}\eta_sX_s-\lambda\theta_t\right\|_{V_{t-1}^{-1}}\leq R\sqrt{d\ln\left(\frac{1+wL^2/\lambda}{\delta}\right)}+\sqrt{\lambda}S$$ holds with probability at least $1-\delta.$ 
\end{lemma}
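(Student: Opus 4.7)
{Proof Sketch Plan.}
My plan is to split the quantity into a stochastic part and a deterministic (regularization) part via the triangle inequality in the $V_{t-1}^{-1}$ norm, bound each separately, and then invoke the self-normalized martingale inequality of \citep{AYPS11} on the stochastic part.

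First, I would write
\begin{align*}
\left\|\sum_{s=1\vee(t-w)}^{t-1}\eta_s X_s - \lambda \theta_t\right\|_{V_{t-1}^{-1}} \leq \left\|\sum_{s=1\vee(t-w)}^{t-1}\eta_s X_s\right\|_{V_{t-1}^{-1}} + \lambda\left\|\theta_t\right\|_{V_{t-1}^{-1}}.
\end{align*}
For the deterministic term, since $V_{t-1} = \lambda I + \sum_{s=1\vee(t-w)}^{t-1} X_s X_s^\top \succeq \lambda I$, we have $V_{t-1}^{-1} \preceq \lambda^{-1} I$, giving $\lambda \|\theta_t\|_{V_{t-1}^{-1}} \leq \sqrt{\lambda}\,\|\theta_t\|_2 \leq \sqrt{\lambda}\,S$ by the norm bound on $\theta_t$.

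For the stochastic term, I would apply the self-normalized bound (Theorem 1 of \citep{AYPS11}) to the martingale $S_\tau = \sum_{s=1\vee(t-w)}^{\tau} \eta_s X_s$ adapted to the filtration generated by the observations, with regularizer $\lambda I$. This yields that with probability at least $1-\delta$,
\begin{align*}
\left\|\sum_{s=1\vee(t-w)}^{t-1}\eta_s X_s\right\|_{V_{t-1}^{-1}} \leq R\sqrt{2\ln\left(\frac{\det(V_{t-1})^{1/2}\det(\lambda I)^{-1/2}}{\delta}\right)}.
\end{align*}
The sub-Gaussianity of $\eta_s$ with proxy $R$, together with the $\mathcal{H}_s$-measurability of $X_s$, verifies the hypotheses needed to invoke this inequality. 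Next, I would apply the determinant-trace (AM-GM) bound: since $\operatorname{tr}(V_{t-1}) \leq d\lambda + wL^2$, one has $\det(V_{t-1}) \leq (\lambda + wL^2/d)^d$, and so $\det(V_{t-1})^{1/2}\det(\lambda I)^{-1/2} \leq (1+wL^2/\lambda)^{d/2}$. Substituting this yields the stated bound $R\sqrt{d\ln((1+wL^2/\lambda)/\delta)}$, and combining with the $\sqrt{\lambda}S$ term completes the argument.

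The only non-trivial ingredient is the self-normalized martingale inequality itself, whose proof uses the method of mixtures via a Gaussian prior on a fictitious parameter to build a super-martingale; I would simply quote this as a black-box from \citep{AYPS11} rather than reproducing its proof here. The mild subtlety is that the sum begins at index $1 \vee (t-w)$ rather than at $1$, but since the result in \citep{AYPS11} holds uniformly for all times $\tau$ (or equivalently for arbitrary stopping times), restricting to the window $[1\vee(t-w), t-1]$ is handled by applying the bound to the martingale indexed by that shifted range, with $V$ replaced by $\lambda I$.\halmos
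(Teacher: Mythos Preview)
Your proposal is correct and matches the paper's approach: the paper simply cites Theorem~2 of \citep{AYPS11} directly, and your proof sketch unwinds exactly the derivation of that theorem (triangle inequality to split off the $\lambda\theta_t$ term, the eigenvalue bound $V_{t-1}^{-1}\preceq\lambda^{-1}I$ for the deterministic piece, and the self-normalized martingale inequality plus the determinant--trace bound for the stochastic piece). The only cosmetic point is that your computation yields $R\sqrt{d\ln(1+wL^2/\lambda)+2\ln(1/\delta)}$, which is in fact slightly tighter than the stated $R\sqrt{d\ln((1+wL^2/\lambda)/\delta)}$ whenever $d\geq 2$; the paper's form just absorbs the $2$ into $d$ for a cleaner expression.
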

Combining the above two lemmas: fixed any $\delta\in[0,1],$ we have that for any $t\in[T]$ and any $x\in D_t,$
\begin{align}
\nonumber\left|x^\top ( \hat{\theta}_t - \theta_t)\right|=&\left| x^\top \left( V_{t-1}^{-1}\sum_{s=1\vee(t-w)}^{t-1}X_sX_s^{\top}\left(\theta_s-\theta_t\right)\right)+ x^\top V_{t-1}^{-1}\left(\sum_{s=1\vee(t-w)}^{t-1}\eta_sX_s-\lambda\theta_t\right) \right|  \nonumber\\
\label{eq:sw13}\leq&\left| x^\top \left( V_{t-1}^{-1}\sum_{s=1\vee(t-w)}^{t-1}X_sX_s^{\top}\left(\theta_s-\theta_t\right)\right)\right|+\left| x^\top V_{t-1}^{-1}\left(\sum_{s=1\vee(t-w)}^{t-1}\eta_sX_s-\lambda\theta_t\right) \right| \\
\leq& \left\|x\right\|_2 \cdot \left\|V_{t-1}^{-1}\sum_{s=1\vee(t-w)}^{t-1}X_sX_s^{\top}\left(\theta_s-\theta_t\right)\right\|_2 + \left\|x\right\|_{V^{-1}_{t-1}}\left\|\sum_{s=1\vee(t-w)}^{t-1}\eta_sX_s-\lambda\theta_t\right\|_{V_{t-1}^{-1}}\label{eq:sw_by_CS}\\
\leq&L \sum^{t-1}_{s = 1\vee (t-w)}\left\|\theta_s-\theta_{s+1}\right\|_2 + \beta \left\|x\right\|_{V^{-1}_{t-1}} \label{eq:sw-ineq},
\end{align}
where inequality (\ref{eq:sw13}) uses triangle inequality, inequality (\ref{eq:sw_by_CS}) follows from Cauchy-Schwarz inequality, and inequality (\ref{eq:sw-ineq}) are consequences of Lemmas \ref{lemma:sw}, \ref{lemma:sw1}.

\section{Proof of Theorem \ref{theorem:sw_main}}
\label{sec:theorem:sw_main}
In the proof, we choose $\lambda$ so that $\beta \geq 1$, for example by choosing $\lambda\geq 1/S^2$. 	
By virtue of UCB, the regret in any round $t\in[T]$ is
\begin{align}
\label{eq:sw15} \langle x^*_t - X_t, \theta_t\rangle & \leq  L \sum^{t-1}_{s = 1\vee (t-w)}\left\|\theta_s-\theta_{s+1}\right\|_2 +  \langle X_t, \hat{\theta}_t\rangle + \beta\left\|X_t\right\|_{V^{-1}_{t-1}} - \langle X_t, \theta_t\rangle\\
\label{eq:sw12} & \leq 2 L \sum^{t-1}_{s = 1\vee (t-w)}\left\|\theta_s-\theta_{s+1}\right\|_2 + 2\beta \left\|X_t\right\|_{V^{-1}_{t-1}}.
\end{align}	
Inequality (\ref{eq:sw15}) is by an application of our \swofu~established in equation (\ref{eq:sw-ucb}). Inequality (\ref{eq:sw12}) is by an application of inequality (\ref{eq:sw-ineq}), which bounds the difference $| \langle X_t, \hat{\theta}_t - \theta_t\rangle |$ from above. By the assumption $|\langle X,\theta_t\rangle|\leq1$ in Section \ref{sec:formulation}, it is evident that $\langle X_t, \hat{\theta}_t - \theta_t\rangle\leq |\langle X_t, \hat{\theta}_t\rangle|+|\langle X_t, - \theta_t\rangle| \leq 2$, and we have
\begin{equation}\label{eq:sw17}
\langle x^*_t - X_t, \theta_t\rangle\leq 2 L \sum^{t-1}_{s = 1\vee (t-w)}\left\|\theta_s-\theta_{s+1}\right\|_2 + 2\beta \left(\left\|X_t\right\|_{V^{-1}_{t-1}}\wedge 1\right).
\end{equation}
Summing equation (\ref{eq:sw17}) over $1\leq t\leq T$, the regret of the \swofu~is upper bounded as
\begin{align}
\nonumber\Ex\left[\nonumber\text{Regret}_T\left(\swofu\right)\right]=&\Ex\left[\sum_{t\in[T]}\langle x^*_t- X_t,\theta_t\rangle\right]\\
\nonumber\leq& 2 L \left[ \sum^T_{t = 1}\sum^{t-1}_{s = 1\vee (t-w)}\left\|\theta_s-\theta_{s+1}\right\|_2\right] + 2\beta\cdot \Ex\left[\sum^T_{t=1}\left(\left\|X_t\right\|_{V^{-1}_{t-1}}\wedge 1\right)\right]\\
\nonumber = & 2 L \left[ \sum^T_{s = 1}\sum^{(s+w)\wedge T}_{t = s+1}\left\|\theta_s-\theta_{s+1}\right\|_2\right] + 2\beta\cdot \Ex\left[\sum^T_{t=1}\left(\left\|X_t\right\|_{V^{-1}_{t-1}}\wedge 1\right)\right]\\
\label{eq:sw11} \leq & 2 L w B_T + 2\beta\cdot\Ex\left[\sum^T_{t=1}\left(\left\|X_t\right\|_{V^{-1}_{t-1}}\wedge 1\right)\right].
\end{align}

What's left is to upper bound the quantity $2\beta\cdot\Ex\left[\sum_{t\in[T]}\left(1\wedge\left\|X_t\right\|_{V^{-1}_{t-1}}\right)\right]$. Following the trick introduced by the authors of \citep{AYPS11}, we apply Cauchy-Schwarz inequality to the term $\sum_{t\in[T]}\left(1\wedge\left\|X_t\right\|_{V^{-1}_{t-1}}\right).$
\begin{align}
\sum_{t\in[T]}\left(1\wedge\left\|X_t\right\|_{V^{-1}_{t-1}}\right)\leq\sqrt{T}\sqrt{\sum_{t\in[T]}1\wedge\left\|X_t\right\|^2_{V^{-1}_{t-1}}}.
\end{align}
By dividing the whole time horizon into consecutive pieces of length $w,$ we have 
\begin{align}
\label{eq:sw9}
\sqrt{\sum_{t\in[T]}1\wedge\left\|X_t\right\|^2_{V^{-1}_{t-1}}}\leq\sqrt{\sum_{i=0}^{\lceil T/w\rceil-1}\sum_{t=i\cdot w+1}^{(i+1) w}1\wedge\left\|X_t\right\|^2_{V^{-1}_{t-1}}}.
\end{align}
While a similar quantity has been analyzed by Lemma 11 of \citep{AYPS11}, we note that due to the fact that $V_{t}$'s are accumulated according to the sliding window principle, the key eq. (6) in Lemma 11's proof breaks, and thus the analysis of \citep{AYPS11} cannot be applied here. To this end, we state a technical lemma based on the Sherman-Morrison formula.
\begin{lemma}
	\label{lemma:sw2}
	For any $i\leq \lceil T/w\rceil-1,$ 
	\begin{align*}
	\sum_{t=i\cdot w+1}^{(i+1) w}1\wedge\left\|X_t\right\|^2_{V^{-1}_{t-1}}\leq\sum_{t=i\cdot w+1}^{(i+1) w}1\wedge\left\|X_t\right\|^2_{\overline{V}_{t-1}^{-1}},
	\end{align*}
	where 
	\begin{align}
	\overline{V}_{t-1}=\sum_{s=i\cdot w+1}^{t-1}X_sX_s^{\top}+\lambda I.
	\end{align}
\end{lemma}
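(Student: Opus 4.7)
The plan is to express $V_{t-1}$ as $\overline{V}_{t-1}$ plus a sum of rank-one PSD updates, and then show that each update can only shrink the inverse in the positive semi-definite (PSD) order. Concretely, for any $t \in [i\cdot w + 1, (i+1)w]$, the condition $t \leq (i+1)w$ forces $t - w \leq i\cdot w$, so the sliding window $[1\vee(t-w), t-1]$ strictly contains the intra-block window $[i\cdot w + 1, t-1]$. Consequently
\[
V_{t-1} = \overline{V}_{t-1} + \sum_{s = 1\vee(t-w)}^{i\cdot w} X_s X_s^\top,
\]
where the extra summation is a (possibly empty) sum of rank-one PSD matrices $X_s X_s^\top$ corresponding to observations from the previous block that still fall inside the sliding window.

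The main step is to use the Sherman--Morrison formula to show that adding a single rank-one PSD matrix decreases the inverse in the PSD order. Specifically, for any positive definite matrix $A$ and any $u \in \Re^d$,
\[
A^{-1} - (A + uu^\top)^{-1} = \frac{A^{-1} u u^\top A^{-1}}{1 + u^\top A^{-1} u},
\]
which is manifestly PSD since the denominator is strictly positive and the numerator is a rank-one PSD matrix. Applying this identity iteratively, with $A$ initialized to $\overline{V}_{t-1}$ and updated by each $X_s X_s^\top$ for $s \in [1\vee(t-w), i\cdot w]$ in turn, we conclude that $\overline{V}_{t-1}^{-1} \succeq V_{t-1}^{-1}$ in the PSD order.

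Once the PSD ordering is established, the conclusion is straightforward: for every $t \in [i\cdot w + 1, (i+1)w]$,
\[
\|X_t\|_{V_{t-1}^{-1}}^2 = X_t^\top V_{t-1}^{-1} X_t \leq X_t^\top \overline{V}_{t-1}^{-1} X_t = \|X_t\|_{\overline{V}_{t-1}^{-1}}^2,
\]
and capping both sides at $1$ preserves the inequality term by term. Summing over $t$ in the block $[i\cdot w + 1, (i+1)w]$ yields the claim. I do not anticipate a real obstacle; the only subtlety is handling the boundary case $t = i\cdot w + 1$, where the intra-block sum defining $\overline{V}_{t-1}$ is empty and $\overline{V}_{t-1} = \lambda I$, but the PSD ordering still holds trivially since $V_{t-1}$ is obtained by adding PSD rank-one terms to $\lambda I$.
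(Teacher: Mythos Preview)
Your proposal is correct and follows essentially the same route as the paper: both arguments observe that $1\vee(t-w)\leq i\cdot w+1$ for $t$ in the block, write $V_{t-1}=\overline{V}_{t-1}+\sum_{s=1\vee(t-w)}^{i\cdot w}X_sX_s^{\top}$, and then invoke the Sherman--Morrison formula iteratively to show that each rank-one PSD update can only shrink the inverse in the PSD order, yielding $\overline{V}_{t-1}^{-1}\succeq V_{t-1}^{-1}$ and hence the termwise inequality. The only cosmetic difference is that the paper writes out the chain of quadratic-form inequalities explicitly, whereas you state the PSD ordering once and then specialize; the content is identical.
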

\begin{proof}{Proof of Lemma \ref{lemma:sw2}.}
	For a fixed $i\leq \lceil T/w\rceil-1,$ 
	\begin{align}
	\nonumber\sum_{t=i\cdot w+1}^{(i+1) w}1\wedge\left\|X_t\right\|^2_{V^{-1}_{t-1}}=&\sum_{t=i\cdot w+1}^{(i+1) w}1\wedge X_t^{\top}V_{t-1}^{-1}X_t\\
	\label{eq:sw7}=&\sum_{t=i\cdot w+1}^{(i+1) w}1\wedge X_t^{\top}\left(\sum_{s=1\vee(t-w)}^{t-1}X_sX_s^{\top}+\lambda I\right)^{-1}X_t.
	\end{align}
	Note that $i\cdot w+1\geq1$ and $i\cdot w+1\geq t-w~\forall t\leq(i+1)w,$ we have
	\begin{align}
	i\cdot w+1\geq 1\vee (t-w).
	\end{align}
	Consider any $d$-by-$d$ positive definite matrix $A$ and $d$-dimensional vector $y,$ then by the Sherman-Morrison formula, the matrix
	\begin{align}
	B=A^{-1}-\left(A+yy^{\top}\right)^{-1}=A^{-1}-A^{-1}+\frac{A^{-1}yy^{\top}A^{-1}}{1+y^{\top}A^{-1}y}=\frac{A^{-1}yy^{\top}A^{-1}}{1+y^{\top}A^{-1}y}
	\end{align}
	is positive semi-definite. Therefore, for a given $t,$ we can iteratively apply this fact to obtain
	\begin{align}
	\nonumber&X_t^{\top}\left(\sum_{s=i\cdot w+1}^{t-1}X_sX_s^{\top}+\lambda I\right)^{-1}X_t\\
	\nonumber=&X_t^{\top}\left(\sum_{s=i\cdot w}^{t-1}X_sX_s^{\top}+\lambda I\right)^{-1}X_t+X_t^{\top}\left(\left(\sum_{s=i\cdot w+1}^{t-1}X_sX_s^{\top}+\lambda I\right)^{-1}-\left(\sum_{s=i\cdot w}^{t-1}X_sX_s^{\top}+\lambda I\right)^{-1}\right)X_t\\
	\nonumber=&X_t^{\top}\left(\sum_{s=i\cdot w}^{t-1}X_sX_s^{\top}+\lambda I\right)^{-1}X_t+X_t^{\top}\left(\left(\sum_{s=i\cdot w+1}^{t-1}X_sX_s^{\top}+\lambda I\right)^{-1}-\left(X_{i\cdot w}X_{i\cdot w}^{\top}+\sum_{s=i\cdot w+1}^{t-1}X_sX_s^{\top}+\lambda I\right)^{-1}\right)X_t\\
	\nonumber\geq&X_t^{\top}\left(\sum_{s=i\cdot w}^{t-1}X_sX_s^{\top}+\lambda I\right)^{-1}X_t\\
	\nonumber&\vdots\\
	\label{eq:sw8}\geq&X_t^{\top}\left(\sum_{s=1\vee(t-w)}^{t-1}X_sX_s^{\top}+\lambda I\right)^{-1}X_t.
	\end{align}
	Plugging inequality (\ref{eq:sw8}) to (\ref{eq:sw7}), we have
	\begin{align}
	\nonumber\sum_{t=i\cdot w+1}^{(i+1) w}1\wedge\left\|X_t\right\|^2_{V^{-1}_{t-1}}\leq&\sum_{t=i\cdot w+1}^{(i+1) w}1\wedge X_t^{\top}\left(\sum_{s=i\cdot w+1}^{t-1}X_sX_s^{\top}+\lambda I\right)^{-1}X_t\\
	\leq&\sum_{t=i\cdot w+1}^{(i+1) w}1\wedge\left\|X_t\right\|^2_{\overline{V}_{t-1}^{-1}},
	\end{align}
	which concludes the proof.\halmos
\end{proof}
From Lemma \ref{lemma:sw2} and eq. (\ref{eq:sw9}), we know that
\begin{align}
\nonumber2\beta\sum_{t\in[T]}\left(1\wedge\left\|X_t\right\|_{V^{-1}_{t-1}}\right)\leq&2\beta\sqrt{T}\cdot\sqrt{\sum_{i=0}^{\lceil T/w\rceil-1}\sum_{t=i\cdot w+1}^{(i+1) w}1\wedge\left\|X_t\right\|^2_{\overline{V}^{-1}_{t-1}}}\\
\label{eq:sw10}\leq&2\beta\sqrt{T}\cdot\sqrt{\sum_{i=0}^{\lceil T/w\rceil-1}2d\ln\left(\frac{d\lambda+wL^2}{d\lambda}\right)}\\
\nonumber\leq&2\beta T\sqrt{\frac{2d}{w}\ln\left(\frac{d\lambda+wL^2}{d\lambda}\right)}.
\end{align}
Here, eq. (\ref{eq:sw10}) follows from Lemma 11 of \citep{AYPS11}.

Now putting these two parts to eq. (\ref{eq:sw11}), we have
\begin{align}
\nonumber&\Ex\left[\text{Regret}_T\left(\swofu\right)\right]\\
\nonumber\leq&2LwB_T+2\beta T\sqrt{\frac{2d}{w}\ln\left(\frac{d\lambda+wL^2}{d\lambda}\right)}+2T\delta\\
\label{eq:explicit_swucb_bd}=&2LwB_T+\frac{2T}{\sqrt{w}}\left(R\sqrt{d\ln\left(\frac{1+wL^2/\lambda}{\delta}\right)}+\sqrt{\lambda}S\right)\sqrt{2d\ln\left(\frac{d\lambda+wL^2}{d\lambda}\right)}+2T\delta.
\end{align}
Now if $B_T$ is known, we can take $w=\Theta\left((dT)^{2/3}B_t^{-2/3}\right)$ and $\delta=1/T,$ we have
\begin{align}
\Ex\left[\nonumber\text{Regret}_T\left(\swofu\right)\right]=\widetilde{O}\left(d^{\frac{2}{3}}B_T^{\frac{1}{3}}T^{\frac{2}{3}}\right);
\end{align}
while if $B_T$ is not unknown, taking $w=\Theta\left((dT)^{2/3}\right)$ and $\delta=1/T,$ we have
\begin{align}
\Ex\left[\nonumber\text{Regret}_T\left(\swofu\right)\right]=\widetilde{O}\left(d^{\frac{2}{3}}B_TT^{\frac{2}{3}}\right).
\end{align}
\section{Proof of Lemma \ref{lemma:bob}}\label{sec:lemma:bob}
For any block $i,$ the absolute sum of rewards can be written as
\begin{align*}
	\left|\sum_{t=(i-1)H+1}^{i\cdot H\wedge T}\langle X_t,\theta_t\rangle+\eta_t\right|\leq\sum_{t=(i-1)H+1}^{i\cdot H\wedge T}\left|\langle X_t,\theta_t\rangle\right|+\left|\sum_{t=(i-1)H+1}^{i\cdot H\wedge T}\eta_t\right|\leq H\nu+\left|\sum_{t=(i-1)H+1}^{i\cdot H\wedge T}\eta_t\right|,
\end{align*}
where we have iteratively applied the triangle inequality as well as the fact that $\left|\langle X_t,\theta_t\rangle\right|\leq \nu$ for all $t.$
	
Now by property of the $R$-sub-Gaussian \citep{RH18}, we have the absolute value of the noise term $\eta_t$ exceeds $2R\sqrt{\ln T}$ for a fixed $t$ with probability at most $1/T^2$ \ie, 
\begin{align}
	\Pr\left(\left|\sum_{t=(i-1)H+1}^{i\cdot H\wedge T}\eta_t\right|\geq 2R\sqrt{H\ln\frac{T}{\sqrt{H}}}\right)\leq\frac{2H}{T^2}.
\end{align} 
Applying a simple union bound, we have
\begin{align}
	\Pr\left(\exists i\in\left\lceil\frac{T}{H}\right\rceil:\left|\sum_{t=(i-1)H+1}^{i\cdot H\wedge T}\eta_t\right|\geq 2R\sqrt{H\ln\frac{T}{\sqrt{H}}}\right)\leq\sum_{i=1}^{\lceil T/H\rceil}\Pr\left(\left|\sum_{t=(i-1)H+1}^{i\cdot H\wedge T}\eta_t\right|\geq 2R\sqrt{H\ln\frac{T}{\sqrt{H}}}\right)\leq\frac{2}{T}.
\end{align}
	Therefore, we have
\begin{align}
	\Pr\left(Q\geq H\nu+2R\sqrt{H\ln\frac{T}{\sqrt{H}}}\right)\leq\Pr\left(\exists i\in\left\lceil\frac{T}{H}\right\rceil:\left|\sum_{t=(i-1)H+1}^{i\cdot H\wedge T}\eta_t\right|\geq 2R\sqrt{H\ln\frac{T}{\sqrt{H}}}\right)\leq\frac{2}{T}.
\end{align}
	The statement then follows. 
\section{Proof of Proposition \ref{prop:bob}}
\label{sec:prop:bob}
By design of the \bob, its dynamic regret can be decomposed as the regret of the \swofu~with the optimally tuned window size $w_i=w^{\dag}$ for each block $i$ plus the loss due to learning the value $w^{\dag}$ with the EXP3 algorithm, \ie,
\begin{align}
	\label{eq:bob_decompose}
	\nonumber\Ex\left[\text{Regret}_T(\bob)\right]=&\Ex\left[\sum_{t=1}^T\langle x_t^*,\theta_t\rangle-\sum_{t=1}^T\langle X_t,\theta_t\rangle\right]\\
	\nonumber=&\Ex\left[\sum_{t=1}^T\langle x_t^*,\theta_t\rangle-\sum_{i=1}^{\lceil T/H\rceil}\sum_{t=(i-1)H+1}^{i\cdot H\wedge T}\left\langle X_t^{w^{\dag}},\theta_t\right\rangle\right]\\
	&+\Ex\left[\sum_{i=1}^{\lceil T/H\rceil}\sum_{t=(i-1)H+1}^{i\cdot H\wedge T}\left\langle X_t^{w^{\dag}},\theta_t\right\rangle\right.\left.-\sum_{i=1}^{\lceil T/H\rceil}\sum_{t=(i-1)H+1}^{i\cdot H\wedge T}\left\langle X_t^{w_i},\theta_t\right\rangle\right].
\end{align}
Here, eq. (\ref{eq:bob_decompose}) holds as the \bob~restarts the \swofu~in each block, and for a round $t$ in block $i,$ $X_t^{w}$ refers to the action selected in round $t$ by the \swofu~with window size $w\wedge (t-(i-1)H-1)$ initiated at the beginning of block $i.$ 

By Theorem \ref{theorem:sw_main}, the first expectation in eq. (\ref{eq:bob_decompose}) can be upper bounded as
\begin{align}
	\label{eq:bob_decompose1}
	\nonumber\Ex\left[\sum_{t=1}^T\langle x_t^*,\theta_t\rangle-\sum_{i=1}^{\lceil T/H\rceil}\sum_{t=(i-1)H+1}^{i\cdot H\wedge T}\left\langle X_t^{w^{\dag}},\theta_t\right\rangle\right]
	\nonumber=&\Ex\left[\sum_{i=1}^{\lceil T/H\rceil}\sum_{t=(i-1)H+1}^{i\cdot H\wedge T}\left\langle x^*_t-X_t^{w^{\dag}},\theta_t\right\rangle\right]\\
	\nonumber=&\sum_{i=1}^{\lceil T/H\rceil}\widetilde{O}\left(w^{\dag}B_T(i)+\frac{dH}{\sqrt{w^{\dag}}}\right)\\
	=&\widetilde{O}\left(w^{\dag}B_T+\frac{dT}{\sqrt{w^{\dag}}}\right),
\end{align}
where $$B_{T}(i)=\sum_{t=(i-1)H+1}^{(i\cdot H\wedge t)-1}\|\theta_{t}-\theta_{t+1}\|_2$$ is the total variation in block $i.$ 

We then turn to the second expectation in eq. (\ref{eq:bob_decompose}). We can easily see that the number of rounds for the EXP3 algorithm is $\lceil T/H\rceil$ and the number of possible values of $w_i$'s is $|J|.$ If the maximum absolute sum of reward of any block does not exceed $Q,$ the authors of \citep{ABFS02} gives the following regret bound.
\begin{align}
	\nonumber&\Ex\left[\sum_{i=1}^{\lceil T/H\rceil}\sum_{t=(i-1)H+1}^{i\cdot H\wedge T}\left\langle X_t^{w^{\dag}},\theta_t\right\rangle.-\sum_{i=1}^{\lceil T/H\rceil}\sum_{t=(i-1)H+1}^{i\cdot H\wedge T}\left\langle X_t^{w_i},\theta_t\right\rangle\middle|\forall i\in[\lceil T/H\rceil]\sum_{t=(i-1) H+1}^{i\cdot H\wedge T}Y_t\leq Q/2\right]\\
	\label{eq:bob_decompose2}=&\widetilde{O}\left(Q\sqrt{\frac{|J|T}{H}}\right).
\end{align} 
Note that the regret of our problem is at most $T,$ eq. (\ref{eq:bob_decompose2}) can be further upper bounded as
\begin{align}
	\label{eq:bob_decompose3}
\nonumber&\Ex\left[\sum_{i=1}^{\lceil T/H\rceil}\sum_{t=(i-1)H+1}^{i\cdot H\wedge T}\left\langle X_t^{w^{\dag}},\theta_t\right\rangle-\sum_{i=1}^{\lceil T/H\rceil}\sum_{t=(i-1)H+1}^{i\cdot H\wedge T}\left\langle X_t^{w_i},\theta_t\right\rangle\right]\\
\nonumber\leq&\widetilde{O}\left(Q\sqrt{\frac{|J|T}{H}}\right)\times\Pr\left(\forall i\in[\lceil T/H\rceil]\sum_{t=(i-1) H+1}^{i\cdot H\wedge T}Y_t\leq Q/2\right)\\
\nonumber&+\Ex\left[\sum_{i=1}^{\lceil T/H\rceil}\sum_{t=(i-1)H+1}^{i\cdot H\wedge T}\left\langle X_t^{w^{\dag}},\theta_t\right\rangle-\sum_{i=1}^{\lceil T/H\rceil}\sum_{t=(i-1)H+1}^{i\cdot H\wedge T}\left\langle X_t^{w_i},\theta_t\right\rangle\middle| \exists i\in[\lceil T/H\rceil]\sum_{t=(i-1) H+1}^{i\cdot H\wedge T}Y_t\geq Q/2\right]\\
\nonumber&\quad\times\Pr\left(\exists i\in[\lceil T/H\rceil]\sum_{t=(i-1) H+1}^{i\cdot H\wedge T}Y_t\geq Q/2\right)\\
\nonumber\leq&\widetilde{O}\left(\sqrt{H|J|T}\right)+T\cdot\frac{2}{T}\\
=&\widetilde{O}\left(\sqrt{H|J|T}\right).
\end{align} 
Combining eq. (\ref{eq:bob_decompose}), (\ref{eq:bob_decompose1}), and (\ref{eq:bob_decompose3}), the statement follows.

\section{Proof of Theorem \ref{theorem:bob}}
\label{sec:theorem:bob}
With Proposition \ref{prop:bob} as well as the choices of $H$ and $J$ in eq. (\ref{eq:bob_parameters}), the regret of the \bob~is 
\begin{align}
&\R_T(\bob)=\widetilde{O}\left(w^{\dag}B_T+\frac{dT}{\sqrt{w^{\dag}}}+\sqrt{H|J|T}\right)=\widetilde{O}\left(w^{\dag}B_T+\frac{dT}{\sqrt{w^{\dag}}}+d^{\frac{1}{2}}T^{\frac{3}{4}}\right).
\end{align}
Therefore, we have that when $B_T\geq d^{-1/2}T^{1/4},$ the \bob~is able to converge to the optimal window size, \ie, $w^{\dag}=w^*~(\leq H),$ and the dynamic regret of the \bob~is upper bounded as
\begin{align}
\R_T(\bob)=&\widetilde{O}\left(d^{\frac{2}{3}}B_T^{\frac{1}{3}}T^{\frac{2}{3}}+d^{\frac{1}{2}}T^{\frac{3}{4}}\right);
\end{align}
while if $B_T< d^{-1/2}T^{1/4},$ the \bob~converges to the window size $w^{\dag}=H,$ and the dynamic regret is 
\begin{align}
\R_T(\bob)=&\widetilde{O}\left(dB_TT^{\frac{1}{2}}+d^{\frac{1}{2}}T^{\frac{3}{4}}\right)=\widetilde{O}\left(d^{\frac{1}{2}}T^{\frac{3}{4}}\right).
\end{align}
Combining the above two cases, we conclude the desired dynamic regret bound.

\section{Proof of Theorem \ref{theorem:mab_deviation}}\label{sec:theorem:mab_deviation}
Similar to eq. (\ref{eq:sw16}), we can rewrite the difference $\hat{\theta}_t-\theta_t$ as
\begin{align}
\nonumber&V_{t-1}^{*}\left(\sum_{s=1\vee(t-w)}^{t-1}X_sX_s^{\top}\theta_s+\sum_{s=1\vee(t-w)}^{t-1}\eta_sX_s\right)-\theta_t\\
\label{eq:sw22}=&V_{t-1}^{*}\sum_{s=1\vee(t-w)}^{t-1}X_sX_s^{\top}\left(\theta_s-\theta_t\right)+V_{t-1}^{*}\left(\sum_{s=1\vee(t-w)}^{t-1}\eta_sX_s\right).
\end{align}
We then analyze the two terms in eq. (\ref{eq:sw22}) separately. For the first term,
\begin{align}
\nonumber\left\|V_{t-1}^{*}\sum_{s=1\vee(t-w)}^{t-1}X_sX_s^{\top}\left(\theta_s-\theta_t\right)\right\|_{\infty}
=&\left\|V_{t-1}^{*}\sum_{s=1\vee(t-w)}^{t-1}X_sX_s^{\top}\left[\sum_{p=s}^{t-1}\left(\theta_p-\theta_{p+1}\right)\right]\right\|_{\infty}\\
\nonumber=&\left\|\sum_{p=1	\vee(t-w)}^{t-1}\left[V_{t-1}^{*}\sum_{s=1\vee(t-w)}^{p}X_sX_s^{\top}\left(\theta_p-\theta_{p+1}\right)\right]\right\|_{\infty}\\
\nonumber\leq&\sum_{p=1	\vee(t-w)}^{t-1}\left\|V_{t-1}^{*}\sum_{s=1\vee(t-w)}^{p}X_sX_s^{\top}\left(\theta_p-\theta_{p+1}\right)\right\|_{\infty}\\
\label{eq:sw23}\leq&\sum^{t-1}_{p = 1\vee (t-w)}\left\|\theta_s-\theta_{s+1}\right\|_{\infty}.
\end{align}	
Here, almost all the steps follow exactly the same arguments as those of eq. (\ref{eq:sw})-(\ref{eq:sw3}), except that in inequality (\ref{eq:sw23}), we make the direct observation that 
\begin{align}
\label{eq:sw25}
V_{t-1}^*=\begin{pmatrix}
\frac{\bm{1}[N_{t-1}(1)>0]}{N_{t-1}(1)}&0&\ldots&\ldots&\ldots&0\\
0&	\frac{\bm{1}[N_{t-1}(2)>0]}{N_{t-1}(2)}&0&\ldots&\ldots&0\\
0&0&\ddots&0&\ldots&0\\
\vdots&\vdots&\vdots&\ddots&\ddots&\vdots\\
0&0&0&\ldots&\frac{\bm{1}[N_{t-1}(d-1)>0]}{N_{t-1}(d-1)}&0\\
0&0&0&\ldots&0&\frac{\bm{1}[N_{t-1}(d)>0]}{N_{t-1}(d)}\\
\end{pmatrix}
\end{align}
and 
\begin{align}
\sum_{s=1\vee(t-w)}^pX_sX_s^{\top}=\begin{pmatrix}
N'_{p}(1)&0&\ldots&\ldots&\ldots&0\\
0&	N'_{p}(2)&0&\ldots&\ldots&0\\
0&0&\ddots&0&\ldots&0\\
\vdots&\vdots&\vdots&\ddots&\ddots&\vdots\\
0&0&0&\ldots&N'_{p}(d-1)&0\\
0&0&0&\ldots&0&N'_{p}(d)\\
\end{pmatrix},
\end{align}
where $N'_{p}(i)$ is the number of times that action $e_i$ is selected during rounds $1\vee(t-w),\ldots,p$ for all $i\in[d].$ As $p\leq t-1,$ we have $N'_{p}(i)\leq N_{t-1}(i)$ for all $i\in[d].$ Now, $V_{t-1}^{*}\sum_{s=1\vee(t-w)}^{p}X_sX_s^{\top}$ is a diagonal matrix with all diagonal entries less than 1, and hence the argument.

For the second term of eq. (\ref{eq:sw22}), we consider for any fixed $i\in[d],$
\begin{align}
\nonumber\left|e^{\top}_iV_{t-1}^*\left(\sum_{s=1\vee(t-w)}^{t-1}\eta_sX_s\right)\right|=&\frac{\bm{1}[N_{t-1}(i)>0]}{N_{t-1}(i)}\left|e^{\top}_i\left(\sum_{s=1\vee(t-w)}^{t-1}\eta_sX_s\right)\right|\\
\label{eq:sw26}=&\frac{\bm{1}[N_{t-1}(i)>0]\left(\sum_{s=1\vee(t-w)}^{t-1}\bm{1}[I_s=i]\eta_s\right)}{N_{t-1}(i)},
\end{align}
where the first step again use the definition of $V_{t-1}^*$ in eq. (\ref{eq:sw25}). Now if $N_{t-1}(i)=0,$ eq. (\ref{eq:sw26}) equals to 0; while if $N_{t-1}(i)>0,$ we can apply the Corollary 1.7 of \citep{RH18} to obtain that
\begin{align}
\label{eq:sw27}
\Pr\left(\left|\frac{\bm{1}[N_{t-1}(i)>0]\left(\sum_{s=1\vee(t-w)}^{t-1}\bm{1}[I_s=i]\eta_s\right)}{N_{t-1}(i)}\right|\leq R\sqrt{\frac{2\ln\left({2dT^2}\right)}{N_{t-1}(i)}}\right)\geq1-\frac{1}{dT^2}.
\end{align}
Hence, with probability at least $1-1/dT^2,$ for any fixed $t\in[T]$ and any fixed $i\in[d],$ 
\begin{align}
\nonumber\left|e_i^\top ( \hat{\theta}_t - \theta_t)\right|=&\left| e_i^\top \left( V_{t-1}^{*}\sum_{s=1\vee(t-w)}^{t-1}X_sX_s^{\top}\left(\theta_s-\theta_t\right)\right)+ e_i^\top V_{t-1}^{*}\left(\sum_{s=1\vee(t-w)}^{t-1}\eta_sX_s-\lambda\theta_t\right) \right|\\
\label{eq:sw28}\leq&\left| e_i^\top \left( V_{t-1}^{*}\sum_{s=1\vee(t-w)}^{t-1}X_sX_s^{\top}\left(\theta_s-\theta_t\right)\right)\right|+\left| e_i^\top V_{t-1}^{*}\left(\sum_{s=1\vee(t-w)}^{t-1}\eta_sX_s-\lambda\theta_t\right) \right| \\
\label{eq:sw29}\leq& \left\|e_i\right\|_1\cdot \left\|V_{t-1}^{*}\sum_{s=1\vee(t-w)}^{t-1}X_sX_s^{\top}\left(\theta_s-\theta_t\right)\right\|_{\infty}+R\sqrt{\frac{2\ln\left({2dT^2}\right)}{N_{t-1}(i)}}\\
\label{eq:sw30}\leq&\sum^{t-1}_{s = 1\vee (t-w)}\left\|\theta_s-\theta_{s+1}\right\|_{\infty} +  R\sqrt{\frac{2\ln\left({2dT^2}\right)}{N_{t-1}(i)}},
\end{align}
where inequality (\ref{eq:sw28}) applies the triangle inequality, inequality (\ref{eq:sw29}) follows from the Holder's inequality as well as inequality (\ref{eq:sw26}) and (\ref{eq:sw27}), and inequality (\ref{eq:sw30}) follows from inequality (\ref{eq:sw23}). 

The statement of the theorem now follows immediately by applying union bound over the decision set and the time horizon as well as the simple observation $\|e_i\|_{V_{t-1}^*}=\sqrt{1/N_{t-1}(i)}.$
\section{Proof of Theorem \ref{theorem:glm_sw_deviation}}
\label{sec:theorem:glm_sw_deviation}
From the proof of Proposition 1 in \cite{FCAS10}, we know that for all $x\in D$
\begin{align}
\label{eq:glm1}
\left|\mu\left(\left\langle x,\theta_t\right\rangle\right)-\mu\left(\left\langle x,\hat\theta_t\right\rangle\right)\right|\leq {k_{\mu}}\left|x^{\top}G_{t-1}^{-1}\left[\sum_{s=1\vee(t-w)}^{t-1}\left(\mu\left(\left\langle X_s,\theta_t\right\rangle\right)-\mu\left(\left\langle X_s,\hat\theta_t\right\rangle\right)\right)X_s\right]\right|,
\end{align}
where 
$$G_{t-1}=\int_{0}^1\left[\sum_{s=1\vee(t-w)}^{t-1}X_sX_s^{\top}\mu\left(\left\langle X_s,s_0\theta_t+(1-s_0)\hat{\theta}_t\right\rangle\right)\right]ds_0$$
By virtue of the maximum quasi-likelihood estimation, \ie, eq. (\ref{eq:glm}) we have
\begin{align}
\sum_{s=1\vee(t-w)}^{t-1}\mu\left(\left\langle X_s,\hat\theta_t\right\rangle\right)X_s=	\sum_{s=1\vee(t-w)}^{t-1}Y_sX_s=\sum_{s=1\vee(t-w)}^{t-1}\left(\mu\left(\left\langle X_s,\theta_s\right\rangle\right)+\eta_s\right)X_s,
\end{align}
and (\ref{eq:glm1}) is
\begin{align}
\nonumber&k_{\mu}\left|x^{\top}G_{t-1}^{-1}\sum_{s=1\vee(t-w)}^{t-1}\left(\mu\left(\left\langle X_s,\theta_t\right\rangle\right)-\mu\left(\left\langle X_s,\theta_s\right\rangle\right)-\eta_s\right)X_s\right|\\
\nonumber=&k_{\mu}\left|x^{\top}G_{t-1}^{-1}\sum_{s=1\vee(t-w)}^{t-1}\left(\mu\left(\left\langle X_s,\theta_t\right\rangle\right)-\mu\left(\left\langle X_s,\theta_s\right\rangle\right)\right)X_s-x^{\top}G_{t-1}^{-1}\sum_{s=1\vee(t-w)}^{t-1}\eta_sX_s\right|\\
\label{eq:glm2}\leq&k_{\mu}\left|x^{\top}G_{t-1}^{-1}\sum_{s=1\vee(t-w)}^{t-1}\left(\mu\left(\left\langle X_s,\theta_t\right\rangle\right)-\mu\left(\left\langle X_s,\theta_s\right\rangle\right)\right)X_s\right|+k_{\mu}\left|x^{\top}G_{t-1}^{-1}\sum_{s=1\vee(t-w)}^{t-1}\eta_sX_s\right|\\
\label{eq:glm3}\leq&k_{\mu}\left|x^{\top}G_{t-1}^{-1}\sum_{s=1\vee(t-w)}^{t-1}\left(\mu\left(\left\langle X_s,\theta_t\right\rangle\right)-\mu\left(\left\langle X_s,\theta_s\right\rangle\right)\right)X_s\right|+\beta\|x\|_{V^{-1}_{t-1}}\\
\label{eq:glm4}\leq&k_{\mu}\left\|x\right\|_2 \left\|G_{t-1}^{-1}\sum_{s=1\vee(t-w)}^{t-1}\left(\mu\left(\left\langle X_s,\theta_t\right\rangle\right)-\mu\left(\left\langle X_s,\theta_s\right\rangle\right)\right)X_s\right\|_2+\beta\|x\|_{V^{-1}_{t-1}}\\
\nonumber\leq&\frac{k_{\mu}L}{c_{\mu}}\left\|V_{t-1}^{-1}\sum_{s=1\vee(t-w)}^{t-1}\left(\mu\left(\left\langle X_s,\theta_t\right\rangle\right)-\mu\left(\left\langle X_s,\theta_s\right\rangle\right)\right)X_s\right\|_2+\beta\|x\|_{V^{-1}_{t-1}}.
\end{align}
Here, inequality (\ref{eq:glm2}) is a consequence of the triangle inequality, inequality (\ref{eq:glm3}) again follows from Proposition 1 of \cite{FCAS10}, inequality (\ref{eq:glm4}) is the Cauchy-Schwarz inequality, and the last step uses the fact that $G_{t-1}\succeq c_{\mu}V_{t-1}.$ For the firs quantity, we have
\begin{align}
\nonumber&\left\|V_{t-1}^{-1}\sum_{s=1\vee(t-w)}^{t-1}\left(\mu\left(\left\langle X_s,\theta_t\right\rangle\right)-\mu\left(\left\langle X_s,\theta_s\right\rangle\right)\right)X_s\right\|_2\\
\nonumber=&\left\|V_{t-1}^{-1}\sum_{s=1\vee(t-w)}^{t-1}X_s\sum_{p=s}^{t-1}\left(\mu\left(\left\langle X_s,\theta_{p+1}\right\rangle\right)-\mu\left(\left\langle X_s,\theta_p\right\rangle\right)\right)\right\|_2\\
\nonumber=&\left\|V_{t-1}^{-1}\sum_{p=1\vee(t-w)}^{t-1}\sum_{s=1\vee(t-w)}^{p}X_s\left(\mu\left(\left\langle X_s,\theta_{p+1}\right\rangle\right)-\mu\left(\left\langle X_s,\theta_p\right\rangle\right)\right)\right\|_2\\
\label{eq:glm5}\leq&\sum_{p=1\vee(t-w)}^{t-1}\left\|V_{t-1}^{-1}\sum_{s=1\vee(t-w)}^{p}X_s\left(\mu\left(\left\langle X_s,\theta_{p+1}\right\rangle\right)-\mu\left(\left\langle X_s,\theta_p\right\rangle\right)\right)\right\|_2\\
\label{eq:glm6}=&\sum_{p=1\vee(t-w)}^{t-1}\left\|V_{t-1}^{-1}\sum_{s=1\vee(t-w)}^{p}X_s\dot{\mu}\left(\left\langle X_s,\tilde{\theta}_p\right\rangle\right)X_s^{\top}\left(\theta_{p+1}-\theta_p\right)\right\|_2\\
\nonumber=&\sum_{p=1\vee(t-w)}^{t-1}\left\|V_{t-1}^{-1}\sum_{s=1\vee(t-w)}^{p}\dot{\mu}\left(\left\langle X_s,\tilde{\theta}_p\right\rangle\right)X_sX_s^{\top}\left(\theta_{p+1}-\theta_p\right)\right\|_2\\
\label{eq:glm7}=&\sum_{p=1\vee(t-w)}^{t-1}\lambda_{\max}\left(V_{t-1}^{-1}\sum_{s=1\vee(t-w)}^{p}\dot{\mu}\left(\left\langle X_s,\tilde{\theta}_p\right\rangle\right)X_sX_s^{\top}\right)\left\|\left(\theta_{p+1}-\theta_p\right)\right\|_2\\
\nonumber\leq&k_{\mu}\sum_{p=1\vee(t-w)}^{t-1}\lambda_{\max}\left(\left(\sum_{s=1\vee(t-w)}^{p}X_sX_s^{\top}\right)V_{t-1}^{-2}\left(\sum_{s=1\vee(t-w)}^{p}X_sX_s^{\top}\right)\right)\left\|\left(\theta_{p+1}-\theta_p\right)\right\|_2\\
\label{eq:glm8}\leq&k_{\mu}\sum_{p=1\vee(t-w)}^{t-1}\left\|\left(\theta_{p+1}-\theta_p\right)\right\|_2,
\end{align}
where inequality (\ref{eq:glm5}) is an immediate consequence of the triangle inequality, eq. (\ref{eq:glm6}) utilizes the mean value theorem (with $\tilde{\theta}_p$ being some certain linear combination of $\theta_p$ and $\theta_{p+1}$ for all $p$), and inequalities (\ref{eq:glm7}) and (\ref{eq:glm8}) follow from the same steps as the proof of Lemma \ref{lemma:sw} in Section \ref{sec:theorem:sw_deviation}.
\section{Proof of Theorem \ref{theorem:semi_lower_bound}}\label{sec:theorem:semi_lower_bound}
We start with a regret lower bound result from \citep{BGZ18} on drifting $K$-armed bandits:
\begin{theorem}[\cite{BGZ18}]\label{thm:bgz_lowerbd}
Consider the drifting $K$-armed bandit problem, where $K\geq 2$, with $T\geq 1$ rounds. For any $B_T\in [1 / K, T/K]$, there exists a finite class of reward distributions $\tilde{\mathcal{P}} = \{\tilde{P}^{(\ell)}\}^L_{\ell=1}$, where $ \tilde{P}^{(\ell)} = \{\tilde{P}^{(\ell)}_{t, k}\}_{t\in [T], k\in [K]}$, that satisfy the following:
\begin{itemize}
\item Each $\tilde{P}^{(\ell)}_{t, k}$ represents the reward distribution of arm $k$ in round $t$ under distribution $\tilde{P}^{(\ell)}$. For each $ell, t, k$, the distribution $\tilde{P}^{(\ell)}_{t, k}$  is a Bernoulli distribution, with the mean denoted $\tilde{\theta}^{(\ell)}_{t, k}$.
\item For every $\ell\in [L]$, the following variational budget inequality holds:
$$
\sum^{T-1}_{t=1} \max_{k\in [K]}\left\{ \left|\tilde{\theta}^{(\ell)}_{t + 1}(k) - \tilde{\theta}^{(\ell)}_{t}(k)  \right| \right\}\leq B_T.
$$
\item For any non-anticipatory policy $\tilde{\pi}$ , there exists $\ell\in [L]$ under which the dynamic regret is lower bounded:
$$
\sum^T_{t=1}\left\{ \max_{k\in [K]}\tilde{\theta}^{(\ell)}_{t}(k) - \mathbb{E}[\tilde{\theta}^{(\ell)}_{t}(I_t)]\right\} \geq \frac{1}{4\sqrt{2}} (K B_T)^{1/3} T^{2/3}.
$$
We denote the choice of arm under policy $\tilde{\pi}$ in round $t$ as $I_t$, and the expectation is taken over the randomness in the choice of $I_t$, which is caused by the previous outcomes and the policy's internal randomness.
\end{itemize}
\end{theorem}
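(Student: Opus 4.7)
The plan is to establish Theorem \ref{theorem:semi_lower_bound} by reducing the drifting combinatorial semi-bandit problem to $m$ parallel copies of the drifting $K$-armed bandit problem with $K = d/m$, and then invoking Theorem \ref{thm:bgz_lowerbd} on each copy. First I would partition the ground set $E$ into $m$ disjoint groups $E_1,\ldots,E_m$, each of size $K=d/m$ (the assumption $d\geq 2m$ ensures $K\geq 2$), and restrict $\mathcal{E}$ to indicator vectors of subsets that contain \emph{exactly one} item from each group. Every such action satisfies $\|X\|_1 = m$, matching the maximum arm size, and the combinatorial decision at each round factors into $m$ independent $K$-armed bandit decisions. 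The hypothesis $m/d\leq B_T\leq Tm/d$ translates into $1/K\leq B_T\leq T/K$, which is exactly the range required by Theorem \ref{thm:bgz_lowerbd}.

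Next I would construct the hard instance as a synchronized product of BGZ instances. For each group $j\in[m]$, draw an instance $\tilde{P}^{(\ell_j)}$ from the family $\tilde{\mathcal{P}}$ of Theorem \ref{thm:bgz_lowerbd} with parameters $(K,T,B_T)$, and let the item means in $E_j$ during round $t$ equal the arm means $\tilde{\theta}^{(\ell_j)}_t$. In the BGZ construction the mean changes occur only at a common set of block boundaries and with a common magnitude across $\ell\in[L]$, so the jumps across the $m$ groups are synchronized in time. This lets me control the sup-norm variation of the combined instance: since at each round the max across groups equals the common per-group jump,
\[
\sum_{t=1}^{T-1}\|\theta_{t+1}-\theta_t\|_{\infty} \;=\; \sum_{t=1}^{T-1}\max_{j\in[m]}\|\tilde{\theta}^{(\ell_j)}_{t+1}-\tilde{\theta}^{(\ell_j)}_{t}\|_{\infty} \;\leq\; B_T,
\]
so the combined instance is a legitimate drifting combinatorial semi-bandit with budget $B_T$.

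To lower-bound the regret, fix any non-anticipatory policy $\pi$. The per-round regret decomposes additively across groups because each action picks one item per $E_j$ and the mean vector is a concatenation of the per-group means. I would then use the standard product/minimax averaging device: place on each $\tilde{P}^{(\ell_j)}$ the prior $\mu$ on $\tilde{\mathcal{P}}$ that witnesses the per-group minimax bound, take the product prior $\mu^m$, and, for each group $j$, condition on $\ell_{-j}$ together with the (independent of $\ell_j$) feedback from the other groups. The conditional behavior of $\pi$ inside $E_j$ is then a valid randomized non-anticipatory $K$-armed bandit policy, so Theorem \ref{thm:bgz_lowerbd} lower-bounds its expected per-group regret by $(KB_T)^{1/3}T^{2/3}/(4\sqrt{2})$. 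Summing over $j$ and extracting a realization $(\ell_1,\ldots,\ell_m)$ that attains the expected lower bound yields
\[
\mathbb{E}\!\left[\sum_{t=1}^T\max_{X^*\in\mathcal{E}}\langle X^*-X_t,\theta_t\rangle\right] \;\geq\; \frac{m}{4\sqrt{2}}\left(\frac{d B_T}{m}\right)^{1/3}T^{2/3} \;=\; \Omega\!\left(d^{1/3}m^{2/3}B_T^{1/3}T^{2/3}\right).
\]

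The main obstacle will be legitimizing the synchronized product. On the budget side, I need the BGZ construction to have a common skeleton of change times and a common change magnitude across all $\ell$, so that the $\max_{j}$ in the display above collapses to a single term and the total budget stays at $B_T$ rather than growing to $mB_T$; if the statement of \citep{BGZ18} does not give this directly, I would fall back on the standard block-based BGZ instance (Bernoulli arms with means $1/2\pm\Delta$, block length and $\Delta$ tuned to $B_T$, $T$, $K$) and rebuild the class by hand. On the regret side, the subtlety is that $\pi$'s choice in $E_j$ may nominally depend on observations from other groups; this is handled cleanly by the conditioning argument above, since the semi-bandit feedback from $E_{j'}$ with $j'\neq j$ is independent of $\ell_j$ once $\ell_{-j}$ is fixed, so averaging over $\ell_j\sim \mu$ only sees an effective $K$-armed bandit policy acting inside $E_j$.
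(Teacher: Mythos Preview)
Your proposal does not actually prove the stated Theorem~\ref{thm:bgz_lowerbd}; that result is quoted from \cite{BGZ18} and the paper itself does not prove it either. What you have written is a proof of Theorem~\ref{theorem:semi_lower_bound}, which \emph{uses} Theorem~\ref{thm:bgz_lowerbd} as a black box. So the right comparison is between your argument and the paper's proof in Section~\ref{sec:theorem:semi_lower_bound}.

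Your route is correct but genuinely different from the paper's. The paper takes the ``dual'' construction: it sets $K=d/m$, defines only $K$ combinatorial arms $a_1,\ldots,a_K$ where $a_i$ is the block of $m$ consecutive items $\{(i-1)m+1,\ldots,im\}$, and makes the $m$ Bernoulli rewards inside each block perfectly correlated (all zero or all one). This collapses the semi-bandit instance to a single $K$-armed bandit whose per-round reward is exactly $m$ times the underlying arm's reward; any combinatorial policy $\pi$ maps bijectively to a $K$-armed policy $\tilde{\pi}$, and one application of Theorem~\ref{thm:bgz_lowerbd} gives regret $\geq m\cdot\tfrac{1}{4\sqrt{2}}((d/m)B_T)^{1/3}T^{2/3}$. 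No synchronization across instances, no product prior, no conditioning argument is needed, and the variation budget is automatically a single $B_T$ because there is only one BGZ instance.

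Your construction instead runs $m$ parallel $K$-armed instances (one per group) and sums their regrets. This also reaches the target bound, but at the cost of the two obstacles you already flag: you must open up the BGZ construction to verify a common block skeleton and common jump size so that the $\max_j$ in the $\ell_\infty$ budget collapses, and you must carry out the product-prior/conditioning argument to decouple the groups despite the shared policy. Both steps are doable, but they are extra work compared to the paper's one-line reduction. The paper's approach is simpler and more self-contained; yours is a natural alternative and would generalize more readily to settings where one wants the groups to be genuinely heterogeneous.
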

We prove the Theorem by modifying the class of instances ${\cal P}$ to suit the setting of drifting combinatorial semi-bandits. The modification follows the style of Kveton et al. \citep{KWAS15}. Let $d, m$ be two integers, where $d$ is divisible by $m$ W.L.O.G.. We define the ground set $E = [d]$. In addition, we define the action set ${\cal E}_t = \{a_1,\ldots,a_{d/m}\} \subset \{0, 1\}^d$, which contains $d/m$ combinatorial arms and does not vary with $t$. Each combinatorial arm $a_i$ belongs to $\{0, 1\}^d$. For each $1\leq i\leq d/m$, we define $a_i(j)=1$ if $(i-1)m+1\leq j\leq i\cdot m$, and $a_i(j)=0$ for other $j$. 

Consider Theorem \ref{thm:bgz_lowerbd} when  $K = d / m \geq 2$, and let $\tilde{\cal P} = \{\tilde{P}^{(\ell)}\}^L_{\ell=1}$ be the class of reward distributions for the regret lower bound. For each $\tilde{P}_\ell = \{\tilde{P}^{(\ell)}_{t, k}\}_{t\in [T], k\in [K]}$ (which is on the $K = d/m$-armed bandit instance),  we construct another reward distribution $P_\ell = \{P^{(\ell)}_{t, j}\}_{t\in [T], j\in [d]} $ that is defined on the combinatorial semi-bandit instance. For each $j\in [d]$, we identify the index $i\in [d/m]$ such that $(i - 1)m + 1 \leq j\leq i\cdot m$, and define $P^{(\ell)}_{t, j}$ to be the same distribution as $\tilde{P}^{(\ell)}_{t, i}$. That is,  $P^{(\ell)}_{t, j}$ is a Bernoulli distribution with mean $\theta_{t}(j) = \tilde{\theta}_{t}(i)$, where $i = \lceil j / m \rceil$. By the second property in Theorem \ref{thm:bgz_lowerbd}, it is straightforward to check that $B_T$ is also a variation budget for $P^{(\ell)}$ for each $\ell$, that is, $$
\sum^{T-1}_{t=1} \max_{j\in [d]}\left\{ \left|\theta^{(\ell)}_{t + 1}(j) - \theta^{(\ell)}_{t}(j)  \right| \right\}\leq B_T.
$$
For each $1\leq i\leq d/m$, the random rewards $W_t((i - 1)m + 1), \ldots, W_t(i \cdot m)$ for the items in combinatorial arm $i$ are identical Bernoulli random variables. That is, they simultaneously realize as all ones or all zeros. 

Finally, to complete the proof, we relate the dynamic regret of any non-anticipatory policy $\pi$ on the drifting combinatorial semi-bandit instance to that of some non-anticipatory policy $\tilde{\pi}$ on the drifting $K$-armed instance. For the combinatorial bandit instance, a non-anticipatory policy $\pi$ is in fact a sequence of mappings $\{\pi_t\}^\infty_{t=1}$, where $\pi_t$ maps the historical information $H_{t-1} = \{X_s, \{ W_s(i) \}_{i\in X_s}\}^{t-1}_{s=1}$ from time $1$ to $t-1$ and a random seed $U$ to the combinatorial arm $X_t$ to pull in time $t$, or more mathematically $\pi_t(H_{t-1}, U) = X_t$. Likewise is true for any non-anticipatory policy $\tilde{\pi}$ for a $K$-armed instance. 

Given a non-anticipatory policy $\pi$ for the combinatorial semi-bandit instance, we construct another non-anticipatory policy $\tilde{\pi}$ for the $K$-armed bandit instance that mimics the behaviour of $\pi$. Suppose that $\pi_t(H, U) = X_j$ for a realization of the history $H = \{X_s, \{ W_s(i) \}_{i\in X_s}\}^{t-1}_{s=1}$ and random seed $U$. To construct $\tilde{\pi}$, we map the $H$ to the historical information $\tilde{H}$ for the $K$-armed bandit instance, where $\tilde{H} = \{\tilde{X}_s, \tilde{W}_s\}^{t-1}_{s=1}$ is defined as follows: $\tilde{X}_s = i$ iff $X_s = a_i$, and $\tilde{W}_s = \frac{1}{m}\sum_{i\in [d]} X_s(i) W_s(i)$. It is clear that $\tilde{W}_s\in \{0, 1\}$ for each $s$, by our assumption on the correlations among $\{W_t(i)\}_{i\in [d]}$. Finally, we define $\tilde{\pi}_t(\tilde{H}, U) = i$ if and only if $\pi_t(H, U) = a_i$. It is evident from our construction that $\pi_t$ is well-defined, in the sense that it maps to a unique arm for every possible realization of $\tilde{H}, U$. Importantly, for any $1\leq \ell\leq L$, we know that 
\begin{align*}
\text{Expected reward of $\pi$ under $P^{(\ell)}$} &= m \times \text{Expected reward of $\tilde{\pi}$ under $\tilde{P}^{(\ell)}$}\nonumber, \\  
\text{Optimal expected reward under $P^{(\ell)}$} &= m \times \text{Optimal expected reward under $\tilde{P}^{(\ell)},$}\nonumber
\end{align*}
or more mathematically we have $\sum^T_{t=1} \max_{a_i\in {\cal E}_t} \sum_{j : a_i(j) = 1} \theta^{(\ell)}_{t}(j) = m \times \sum^T_{t=1} \max_{k\in [K]} \tilde{\theta}^{(\ell)}_{t}(k).$
Consequently, by the third property of Theorem \ref{thm:bgz_lowerbd}, we know that for any non-anticipatory policy $\pi$, there is an index $\ell$ such that the dynamic regret of $\pi$ under $P^{(\ell)}$ is at least $m\times (\frac{1}{4\sqrt{2}}(\frac{d}{m}B_T)^{1/3} T^{2/3})$, which proves the theorem. 



\section{Proof of Theorem \ref{theorem:semi_deviation}}\label{sec:theorem:semi_deviation}
Define
$$\bar{\theta}_{t, i} = \frac{\sum^{t-1}_{s = 1\vee (t-w)} \theta_s(i) \cdot \mathbf{1}[X_s(i) = 1]}{\max\{N_{t-1}(i), 1\}}. $$
First, we claim that, with probability at least $1 - \delta$, for all  $i\in [d], t\in T$ it holds that
\begin{equation}\label{eq:d_arm_stoc}
\left| \bar{\theta}_{t, i} - \hat{\theta}_{t, i}\right| \leq 2R \sqrt{\frac{\log(2 d T / \delta)}{\max\{N_{t-1}(i),1\}}} \leq  4R \sqrt{\frac{\log(2 d T / \delta)}{N_{t-1}(i) + 1}}. 
\end{equation}
The Claim is proved by applying the following inequality for each item $i\in [d]$. Let $\Upsilon_1, \ldots, \Upsilon_T$ be i.i.d $R$-sub-Gaussian random variables with mean zero. For any $\delta\in (0, 1)$, we have
\begin{equation}\label{eq:interval_Chernoff}
\Pr\left( \left|\frac{1}{t - q + 1}\sum^t_{s = q} \Upsilon_s\right| \leq 2R \sqrt{\frac{\log(2 d T / \delta)}{t - q + 1}}\text{ for all $1\leq q\leq t\leq T$} \right)\geq 1 - \frac{\delta}{d},
\end{equation}
by Corollary 1.7 of Rigollet and H\"{u}tter \citep{RH18} and a union bound over all $(q, t)$ with $1\leq q\leq t\leq T$ (We can alternatively use Lemma 6 in Abbasi-Yadkori et al. \citep{AYPS11} for a slightly worse bound, but holds for more general $\eta_t$ ).

Next, observe that for each $i, t$, for certain we have 
\begin{align}
 \left|\bar{\theta}_{t, i} - \theta_{t, i} \right|  & \leq  \frac{1}{\max\{N_{t-1}(i), 1\}}\sum^{t-1}_{s = 1\vee (t-w)}  \mathbf{1}[X_s(i) = 1] \cdot \left| \theta_s(i) - \theta_t(i)  \right| \nonumber\\
& \leq   \frac{1}{\max\{N_{t-1}(i), 1\}}\sum^{t-1}_{s = 1\vee (t-w)}\mathbf{1}[X_s(i) = 1] \cdot \left(\sum^{t-1}_{q = s} \left| \theta_q(i) - \theta_{q+1}(i)  \right|\right) \nonumber\\
& \leq  \sum^{t-1}_{s = 1\vee (t-w)} \left| \theta_s(i) - \theta_{s+1}(i)  \right| \leq  \sum^{t-1}_{s = 1\vee (t-w)} \left\| \theta_s - \theta_{s+1}\right\|_\infty . \label{eq:d_arm_adv}
\end{align}

\section{Proof of Theorem \ref{theorem:semi_sw_main}}\label{sec:theorem:semi_sw_main}
Recall our notations on $N_{t-1}(i)$ and $\hat{\theta}_{t, i}$ (Note that $\mathbf{1}[X_s(i) = 1] = X_s(i)$):
\begin{align}
N_{t-1}(i) &= \sum^{t-1}_{s = 1\vee (t-w)} \mathbf{1}[X_s(i) = 1] \nonumber,\\
\hat{\theta}_{t, i} &= \frac{\sum^{t-1}_{s = 1\vee (t-w)} W_s(i) \cdot \mathbf{1}[X_s(i) = 1]}{\max\{N_{t-1}(i), 1\}}. 
\end{align}
First, we claim that, with probability at least $1-\delta$, it holds that
$$
\left|\hat{\theta}_{t, i} - \theta_{t, i}\right| \leq 4R \sqrt{\frac{\log(2 d T / \delta)}{N_{t-1}(i) + 1}} + \sum^{t-1}_{s = 1\vee (t-w)} \left\| \theta_s - \theta_{s+1}\right\|_\infty.
$$
Consequently, the following UCB holds for each $t$ with probability at least $1-\delta$:
\begin{align}
\theta^\top_t X_t &\leq \max_{x\in {\cal E}_t} \left\{ \theta^\top_t x \right\} \nonumber\\
& \leq \max_{x\in {\cal E}_t} \left\{ \sum_{i\in E} \left[\hat{\theta}_{t, i} + 4R \sqrt{\frac{\log(2 d T / \delta)}{N_{t-1}(i) + 1}} +\sum^{t-1}_{s = 1\vee (t-w)} \left\| \theta_s - \theta_{s+1}\right\|_\infty \right] x(i) \right\} \nonumber\\
& = \sum_{i\in E}\left[\hat{\theta}_{t, i} + 4R \sqrt{\frac{\log(2 d T / \delta)}{N_{t-1}(i) + 1 }} +\sum^{t-1}_{s = 1\vee (t-w)} \left\| \theta_s - \theta_{s+1}\right\|_\infty \right] X_t(i) .\label{eq:d_arm_tgt}
\end{align}
By summing (\ref{eq:d_arm_tgt}) across $t$, we can bound the dynamic regret with probability at least $1-\delta$ as
\begin{align}
&\R_T(\swofu\text{ for combinatorial semi-bandits})\nonumber\\
\leq & \underbrace{\sum^T_{t=1} \sum_{i\in E} 4R \sqrt{\frac{\log(2 d T / \delta)}{ N_{t-1}(i) + 1 }} \cdot  \mathbf{1}[X_t(i) = 1] }_{(\dagger_{\text{SCB}})}+ \underbrace{m\sum^T_{t=1}\sum^{t-1}_{s = 1\vee (t-w)} \left\| \theta_s - \theta_{s+1}\right\|_\infty}_{(\ddagger_{\text{SCB}})}.
\end{align}
To complete the proof on the regret bound, we bound each $(\dagger_{\text{SCB}},\ddagger_{\text{SCB}})$ from above. 

\textbf{Analysing $(\dagger_{\text{SCB}})$.} Let's first define the notation $\bar{N}_{i, t} = \sum^{t-1}_{s =1 + \lfloor t / w \rfloor \cdot w} \mathbf{1}[X_s(i) = 1]$. We can understand $\bar{N}_{i, t}$ as follows, similarly to the derivation in the proof of Lemma \ref{lemma:sw1}. On one hand, the parameter $N_{i, t}$ counts the occurrences of $X_s(i) = 1$ in the $w$ previous rounds (or $t-1$ previous rounds if $t\leq w$). On the other hand, for the parameter  $\bar{N}_{i, t}$, we first divide the horizon into consecutive blocks of $w$ rounds (with the last block having $T - \lfloor T / w\rfloor \cdot w$ rounds). Then, for a round $t$, we look at the block that $t$ belongs to, and the parameter $\bar{N}_{i, t}$ counts the occurrences of $X_s(i) = 1$ for $s < t$ in that block. Certainly, we have $\bar{N}_{i, t} \leq N_{i, t}$.

We next use $\bar{N}_{i, t}$ to proceed with the bound:
\begin{align}
\sum^T_{t=1} \sum_{i\in E} \sqrt{\frac{\mathbf{1}[X_t(i) = 1]}{N_{i, t} + 1}} & \leq \sum^T_{t=1} \sum_{i\in E} \sqrt{\frac{\mathbf{1}[X_t(i) = 1]}{\bar{N}_{i, t} + 1 }} \nonumber\\
& = \sum^{\lceil T / w \rceil}_{j = 1}\sum_{i\in E} \sum^{j\cdot w \wedge T}_{t=(j-1)w + 1}  \sqrt{\frac{\mathbf{1}[X_t(i) = 1]}{\bar{N}_{i, t} + 1}} \nonumber\\
& \leq \sum^{\lceil T / w \rceil}_{j = 1}\sum_{i\in E} \sum^{j\cdot w \wedge T}_{t=(j-1)w + 1}  \sqrt{\frac{\mathbf{1}[X_t(i) = 1]}{\max\{\bar{N}_{i, t}, 1\}}} \nonumber\\
& \leq \sum^{\lceil T / w \rceil}_{j = 1}\sum_{i\in E} \left\{1 + 2\sqrt{\bar{N}_{i, j\cdot w \wedge T}}\right\}\label{eq:com_d-arm_step_1}\\ 
&\leq \sum^{\lceil T / w \rceil}_{j = 1} \left\{ d + 2\sqrt{d m w} \right\} \label{eq:com_d-arm_step_2}\\
&\leq \sum^{\lceil T / w \rceil}_{j = 1}  3\sqrt{dmw} \leq \frac{6\sqrt{dm}T}{\sqrt{w}}. \label{eq:com_d-arm_step_3}
\end{align}
Step (\ref{eq:com_d-arm_step_1}) is by the observation that, when we enumerate the non-zero summands $\sqrt{\frac{\mathbf{1}[X_t(i) = 1]}{\max\{\bar{N}_{i, t}, 1\}}}$ from $t=(i-1)w + 1 $ to $t = i\cdot w \wedge T$, the enumerated terms are $1/\sqrt{1}, 1/\sqrt{1}, 1  /\sqrt{2}, 1/\sqrt{3}, \ldots, 1 / \sqrt{ \max\{ \bar{N}_{i, j\cdot w \wedge T} ,1 \}}$. The sum of these terms is upper bounded as $1 + 2\sqrt{\bar{N}_{i, j\cdot w \wedge T}}$. Step (\ref{eq:com_d-arm_step_2}) is by the following calculation:
$$
\sum_{i\in E} \sqrt{\bar{N}_{i, j\cdot w \wedge T}} \leq \sqrt{d\cdot \sum_{i\in E}\bar{N}_{i, j\cdot w \wedge T}} = \sqrt{d\cdot \sum_{i\in E} \sum^{j\cdot w \wedge T}_{t=(j-1)w + 1} \mathbf{1}[X_t(i) = 1] }\leq \sqrt{d m w}.
$$
Finally, step (\ref{eq:com_d-arm_step_3}) is by the Theorem's assumption that $(d/m)\leq w\leq T$.

\textbf{Analysing $(\ddagger_{\text{SCB}})$.} We note that 
\begin{align}
m\sum^T_{t=1}\sum^{t-1}_{s = 1\vee (t-w)} \left\| \theta_s - \theta_{s+1}\right\|_\infty=m\sum_{s=1}^{T-1}\sum_{t=s+1}^{T\wedge (s+w)}\left\| \theta_s - \theta_{s+1}\right\|_\infty\leq mwB_T.
\end{align}
\section{Proof of Theorem \ref{theorem:semi_bob}}\label{sec:theorem:semi_bob}
Similar to the proof of Proposition \ref{prop:bob}, the dynamic regret of the \bob~can be decomposed as the regret of the \swofu~with the optimally tuned window size $w_i=w^{\dag}~(\geq d/m)$ for each block $i$ plus the loss due to learning the value $w^{\dag}$ with the EXP3 algorithm, \ie,
\begin{align}
\label{eq:semi_decompose}
\nonumber\Ex\left[\text{Regret}_T(\bob)\right]
\nonumber=&\Ex\left[\sum_{t=1}^T\langle x_t^*,\theta_t\rangle-\sum_{i=1}^{\lceil T/H\rceil}\sum_{t=(i-1)H+1}^{i\cdot H\wedge T}\left\langle X_t^{w^{\dag}},\theta_t\right\rangle\right]\\
&+\Ex\left[\sum_{i=1}^{\lceil T/H\rceil}\sum_{t=(i-1)H+1}^{i\cdot H\wedge T}\left\langle X_t^{w^{\dag}},\theta_t\right\rangle\right.\left.-\sum_{i=1}^{\lceil T/H\rceil}\sum_{t=(i-1)H+1}^{i\cdot H\wedge T}\left\langle X_t^{w_i},\theta_t\right\rangle\right].
\end{align}
Here, eq. (\ref{eq:semi_decompose}) holds as the \bob~restarts the \swofu~in each block, and for a round $t$ in block $i,$ $X_t^{w}$ refers to the action selected in round $t$ by the \swofu~with window size $w\wedge (t-(i-1)H-1)$ initiated at the beginning of block $i.$ 

By Theorem \ref{theorem:semi_sw_main}, the first expectation in eq. (\ref{eq:semi_decompose}) can be upper bounded as
\begin{align}
\label{eq:semi_decompose1}
\nonumber\Ex\left[\sum_{t=1}^T\langle x_t^*,\theta_t\rangle-\sum_{i=1}^{\lceil T/H\rceil}\sum_{t=(i-1)H+1}^{i\cdot H\wedge T}\left\langle X_t^{w^{\dag}},\theta_t\right\rangle\right]
\nonumber=&\Ex\left[\sum_{i=1}^{\lceil T/H\rceil}\sum_{t=(i-1)H+1}^{i\cdot H\wedge T}\left\langle x^*_t-X_t^{w^{\dag}},\theta_t\right\rangle\right]\\
\nonumber=&\sum_{i=1}^{\lceil T/H\rceil}\widetilde{O}\left(w^{\dag}mB_T(i)+\frac{\sqrt{dm}H}{\sqrt{w^{\dag}}}\right)\\
=&\widetilde{O}\left(w^{\dag}B_T+\frac{\sqrt{dm}T}{\sqrt{w^{\dag}}}\right),
\end{align}
where $$B_{T}(i)=\sum_{t=(i-1)H+1}^{(i\cdot H\wedge t)-1}\|\theta_{t}-\theta_{t+1}\|_{\infty}$$ is the total variation in block $i.$ 

We then turn to the second expectation in eq. (\ref{eq:semi_decompose}). We can easily see that the number of rounds for the EXP3 algorithm is $\lceil T/H\rceil$ and the number of possible values of $w_i$'s is $|J|.$ If the maximum absolute sum of reward of any block does not exceed $Q,$ the authors of \citep{ABFS02} gives the following regret bound.
\begin{align}
\nonumber&\Ex\left[\sum_{i=1}^{\lceil T/H\rceil}\sum_{t=(i-1)H+1}^{i\cdot H\wedge T}\left\langle X_t^{w^{\dag}},\theta_t\right\rangle-\sum_{i=1}^{\lceil T/H\rceil}\sum_{t=(i-1)H+1}^{i\cdot H\wedge T}\left\langle X_t^{w_i},\theta_t\right\rangle\middle|\forall i\in[\lceil T/H\rceil]\sum_{t=(i-1) H+1}^{i\cdot H\wedge T}Y_t\leq Q/2\right]\\
\label{eq:semi_decompose2}=&\widetilde{O}\left(Q\sqrt{\frac{|J|T}{H}}\right).
\end{align} 
Note that the regret of our problem is at most $T,$ eq. (\ref{eq:semi_decompose2}) can be further upper bounded as
\begin{align}
\label{eq:semi_decompose3}
\nonumber&\Ex\left[\sum_{i=1}^{\lceil T/H\rceil}\sum_{t=(i-1)H+1}^{i\cdot H\wedge T}\left\langle X_t^{w^{\dag}},\theta_t\right\rangle-\sum_{i=1}^{\lceil T/H\rceil}\sum_{t=(i-1)H+1}^{i\cdot H\wedge T}\left\langle X_t^{w_i},\theta_t\right\rangle\right]\\
\nonumber\leq&\widetilde{O}\left(Q\sqrt{\frac{|J|T}{H}}\right)\times\Pr\left(\forall i\in[\lceil T/H\rceil]\sum_{t=(i-1) H+1}^{i\cdot H\wedge T}Y_t\leq Q/2\right)\\
\nonumber&+\Ex\left[\sum_{i=1}^{\lceil T/H\rceil}\sum_{t=(i-1)H+1}^{i\cdot H\wedge T}\left\langle X_t^{w^{\dag}},\theta_t\right\rangle-\sum_{i=1}^{\lceil T/H\rceil}\sum_{t=(i-1)H+1}^{i\cdot H\wedge T}\left\langle X_t\left(w_i\right),\theta_t\right\rangle\middle| \exists i\in[\lceil T/H\rceil]\sum_{t=(i-1) H+1}^{i\cdot H\wedge T}Y_t\geq Q/2\right]\\
\nonumber&\quad\times\Pr\left(\exists i\in[\lceil T/H\rceil]\sum_{t=(i-1) H+1}^{i\cdot H\wedge T}Y_t\geq Q/2\right)\\
\nonumber\leq&\widetilde{O}\left(m\sqrt{H|J|T}\right)+T\cdot\frac{2}{T}\\
=&\widetilde{O}\left(m\sqrt{H|J|T}\right).
\end{align} 
Combining eq. (\ref{eq:semi_decompose}), (\ref{eq:semi_decompose1}), and (\ref{eq:semi_decompose3}), we have for any $w^{\dag}\in J$ and $w^{\dag}\geq d/m,$
\begin{align*}
\Ex\left[\text{Regret}_T(\bob)\right]=\widetilde{O}\left(w^{\dag}mB_T(i)+\frac{\sqrt{dm}H}{\sqrt{w^{\dag}}}+m\sqrt{H|J|T}\right)=\widetilde{O}\left(w^{\dag}mB_T+\frac{\sqrt{dm}T}{\sqrt{w^{\dag}}}+d^{\frac{1}{4}}m^{\frac{3}{4}}T^{\frac{3}{4}}\right).
\end{align*}
where we have plugged in the choices of $H$ and $J$ in eq. (\ref{eq:semi_bob_parameters}).
Therefore, we have that when $B_T\geq d^{-1/4}m^{1/4}T^{1/4},$ the \bob~is able to converge to the optimal window size \ie, $w^{\dag}=w^*~(\leq H),$ and the dynamic regret of the \bob~is upper bounded as
\begin{align}
	\R_T(\bob)=&\widetilde{O}\left(d^{\frac{1}{3}}m^{\frac{2}{3}}B_T^{\frac{1}{3}}T^{\frac{2}{3}}+d^{\frac{1}{4}}m^{\frac{3}{4}}T^{\frac{3}{4}}\right)=\widetilde{O}\left(d^{\frac{1}{3}}m^{\frac{2}{3}}B_T^{\frac{1}{3}}T^{\frac{2}{3}}\right);
\end{align}
while if $B_T< d^{-1/4}m^{1/4}T^{1/4},$ the \bob~converges to the window size $w^{\dag}=H,$ and the dynamic regret is 
\begin{align}
	\R_T(\bob)=&\widetilde{O}\left(d^{\frac{1}{2}}m^{\frac{1}{2}}B_TT^{\frac{1}{2}}+d^{\frac{1}{2}}T^{\frac{3}{4}}\right)=\widetilde{O}\left(d^{\frac{1}{4}}m^{\frac{3}{4}}T^{\frac{3}{4}}\right).
\end{align}
Combining the above two cases, we conclude the desired dynamic regret bound.
\section{Supplementary Details for Section \ref{sec:numerical}}\label{sec:numerical_details}
When $B_T$ is known , we select $w^\text{opt}$ that minimizes the explicit regret bound in (\ref{eq:explicit_swucb_bd}), resulting in
\begin{equation}\label{eq:w_opt_deri}
w^\text{opt} = \left\lceil \frac{\bar{w}}{B^{2/3}_T} \right\rceil, \text{ where } \bar{w} =  \frac{d^{1/3}T^{2/3}}{2^{1/3}L^{2/3}} \left(R\sqrt{d\ln\left(T+T^2 L^2/\lambda\right)}+\sqrt{\lambda}S\right)^{2/3} \log^{1/3}\left(1 + \frac{T L^2}{d\lambda^2}\right).
\end{equation}	
When $B_T$ is not known, we select $w^\text{obl} = \lceil \bar{w} \rceil $, which is independent of $B_T$.  
\end{APPENDIX}
\end{document}